%% file: main.tex
\documentclass{article}

\usepackage[preprint]{neurips_2026}

\usepackage[utf8]{inputenc}
\usepackage[T1]{fontenc}
\usepackage{hyperref}
\usepackage{url}
\usepackage{booktabs}
\usepackage{amsfonts}
\usepackage{amsmath}
\usepackage{amssymb}
\usepackage{nicefrac}
\usepackage{microtype}
\usepackage{xcolor}
\usepackage{tcolorbox}
\tcbuselibrary{breakable}
\usepackage{graphicx}
\usepackage{multirow}
\usepackage{subcaption}
\usepackage{algorithm}
\usepackage{algpseudocode}
\usepackage{amsthm}
\usepackage{enumitem}

\newtheorem{theorem}{Theorem}
\newtheorem{definition}{Definition}
\newtheorem{proposition}{Proposition}
\newtheorem{corollary}{Corollary}

\newtheorem{remark}{Remark}

\DeclareMathOperator*{\E}{\mathbb{E}}

\title{TEMPO: Temporal Enforcement via Mode-Separated Policy Optimization for Trustworthy LLM Backtesting}

\author{%
  Zeyu Zhang\thanks{Corresponding author} \\
  Department of Statistic and Data Science\\
  Northwestern University\\
  \texttt{zeyuzhang2028@u.northwestern.edu} \\
  \And
  Bradly C. Stadie \\
  Department of Statistic and Data Science \\
  Northwestern University \\
  \texttt{bstadie@northwestern.edu} \\
}

\begin{document}

\maketitle

\begin{abstract}
  \input{sections/Abstract}
\end{abstract}

\input{sections/Introduction}
\input{sections/RelatedWork}
\input{sections/Methodology}
\input{sections/ExperimentSetup}
\input{sections/ResultsAndAnalysis}
\input{sections/Conclusion}

\newpage

\bibliographystyle{plainnat}
\bibliography{references}

\newpage
\appendix

\input{sections/Appendix}


\end{document}

%% file: sections/Abstract.tex
Backtesting large language models on historical events requires reasoning exclusively from information available before a specified cutoff date.
Yet models routinely leak post-cutoff knowledge from pre-training into their reasoning, inflating apparent accuracy and undermining evaluation validity.
Prompt-based constraints fail when suppressed content is causally related to the prediction, and knowledge unlearning cannot address this problem because temporal compliance is instance-specific: the same fact may be legitimate evidence for one cutoff date and a violation for another.
Rather than erasing knowledge, the model must learn \emph{temporal discipline}---selecting evidence conditioned on each instance's cutoff date.
We propose TEMPO (Temporal Enforcement via Mode-separated Policy Optimization), which trains this discipline via two contributions: (1)~a two-mode reward where a leakage mode drives post-cutoff claims to zero as a hard prerequisite before a performance mode optimizes task performance; and (2)~a GRPO-based training pipeline that enables the model to discover temporally valid reasoning strategies.
We prove that training monotonically decreases leakage, converges to the leak-free optimum, and improves task performance once compliance is achieved.
On three prediction tasks and two models, TEMPO reduces leakage from 2--13\% to 0.6--3.7\% across all conditions, with task performance improving 6--13\% where strong pre-cutoff signals exist and maintained where the prediction task is inherently difficult from valid information alone.

%% file: sections/Introduction.tex
\section{Introduction}
\label{sec:intro}

Large language models are increasingly used in prediction tasks across diverse domains, including finance~\citep{wu2023bloomberggpt,fu2025newquant}, law~\citep{shui2023comprehensive,deng2024adapt}, and sports~\citep{mendesneves2024lem}.
Evaluating prediction ability at scale requires \emph{backtesting}: testing on historical events whose outcomes are already known so that accuracy can be measured immediately~\citep{halawi2024forecasting}.
Yet because these events have already occurred, their outcomes may be encoded in the model's pre-training data, allowing the model to recall actual outcomes rather than reason from evidence available at the time---inflating apparent accuracy~\citep{paleka2026pitfalls}.
This \emph{temporal knowledge leakage} differs from traditional data contamination~\citep{carlini2021extracting}: the model need not reproduce verbatim text but exploits general post-cutoff knowledge encoded in parametric memory~\citep{wallat2024temporal,li2025profit}.
Prompt-based constraints and standard post-training fail to suppress it~\citep{gao2025prompts,kocyigit2026impact}.
Backtesting validity therefore depends on strict temporal discipline: the model must reason as an observer at time~$t$, with no access to later information.

Existing mitigations---knowledge unlearning~\citep{blanco2024unlearning,gandikota2024elm}, chronologically consistent pretraining~\citep{he2025chronobert}, temporal abstention~\citep{zhou2026silence}, and inference-time verification~\citep{zhang2026leakscountcountmore}---each address partial aspects but none trains the model to determine which evidence is temporally valid for a given cutoff date.
Temporal compliance is instance-specific: the same fact may be legitimate for one cutoff and a violation for another, so the target set cannot be enumerated.
No existing method trains the model to behaviorally avoid post-cutoff evidence while maintaining prediction quality.

We observe that temporal leakage is not a knowledge-storage problem but a knowledge-use problem: the model does not need to forget post-cutoff facts but must learn \emph{temporal discipline}---conditioning its evidence selection on each instance's cutoff date so that only pre-cutoff information enters the reasoning.
We propose \textbf{TEMPO} (\textbf{T}emporal \textbf{E}nforcement via \textbf{M}ode-separated \textbf{P}olicy \textbf{O}ptimization), which trains temporal discipline into the model via reinforcement learning.
TEMPO introduces a \textit{two-mode reward design}: a group-level mode gate partitions each training step into a \emph{leakage mode} that drives post-cutoff claims to zero as a hard prerequisite, and a \emph{performance mode} that optimizes task performance subject to evidence-coverage constraints once compliance is achieved.
Because any leakage invalidates the evaluation, leakage elimination must be a hard gate rather than a soft penalty (Appendix~\ref{sec:app_design_justification}).
TEMPO fine-tunes via Group Relative Policy Optimization (GRPO;~\citealt{shao2024deepseekmath}) with LoRA adapters, enabling the model to discover temporally valid reasoning strategies, substituting pre-cutoff fundamentals for memorized outcomes.
We prove that training under this reward monotonically decreases leakage, converges linearly to the leak-free optimum under a regularity condition, and improves task performance once compliance is achieved (Theorem~\ref{thm:convergence}).

On three prediction tasks (stock ranking, salary estimation, legal outcome prediction) and two model generations, TEMPO reduces leakage from 2--13\% to 0.6--3.7\% across all conditions.
Once leak-free, the training curriculum naturally encourages the model to explore reasoning paths grounded in temporally valid evidence, improving task performance 6--13\% on tasks where strong pre-cutoff signals exist and maintaining it on tasks that are inherently difficult to predict from pre-cutoff information alone.

%% file: sections/RelatedWork.tex
\section{Related work}
\label{sec:related}

\paragraph{LLM Prediction and Backtesting.}
LLMs increasingly approach human-level prediction accuracy across forecasting~\citep{halawi2024forecasting,schoenegger2025wisdom,karger2025forecastbench}, finance~\citep{wu2023bloomberggpt,li2025profit}, law~\citep{shui2023comprehensive}, and sports~\citep{mendesneves2024lem}.
Backtesting---the dominant evaluation paradigm~\citep{paleka2026pitfalls}---rests on a strict temporal assumption: the model must reason exclusively from pre-cutoff information.
When this assumption is violated, back-tested performance conflates genuine prediction ability with temporal leakage.

\paragraph{Data Contamination and Temporal Leakage.}
Standard contamination mitigations---dynamic benchmarks~\citep{karger2025forecastbench}, deduplication, membership-inference detection~\citep{carlini2021extracting,xu2024benchmarking}---target instance-level overlap.
Temporal leakage is a harder sub-problem: the model encodes \emph{general post-cutoff knowledge} through parametric memory without verbatim reproduction~\citep{wallat2024temporal,zhu2025evolvebench}, so standard defenses and prompt-based constraints do not apply~\citep{gao2025prompts,kocyigit2026impact}.
Knowledge unlearning~\citep{blanco2024unlearning,gandikota2024elm,yin2024temporal,reisizadeh2025blur} cannot enumerate the instance-specific target set and correlated knowledge reconstructs erased content~\citep{wang2025uipe}; chronologically consistent pretraining~\citep{he2025chronobert} and temporal abstention~\citep{zhou2026silence} require $O(T)$ checkpoints or teach refusal rather than prediction; inference-time verification~\citep{zhang2026leakscountcountmore} filters claims post-hoc but leaves the model's behavior unchanged.
TEMPO is the first method that trains per-instance temporal discipline into the model's behavior.

\paragraph{GRPO and RL for Behavioral Compliance.}
GRPO~\citep{shao2024deepseekmath} has become the standard RL algorithm for LLM post-training, validated at scale by DeepSeek-R1~\citep{deepseekr1} and stabilized by DAPO~\citep{yu2025dapo}.
When multiple objectives compete, one reward can dominate; MO-GRPO~\citep{ichihara2025mogrpo}, GDPO~\citep{liu2026gdpo}, and constrained GRPO~\citep{girgis2026cgrpo} mitigate this but all permit soft trade-offs.
RL for behavioral compliance---Safe RLHF~\citep{dai2024saferlhf}, TruthRL~\citep{wei2025truthrl}, CaRR~\citep{zhang2026carr}---learns fixed rules (a claim is always safe or unsafe regardless of context) with soft trade-offs.
Two gaps remain: (1)~no method enforces a \emph{hard constraint} where any violation invalidates the evaluation, and (2)~no method handles \emph{instance-specific rules} where compliance depends on a per-instance variable.
TEMPO fills both gaps with a binary mode gate, and we prove convergence to the leak-free optimum while interleaving instance-specific mode assignments within each batch.

%% file: sections/Methodology.tex
\section{Methodology}
\label{sec:method}

Temporal discipline---conditioning evidence selection on a per-instance cutoff date---is a behavioral constraint that must be learned through training.
This section formalizes the problem (Section~\ref{sec:method_problem}), introduces the two-mode reward that separates leakage elimination from performance optimization (Section~\ref{sec:reward}), and presents the GRPO-based training algorithm with convergence guarantees (Section~\ref{sec:grpo}).
Figure~\ref{fig:pipeline} illustrates the full pipeline.

\begin{figure}[t]
  \centering
  \includegraphics[width=0.95\textwidth]{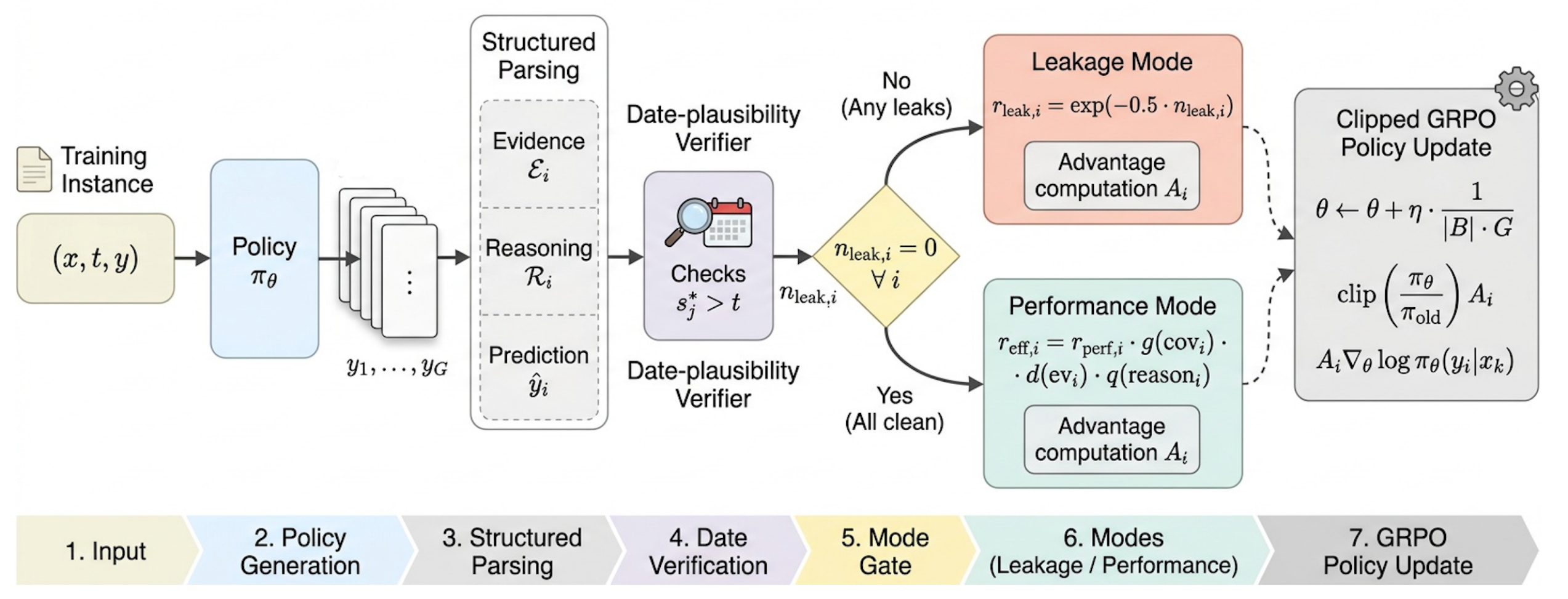}
  \caption{Overview of one TEMPO training step. For each training instance $(x, t, y)$, the policy $\pi_\theta$ generates $G$ completions, each parsed into a structured output with an evidence list, reasoning, and prediction. A date-plausibility verifier counts leaked claims per completion. The core design contribution is the \emph{group-level mode gate}: if any completion in the group contains a leaked claim, the entire group enters \emph{leakage mode} with an exponential penalty reward (Eq.~\ref{eq:rleak}); only when all completions are clean does the group enter \emph{performance mode} to optimize task accuracy (Eq.~\ref{eq:reff}). This hard separation ensures leakage elimination is a prerequisite for performance optimization. The resulting rewards drive a clipped GRPO policy update.}
  \label{fig:pipeline}
\end{figure}

\subsection{Problem formulation}
\label{sec:method_problem}

Let $\pi_\theta$ denote the policy (an LLM parameterized by $\theta$).
A training instance is a tuple $(x, t, y)$: a prompt $x$ containing the task description and case-specific input, a \emph{cutoff date} $t$ specifying the last calendar date whose information may legitimately inform the answer, and a ground-truth label $y$.
We operate in the backtesting regime where $y$ is already resolved, so both the task-performance reward and the temporal-leakage penalty are computable at every training step---a property that makes reinforcement learning directly applicable.

The policy produces a structured completion: an evidence list $\mathcal{E} = \{(c_j, s_j)\}_{j=1}^{N}$ of atomic claims with declared source dates, a reasoning segment $\mathcal{R}$ citing evidence by index, and a prediction $\hat{y}$.
This structure enables per-claim temporal verification: each evidence item carries a date checkable against the cutoff.

\begin{definition}[Temporal leakage]
\label{def:leakage}
Claim $c_j$ \emph{leaks} relative to cutoff $t$ if its effective source date $s_j^{*} > t$, where $s_j^{*}$ is the declared date $s_j$ adjusted by a date-plausibility verifier (Appendix~\ref{sec:app_implementation}).
The \emph{leaked claim count} for completion~$y$ at cutoff~$t$ is
\[
  c(y,t) \;=\; \bigl\lvert\{j : s_j^{*}(y) > t\}\bigr\rvert.
\]
We write $n_{\mathrm{leak},i} = c(y_i, t)$ for the $i$-th completion in a group.
A completion is \emph{clean} if $n_{\mathrm{leak},i} = 0$.
\end{definition}

Following GRPO~\citep{shao2024deepseekmath}, we sample a group of $G$ completions $\{y_1, \ldots, y_G\} \sim \pi_\theta(\cdot \mid x)$ for each training instance.
Let $q(\theta; x, t) = \Pr_{y \sim \pi_\theta}[c(y,t) > 0]$ denote the per-instance leakage probability---the chance that a single sampled completion contains at least one leaked claim.
The probability that \emph{all} $G$ completions are clean is $p_0 = (1{-}q)^G$, and the group leakage indicator is $L = \mathbf{1}[\exists\,i : n_{\mathrm{leak},i} > 0]$.

TEMPO maximizes the mode-gated expected reward:
\begin{equation}
  \label{eq:objective}
  J_{\mathrm{TEMPO}}(\theta)
  \;=\; \E_{(x,t,y)}\!\bigl[
    p_0\;\E[r_{\mathrm{eff}} \mid L{=}0]
    \;+\; (1{-}p_0)\;\E[r_{\mathrm{leak}} \mid L{=}1]
  \bigr],
\end{equation}
where $r_{\mathrm{leak}}$ and $r_{\mathrm{eff}}$ are mode-specific rewards (Section~\ref{sec:reward}).
Since $p_0 = (1{-}q)^G$ decays exponentially in~$G$, the objective implicitly prioritizes leakage elimination: any positive $q$ suppresses performance-mode contribution.

\subsection{Two-mode reward design}
\label{sec:reward}

A naive scalar combination $r_{\mathrm{mix}} = \alpha\, r_{\mathrm{leak}} + (1{-}\alpha)\, r_{\mathrm{perf}}$ fails because performance variance dominates the z-scored advantage and the leakage signal vanishes; moreover, any leakage invalidates the evaluation, so leakage elimination must be a hard prerequisite rather than a soft trade-off (Appendix~\ref{sec:app_design_justification}).
We therefore separate the two objectives via a group-level mode gate.

\textbf{Mode Selection Gate.}
The group enters \emph{performance mode} if and only if every completion is clean:
\begin{equation}
  \label{eq:mode}
  \text{mode} =
  \begin{cases}
    \text{\textsc{performance}} & \text{if } n_{\mathrm{leak},i} = 0\;\forall\, i \in \{1,\ldots,G\}, \\
    \text{\textsc{leakage}}     & \text{otherwise.}
  \end{cases}
\end{equation}
The gate operates at group level because GRPO computes advantages as within-group contrasts: all completions in a group must be evaluated under the same objective for the gradient to be coherent.
Early in training, most groups contain at least one leaked completion and enter leakage mode; as training drives $q$ toward zero, groups progressively transition to performance mode---an emergent curriculum characterized in Section~\ref{sec:grpo}.

\textbf{Leakage Mode.}
When the group enters leakage mode, the reward drives down leaked evidence before competing on task performance.
Each completion receives a reward that is strictly decreasing in the number of leaked claims:
\begin{equation}
  \label{eq:rleak}
  r_{\mathrm{leak},i} = \exp\bigl(-0.5 \cdot n_{\mathrm{leak},i}\bigr).
\end{equation}
The exponential form provides graded signal: completions with fewer leaked claims receive higher reward, and the gap between clean ($r_{\mathrm{leak}} = 1.0$) and single-leak ($r_{\mathrm{leak}} = 0.61$) completions remains large enough for clear gradient signal even when most completions are already clean.

\textbf{Performance Mode.}
Once every completion is clean, the group optimizes task performance.
The effective reward is
\begin{equation}
  \label{eq:reff}
  r_{\mathrm{eff},i} = r_{\mathrm{perf},i} \cdot g(\mathrm{cov}_i) \cdot d(\mathrm{ev}_i) \cdot w(\mathrm{reason}_i),
\end{equation}
where $r_{\mathrm{perf},i} \in [0,1]$ is a task-specific performance score (Section~\ref{sec:eval_protocol}), and $g$, $d$, $w$ are multiplicative quality gates for evidence citation, diversity, and reasoning depth (Appendix~\ref{sec:app_reward_details}).
These gates prevent degenerate compliance strategies such as producing minimal evidence to avoid leakage.

\subsection{TEMPO optimization}
\label{sec:grpo}

We optimize $\pi_\theta$ with GRPO~\citep{shao2024deepseekmath}, which z-scores rewards within each group of $G$ completions to compute advantages without a critic network.
A batch-level baseline calibrates advantages across prompts, and a per-token KL penalty ($\beta_{\mathrm{KL}}$) against the frozen base model prevents excessive drift (masked to content tokens only).
The full training loop appears in Algorithm~\ref{alg:tempo} (Appendix~\ref{sec:app_implementation}).

\textbf{Convergence Guarantees.}
Standard GRPO convergence theory~\citep{shao2024deepseekmath,ghadimi2013stochastic} applies to single-objective optimization.
TEMPO's two-mode design introduces two structural complications that require new analysis:
(1)~the leakage-mode reward $r_{\mathrm{leak}} = \exp(-0.5\,c)$ is a nonlinear transformation of the leaked claim count---does maximizing it actually reduce leakage?
(2)~leakage-mode and performance-mode updates interleave within each batch---can performance-mode updates undo leakage progress?
The following two propositions address these gaps; once both are established, convergence follows from the standard stochastic descent framework.

Let $V_c(\theta) = \E_{(x,t)}\E_{y \sim \pi_\theta}[c(y,t)]$ be the expected leakage cost and $V_r(\theta) = \E_{(x,t)}\E_{y \sim \pi_\theta}[r_{\mathrm{eff}}(y) \mid c(y,t){=}0]$ the expected performance reward conditional on clean outputs.
We assume $V_c$ is $L_c$-smooth, $V_r$ is $L_r$-smooth, and $\E[\|\hat{g}_k\|^2] \leq B_g^2$.

\begin{proposition}[Gradient alignment]
\label{prop:alignment}
Let $V_f(\theta) = \E_{(x,t)}\E_{y \sim \pi_\theta}[\exp(-0.5\,c(y,t))]$ denote the expected exponential reward.
Let $\hat{g}_{\mathrm{leak}}$ be the stochastic GRPO gradient computed from leakage-mode groups, and let $\bar{g}_{\mathrm{leak}}(\theta) = \E[\hat{g}_{\mathrm{leak}} \mid \theta]$ denote its expectation.
Under the compatible approximation condition~\citep{kakade2001natural}, the following holds whenever $\mathrm{Var}_{\pi_\theta}(c) > 0$:
\begin{equation}
  \label{eq:alignment}
  \langle \bar{g}_{\mathrm{leak}},\; \nabla V_c \rangle \;\leq\; -\gamma(\theta)\,\|\nabla V_c\|^2,
\end{equation}
where $\gamma(\theta) > 0$ is a policy-dependent alignment constant.
That is, the GRPO gradient under the exponential leakage reward is a provable descent direction for the expected leakage cost (proof in Appendix~\ref{sec:app_proof_alignment}).
\end{proposition}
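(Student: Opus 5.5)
The plan is to reduce the statement to a comparison between two policy gradients: that of $V_f$ (the objective the leakage-mode GRPO update actually ascends) and that of $V_c$ (the quantity we want to decrease). We then show that these gradients point in opposite directions up to a positive, policy-dependent factor.

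\textbf{Step 1: identify $\bar g_{\mathrm{leak}}$ with $\nabla V_f$.} On leakage-mode groups each completion receives $r_{\mathrm{leak},i}=f(n_{\mathrm{leak},i})$ with $f(c)=e^{-0.5c}$. GRPO subtracts the group mean, which is a baseline and leaves the expected score-function estimator unbiased up to the positive rescaling by $1/\sigma$. So in expectation,
\[
\bar g_{\mathrm{leak}}=\kappa(\theta)\,\E\bigl[\nabla\log\pi_\theta(y)\,(f(c)-\bar f)\bigr],
\]
where $\kappa>0$ absorbs the z-score normalisation and the probability of landing in leakage mode. The conditioning on $L=1$ only reweights instances by a positive factor, so we treat it as such. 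This gives $\bar g_{\mathrm{leak}}=\kappa\nabla V_f$ on the relevant instance distribution.

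\textbf{Step 2: invoke compatible function approximation.} Under the condition of \citet{kakade2001natural}, the policy gradient of any scalar reward $h$ equals $F(\theta)w_h$, where $F$ is the Fisher matrix and $w_h$ is the least-squares coefficient of $h(c)$ regressed on the score features $\phi=\nabla\log\pi_\theta$. This gives
\[
\nabla V_f = F\,w_f, \qquad \nabla V_c = F\,w_c .
\]
Because the advantage depends on $y$ only through the scalar $c(y,t)$, the compatible approximation lets us write the centred rewards as $\phi^\top w$ in a shared direction. Concretely, both $f(c)-\E f$ and $c-\E c$ are functions of the single random variable $c$, and projecting them onto the linear span of $\phi$ gives
\[
w_f=\rho\, w_c, \qquad \rho=\frac{\mathrm{Cov}(f(c),c)}{\mathrm{Var}(c)} .
\]
Here we use that the projection of $g(c)$ equals $\mathrm{Cov}(g(c),c)/\mathrm{Var}(c)$ times the projection of $c$, once $c$ itself is assumed to be representable in the compatible class.

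\textbf{Step 3: sign of $\rho$.} Since $f$ is strictly decreasing, $\mathrm{Cov}(f(c),c)<0$ whenever $\mathrm{Var}(c)>0$. This follows from a Chebyshev/Harris rearrangement inequality: $\E[(f(c)-f(c'))(c-c')]<0$ for i.i.d.\ copies $c,c'$. Combining the pieces,
\[
\bar g_{\mathrm{leak}}=\kappa\rho\,\nabla V_c, \qquad \langle \bar g_{\mathrm{leak}},\nabla V_c\rangle=-\gamma\|\nabla V_c\|^2, \qquad \gamma=\kappa\,|\rho|>0 .
\]
Any residual component orthogonal to $\nabla V_c$ contributes nothing to the inner product, so equality (and hence the stated inequality) holds.

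\textbf{Main obstacle.} Step 2 is the weak point. The claim that both rewards project onto a common direction is only as good as the compatible-approximation assumption. The cleaner route I would try first is to assume explicitly that $c(y,t)-\E c$ lies in the span of the score features (exact compatibility). In that case the regression of any function of $c$ factors through $c$ exactly and $\rho$ is well defined.

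A second issue is that $\kappa$, $\rho$ and the variance are per-instance quantities, and the outer expectation over $(x,t)$ mixes them. Taking $\gamma$ as a single positive multiplier then requires either a uniform lower bound $\min_{(x,t)}\kappa|\rho|$ or restricting to instances with $\mathrm{Var}(c)>0$. I would state this as the definition of $\gamma(\theta)$.
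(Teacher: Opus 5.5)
Your Step~2 is where the argument breaks. The identity $w_f=\rho\,w_c$ is false even under exact compatibility, so the problem is not just that compatibility is a strong assumption.

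Write $f(c)-\bar f=\rho\,(c-\bar c)+r(c)$, where $r$ is the residual of the $L^2(\pi_\theta)$ regression of $f(c)$ on $c$. Because the policy gradient is linear in the reward,
$\nabla V_f=\rho\,\nabla V_c+\mathbb{E}_{\pi_\theta}[r(c)\,s(y)]$.
So $w_f=\rho\,w_c$ holds only if $r(c)$ is orthogonal to every score feature. In general that needs the score span, projected onto functions of $c$, to be the single line through $c-\bar c$. By construction $r$ is orthogonal to $c-\bar c$, but to nothing else.

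Assuming $c-\bar c$ is exactly representable makes things worse, not better. In the overparameterized regime the paper uses (one logit per output, $s(y)=e_y-\pi_\theta$), the span contains every centered function of $c$. Then $\phi^\top w_f=f(c)-\bar f$ itself. Since $e^{-0.5c}$ is strictly convex, $\nabla V_f$ and $\nabla V_c$ are not collinear once $c$ takes three or more values.

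The quantity you need is the Euclidean inner product $\langle Fw_f,Fw_c\rangle=w_f^\top F^2w_c$, not $w_f^\top F w_c$. In the tabular case it equals $\sum_y\pi_\theta(y)^2\bigl(f(y)-\bar f\bigr)\bigl(c(y)-\bar c\bigr)$, which is exactly the expression the paper's proof computes from the score structure. The $\pi_\theta^2$ weighting destroys the $\pi_\theta$-orthogonality of $r$ that your reduction to $\mathrm{Cov}_{\pi_\theta}(f(c),c)$ relies on.

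Switching to the paper's route does not repair this. The $\pi_\theta^2$-weighted sum is centered at $\pi_\theta$-means, not at means under $Q\propto\pi_\theta^2$. So it is not $\mathrm{Cov}_Q$, and the Chebyshev inequality does not sign it. It can in fact be positive. Take one output with $c=1$ and mass $0.1$, $N$ outputs with $c=0$ and mass $0.8/N$ each, and $N$ outputs with $c=10$ and mass $0.1/N$ each. Then $\bar c=1.1$, $\bar f\approx0.861$, and the sum is about $2.5\times10^{-4}-0.174/N>0$ for $N\ge 700$. So the Euclidean inequality needs an extra hypothesis whichever way you argue.

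What your Fisher-factorization idea does prove rigorously is the natural-gradient version, with no collinearity needed. If $c-\bar c=\phi^\top w_c$ exactly, then
$\langle\nabla V_f,F^{+}\nabla V_c\rangle=\nabla V_f^\top w_c=\mathbb{E}_{\pi_\theta}[(f(c)-\bar f)(c-\bar c)]=\mathrm{Cov}_{\pi_\theta}(f(c),c)<0$.
Exact collinearity does hold in the special case where $c$ takes only two values on the support, since then $f(c)$ is affine in $c$.

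Your proposed fix for the outer expectation also fails as stated. With parameters shared across instances, a uniform bound $\min_{(x,t)}\kappa|\rho|$ does not control the cross-terms $\langle\nabla V_c^{(x,t)},\nabla V_c^{(x',t')}\rangle$. Even exact per-instance anti-collinearity with varying multipliers can then give a positive aggregate inner product. This is ruled out only if the per-instance gradients are mutually orthogonal, as with disjoint per-instance logits.
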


\begin{proposition}[Bounded cross-mode contamination]
\label{prop:crossmode}
Let $\alpha_k \in [0,1]$ be the fraction of groups in leakage mode at step~$k$, and let $\hat{g}_{\mathrm{perf}}$ be the stochastic GRPO gradient from performance-mode groups.
The mixed-batch update $\theta_{k+1} = \theta_k + \eta\,(\alpha_k\hat{g}_{\mathrm{leak}} + (1{-}\alpha_k)\hat{g}_{\mathrm{perf}})$ satisfies
\begin{equation}
  \label{eq:crossmode}
  \E[V_c(\theta_{k+1}) \mid \theta_k] \;\leq\; V_c(\theta_k) - \underbrace{\bigl(\alpha_k\gamma(\theta_k) - (1{-}\alpha_k)B_g/\|\nabla V_c(\theta_k)\|\bigr)}_{\text{effective descent rate}}\,\eta\|\nabla V_c(\theta_k)\|^2 + \tfrac{L_c\eta^2}{2}B_g^2.
\end{equation}
The effective descent rate is positive whenever the leakage gradient (weighted by $\alpha_k$) dominates the worst-case cross-mode interference from performance-mode updates (proof in Appendix~\ref{sec:app_proof_crossmode}).
\end{proposition}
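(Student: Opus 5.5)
The plan is the standard descent lemma for the $L_c$-smooth function $V_c$. The only structural input beyond smoothness is Proposition~\ref{prop:alignment} for the leakage-mode part of the update. Write the update direction as $\hat{g}_k = \alpha_k\hat{g}_{\mathrm{leak}} + (1{-}\alpha_k)\hat{g}_{\mathrm{perf}}$. Since the update is $\theta_{k+1} = \theta_k + \eta\hat{g}_k$, $L_c$-smoothness gives
\[
V_c(\theta_{k+1}) \le V_c(\theta_k) + \eta\langle \nabla V_c(\theta_k), \hat{g}_k\rangle + \tfrac{L_c\eta^2}{2}\|\hat{g}_k\|^2 .
\]
Next I take the conditional expectation given $\theta_k$, treating $\alpha_k$ as $\theta_k$-measurable; alternatively, I condition on it as well, since the mode gate is determined by the sampled group and the bound will hold pointwise in $\alpha_k$. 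The quadratic term is handled directly by the second-moment assumption $\E[\|\hat{g}_k\|^2]\le B_g^2$, which yields the $\tfrac{L_c\eta^2}{2}B_g^2$ term.

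The linear term splits by linearity into a leakage piece and a performance piece. For the leakage piece, Proposition~\ref{prop:alignment} gives $\alpha_k\langle \bar{g}_{\mathrm{leak}},\nabla V_c\rangle \le -\alpha_k\gamma(\theta_k)\|\nabla V_c\|^2$. For the performance piece, there is no alignment information, so I use the worst case. Cauchy--Schwarz followed by Jensen gives $\langle \E[\hat{g}_{\mathrm{perf}}],\nabla V_c\rangle \le \|\E\hat{g}_{\mathrm{perf}}\|\,\|\nabla V_c\| \le B_g\|\nabla V_c\|$, where the last step uses $\|\E\hat g\|\le(\E\|\hat g\|^2)^{1/2}\le B_g$ (I assume the same second-moment bound applies to each mode's gradient separately). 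Rewriting $B_g\|\nabla V_c\|$ as $\bigl(B_g/\|\nabla V_c\|\bigr)\|\nabla V_c\|^2$ whenever $\nabla V_c(\theta_k)\neq 0$ and collecting terms gives exactly the stated effective descent rate $\alpha_k\gamma - (1{-}\alpha_k)B_g/\|\nabla V_c\|$. At a stationary point the bound reduces to the trivial one, so I treat that case separately.

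The main obstacle is the validity of the Proposition~\ref{prop:alignment} step in the mixed-batch setting. That proposition is stated for the leakage-mode gradient's expectation, but here the leakage-mode groups are selected by the event $L=1$, which itself depends on $\theta$. I would need to argue that $\bar{g}_{\mathrm{leak}}$ as used in Proposition~\ref{prop:alignment} is already defined as the expectation conditional on leakage mode, so that conditioning on the gate and weighting by $\alpha_k$ is legitimate. If $\alpha_k$ is random, I would replace it by its conditional expectation $1-p_0$ averaged over instances. The hypothesis $\mathrm{Var}_{\pi_\theta}(c)>0$ is needed for $\gamma>0$. When it fails, leakage-mode groups have zero z-scored advantage and contribute nothing, and the inequality still holds with $\gamma=0$. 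The positivity remark at the end of the statement then follows immediately by inspecting the sign of the effective descent rate.
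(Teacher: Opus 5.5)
Your proposal is correct and matches the paper's proof step for step. It uses the $L_c$-smoothness inequality for the update $\theta_k+\eta\hat g_k$, Proposition~\ref{prop:alignment} for the $\alpha_k$-weighted leakage-mode inner product, Cauchy--Schwarz with the second-moment bound for the performance-mode cross term and $B_g^2$ for the quadratic term, and finally rewrites $B_g\|\nabla V_c\|$ as $(B_g/\|\nabla V_c\|)\|\nabla V_c\|^2$. The points you make explicit are the per-mode second-moment bound, the measurability of $\alpha_k$ and the conditioning on the mode gate (which the paper absorbs into the definition of $\bar g_{\mathrm{leak}}$ in Proposition~\ref{prop:alignment}), and the case $\nabla V_c(\theta_k)=0$; the paper relies on all of these only implicitly.
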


With both gaps filled, convergence follows from the standard stochastic optimization framework:

\begin{theorem}[TEMPO convergence]
\label{thm:convergence}
Under Propositions~\ref{prop:alignment}--\ref{prop:crossmode}, let $\gamma_0 = \inf\{\gamma(\theta) : V_c(\theta) \geq \epsilon_0\} > 0$ for a target threshold $\epsilon_0$, and suppose at least a fraction $\alpha_{\min} > 0$ of groups are in leakage mode whenever $V_c(\theta_k) \geq \epsilon_0$.
For step size $\eta \leq \gamma_0/(L_c B_g^2)$:
(i)~leakage decreases at the standard nonconvex rate $\min_{k \leq T}\|\nabla V_c\|^2 = O(1/\sqrt{T})$;
(ii)~performance improves during performance-mode updates by the standard ascent lemma;
(iii)~under the Polyak--\L{}ojasiewicz condition $\|\nabla V_c\|^2 \geq 2\mu\, V_c$, leakage converges linearly:
\begin{equation}
  \label{eq:linear-conv}
  \E[V_c(\theta_k)] \;\leq\; (1 - \alpha_{\min}\eta\gamma_0\mu)^k\, V_c(\theta_0) \;+\; \frac{L_c\eta B_g^2}{2\alpha_{\min}\gamma_0\mu}.
\end{equation}
\end{theorem}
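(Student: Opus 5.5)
The plan is to run everything through the one-step descent inequality of Proposition~\ref{prop:crossmode} and then apply the two standard stochastic-optimization arguments: telescoping for (i) and contraction for (iii). Part (ii) is a separate ascent-lemma argument on $V_r$.

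First I simplify the effective descent rate in the regime $V_c(\theta_k)\ge\epsilon_0$. There $\alpha_k\ge\alpha_{\min}$ and $\gamma(\theta_k)\ge\gamma_0$. The cross-mode interference term $(1-\alpha_k)B_g/\|\nabla V_c\|$ multiplies $\eta\|\nabla V_c\|^2$, so it equals $(1-\alpha_k)B_g\eta\|\nabla V_c\|$. I expect this to be the main obstacle: as written, it is only dominated by the leakage term when the gradient is large.

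My first attempt is to absorb it into the $O(\eta^2)$ noise. Treat the performance-mode contribution as a zero-mean (or $O(\eta)$) perturbation in the direction of $\nabla V_c$, which is what the compatible-approximation assumption behind Proposition~\ref{prop:alignment} suggests. If that fails, I fall back on reading the hypothesis ``at least a fraction $\alpha_{\min}$ in leakage mode'' as guaranteeing that the effective rate is at least $\alpha_{\min}\gamma_0/2$; I would also replace $\alpha_{\min}\gamma_0$ by that quantity in the constants. Either way, the working inequality becomes
\[
\E[V_c(\theta_{k+1})\mid\theta_k]\le V_c(\theta_k)-\alpha_{\min}\gamma_0\eta\|\nabla V_c(\theta_k)\|^2+\tfrac{L_c\eta^2}{2}B_g^2 .
\]
The step-size condition $\eta\le\gamma_0/(L_cB_g^2)$ keeps the quadratic term no larger than the per-step descent scale. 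In the regime $V_c<\epsilon_0$ the target is already met, so I stop the process or treat that case trivially.

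For (i), I take total expectations of the working inequality, sum over $k=0,\dots,T-1$, telescope, and use $V_c\ge0$. This gives
\[
\min_{k\le T}\E\|\nabla V_c\|^2\le\frac{V_c(\theta_0)}{\alpha_{\min}\gamma_0\eta T}+\frac{L_c\eta B_g^2}{2\alpha_{\min}\gamma_0}.
\]
Choosing $\eta\propto1/\sqrt T$, capped by the step-size condition, balances the two terms and yields $O(1/\sqrt T)$.

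For (ii), on a performance-mode group I apply the $L_r$-smooth ascent lemma:
\[
V_r(\theta_{k+1})\ge V_r(\theta_k)+\eta\langle\nabla V_r,\hat g_{\mathrm{perf}}\rangle-\tfrac{L_r\eta^2}{2}\|\hat g_{\mathrm{perf}}\|^2 .
\]
I then take conditional expectation and use that $\hat g_{\mathrm{perf}}$ is the (policy-gradient) estimator of $\nabla V_r$ on clean outputs, so the inner product is $\|\nabla V_r\|^2$. The result is improvement by at least $\eta\|\nabla V_r\|^2-L_r\eta^2B_g^2/2$, which is positive for small $\eta$.

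For (iii), I insert the PL bound $\|\nabla V_c\|^2\ge2\mu V_c$ into the working inequality and take total expectation. This gives the recursion $e_{k+1}\le(1-2\alpha_{\min}\gamma_0\eta\mu)e_k+L_c\eta^2B_g^2/2$. The stated rate $(1-\alpha_{\min}\eta\gamma_0\mu)$ is weaker than this, so I can drop the factor $2$ and keep the inequality valid. The step-size condition keeps the contraction factor in $[0,1)$. Unrolling and summing the geometric series $\sum_j\rho^j\le1/(1-\rho)$ with $1-\rho=\alpha_{\min}\eta\gamma_0\mu$ bounds the residual by $L_c\eta B_g^2/(2\alpha_{\min}\gamma_0\mu)$, which is exactly \eqref{eq:linear-conv}.
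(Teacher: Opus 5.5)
Your skeleton matches the paper's: the one-step bound of Proposition~\ref{prop:crossmode}, telescoping for (i), the $L_r$-smooth ascent lemma for (ii), and PL plus unrolling for (iii). The gap is at the obstacle you flagged, and neither of your ways around it is justified.

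\textbf{Your first attempt assumes something the hypotheses do not give.} Proposition~\ref{prop:crossmode} controls $\langle \nabla V_c, \hat g_{\mathrm{perf}}\rangle$ only through the worst-case Cauchy--Schwarz bound $B_g\|\nabla V_c\|$. Nothing makes the performance-mode gradient mean-zero along $\nabla V_c$. In the paper's own softmax picture it generally is not: reweighting probability among clean completions also changes the total clean mass, and hence $V_c$, at first order in $\eta$.

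\textbf{Your fallback is circular.} The hypothesis $\alpha_k\ge\alpha_{\min}$ lower-bounds only the leakage fraction. The interference ratio $(1-\alpha_k)B_g/\|\nabla V_c\|$ is governed by $\|\nabla V_c\|$, and that hypothesis says nothing about it. Without a lower bound on the gradient norm, the term $(1-\alpha_{\min})B_g\eta\|\nabla V_c\|$ is first order in $\eta$. Let $u^2=\frac1T\sum_k\E\|\nabla V_c(\theta_k)\|^2$. Telescoping then yields only $\alpha_{\min}\gamma_0 u^2-(1-\alpha_{\min})B_g u\le V_c(\theta_0)/(\eta T)+L_c\eta B_g^2/2$. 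This leaves a non-vanishing floor of order $\bigl((1-\alpha_{\min})B_g/(\alpha_{\min}\gamma_0)\bigr)^2$ on $\min_k\|\nabla V_c\|^2$, not $O(1/\sqrt T)$. So, as planned, even part (i) does not go through.

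\textbf{The missing step is to invoke PL already here, not only in (iii).} On $\{V_c\ge\epsilon_0\}$, PL gives $\|\nabla V_c\|\ge\sqrt{2\mu\epsilon_0}$, so the interference ratio is at most $(1-\alpha_{\min})B_g/\sqrt{2\mu\epsilon_0}$. The paper then places the target threshold above the interference noise floor:
\[
\epsilon_0\ge 2(1-\alpha_{\min})^2B_g^2/(\mu\alpha_{\min}^2\gamma_0^2).
\]
This makes the ratio at most $\alpha_{\min}\gamma_0/2$, so the effective rate is at least $\alpha_{\min}\gamma_0/2$. That, not $\alpha_{\min}\gamma_0$, is the coefficient your working inequality should carry.
\begin{itemize}
\item For (i), this only costs a factor of $2$ in the constants.
\item For (iii), it gives the contraction $1-2\mu\cdot\tfrac{\alpha_{\min}\gamma_0}{2}\eta=1-\alpha_{\min}\eta\gamma_0\mu$ exactly. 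There is no spare factor of $2$ to drop, which is a hint that the theorem's constants are calibrated to this argument. Your geometric-series step then produces the stated floor $L_c\eta B_g^2/(2\alpha_{\min}\gamma_0\mu)$.
\end{itemize}

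In short, $\epsilon_0$ cannot be an arbitrary target. It must sit above this threshold, and that choice is what your proposal lacks.
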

\noindent Complete derivations appear in Appendix~\ref{sec:app_convergence_proofs}.
A practical consequence of Theorem~\ref{thm:convergence} is that training does not require manual scheduling of when to switch from leakage elimination to performance optimization---the mode gate handles this automatically:

\begin{corollary}[Two-phase curriculum]
\label{cor:curriculum}
Recall that the group-clean probability is $\bar{p}_0(\theta) = \E_{(x,t)}[(1{-}q)^G]$.
Since $\bar{p}_0(\theta) \geq (1 - V_c(\theta))^G$ by Markov's inequality, the linear convergence of $V_c$ in Theorem~\ref{thm:convergence}(iii) implies $\bar{p}_0 \to 1$: as leakage decreases, progressively more groups become fully clean and enter performance mode, so training automatically transitions from a leakage-dominated phase to a performance-optimization phase.
\end{corollary}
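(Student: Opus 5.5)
The plan is to prove the corollary in two steps. First, a deterministic inequality links the group-clean probability $\bar{p}_0(\theta)$ to the expected leakage cost $V_c(\theta)$. Second, we take expectations over the iterates $\theta_k$ and plug in the linear-convergence bound~\eqref{eq:linear-conv} of Theorem~\ref{thm:convergence}(iii).

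\emph{Step 1 (per-instance Markov bound).} Fix an instance $(x,t)$ and write $v(\theta;x,t) = \E_{y\sim\pi_\theta}[c(y,t)]$. Since $c(y,t)$ takes values in $\{0,1,2,\dots\}$, we have $\mathbf{1}[c>0] \le c$ pointwise. Markov's inequality then gives $q(\theta;x,t) \le v(\theta;x,t)$. Because $q \le 1$, this sharpens to $q \le \min\{v,1\} =: \tilde v$, and hence $(1-q)^G \ge (1-\tilde v)^G$.

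\emph{Step 2 (averaging over instances).} The map $u \mapsto (1-u)^G$ is convex on $[0,1]$. Jensen's inequality over $(x,t)$ therefore gives
\[
\bar{p}_0(\theta) \;\ge\; \E_{(x,t)}\bigl[(1-\tilde v)^G\bigr] \;\ge\; \bigl(1-\E_{(x,t)}[\tilde v]\bigr)^G \;\ge\; \bigl(1-V_c(\theta)\bigr)^G ,
\]
where the last step uses $\E[\tilde v] \le V_c(\theta)$ and assumes $V_c(\theta) \le 1$. If $V_c(\theta) > 1$ the claimed bound is vacuous or trivial. For the limit argument I would also record the linear version from Bernoulli's inequality, $\bar{p}_0(\theta) \ge 1 - G\,V_c(\theta)$. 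This version is affine in $V_c$, so taking expectations over the random iterate $\theta_k$ is immediate:
\[
\E[\bar{p}_0(\theta_k)] \;\ge\; 1 - G\,\E[V_c(\theta_k)].
\]

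\emph{Step 3 (limit).} Substituting~\eqref{eq:linear-conv} yields
\[
\E[\bar p_0(\theta_k)] \;\ge\; 1 - G(1-\alpha_{\min}\eta\gamma_0\mu)^k V_c(\theta_0) \;-\; \frac{G L_c\eta B_g^2}{2\alpha_{\min}\gamma_0\mu}.
\]
The transient term vanishes geometrically, so $\liminf_k \E[\bar p_0(\theta_k)] \ge 1 - O(G\eta)$. The interpretive claim then follows directly: leakage-mode groups occur with probability $1-\bar p_0$, and this probability shrinks toward the noise floor, so the batch composition shifts automatically toward performance mode.

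The main obstacle is the stationary term $L_c\eta B_g^2/(2\alpha_{\min}\gamma_0\mu)$. With a constant step size, $\bar p_0 \to 1$ holds only up to an $O(G\eta)$ gap. I would handle this by stating the result as ``$\bar p_0 \to 1$ as $\eta \to 0$,'' or by running the same recursion with a diminishing schedule $\eta_k \to 0$, $\sum_k \eta_k = \infty$. In that case the standard argument for PL-type recursions drives $\E[V_c(\theta_k)] \to 0$ and hence $\E[\bar p_0(\theta_k)] \to 1$.

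A second subtlety is that Theorem~\ref{thm:convergence} only guarantees $\alpha_k \ge \alpha_{\min}$ while $V_c \ge \epsilon_0$. So the argument really controls $V_c$ down to the level $\max\{\epsilon_0, \text{noise floor}\}$, and the conclusion should be read as $\bar p_0 \ge 1 - G\max\{\epsilon_0, O(\eta)\}$ eventually. I would make this explicit rather than hide it.
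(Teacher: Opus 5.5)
This is the paper's argument (Markov's inequality per instance, Jensen via convexity of $u\mapsto(1-u)^G$, then Theorem~\ref{thm:convergence}(iii)), and your refinements are correct: the Bernoulli bound $\bar p_0\ge 1-GV_c$ passes expectations through the random iterate, and the explicit $O(G\eta)$ floor and $\epsilon_0$ caveat make precise what the paper's ``$V_c(\theta_k)\to 0$'' glosses over, since \eqref{eq:linear-conv} only bounds $\E[V_c(\theta_k)]$ down to a non-vanishing floor. One slip: for even $G$ (the paper uses $G=12$) the bound $\bar p_0\ge(1-V_c)^G$ is not merely vacuous when $V_c>1$ but can fail outright (e.g.\ $(1-V_c)^G>1$ once $V_c>2$), so state it as $\bar p_0\ge(1-\min\{V_c,1\})^G$, whereas your Bernoulli form holds unconditionally.
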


%% file: sections/ExperimentSetup.tex
\section{Experimental setup}
\label{sec:experiments}

We evaluate TEMPO on three prediction tasks spanning financial markets, professional sports contracts, and federal law, comparing against inference-time baselines under an identical evaluation pipeline.

\subsection{Tasks and data}
\label{sec:data}

The three tasks are chosen to cover a spectrum of leakage sensitivities.
Stock ranking is the most leakage-sensitive: market returns are highly volatile and difficult to predict from pre-cutoff fundamentals alone, so models with access to post-cutoff knowledge gain a disproportionate advantage.
Salary prediction is intermediate: contract values depend on recent player performance and market conditions that may fall within days or weeks of the cutoff, making up-to-date knowledge informative but not as decisive as in stock markets.
Legal case prediction is least susceptible: judicial outcomes rely primarily on precedent, statutory reasoning, and procedural history available well before the decision, and similar prior cases provide strong valid reference points.
Each instance carries a cutoff date equal to the date its oracle result became publicly determinable; only information available on or before this date is considered temporally valid.
Training and evaluation instances never overlap: stock evaluation uses a temporally disjoint window, salary evaluation uses a held-out sport (NBA, absent from training), and legal evaluation is drawn by stratified sampling with verified zero case overlap.

\textbf{Stock Ranking.}
Rank five same-sector companies by six-month total return; evaluation covers COVID-19 onset (Dec 2019--Jun 2020).

\textbf{Salary Prediction.}
Predict the average annual contract value (AAV) for an athlete who changed teams; training spans NFL/MLB/NHL, evaluation uses NBA.

\textbf{Legal Case Prediction.}
Predict the probability the petitioner prevails given case facts and procedural history; training combines Supreme Court and federal appellate cases (2014--2024).

A separate control set per task (answers postdating Feb 2026) establishes a zero-knowledge performance floor.
Table~\ref{tab:dataset_summary} summarizes statistics; dataset details are in Appendix~\ref{sec:app_dataset}.

\input{tables/dataset_summary}

\subsection{Models and methods}
\label{sec:methods_compare}

We train two mixture-of-experts models from the Qwen3 family~\citep{qwen3}: Qwen3-30B-A3B (30B total, 3B active per token, knowledge cutoff approximately March 2025) and Qwen3.5-35B-A3B (35B total, 3B active, cutoff approximately February 2026).
Both share the same architecture family but differ in scale and pre-training vintage, allowing us to test whether temporal discipline generalizes across model generations.
All methods receive identical data, prompts, and evaluation pipelines.\footnote{Code and data are available at \url{https://anonymous.4open.science/r/TEMPO-7EA2}.}

We compare against three inference-time baselines that form a progression of intervention levels, testing whether each additional mechanism is sufficient to suppress temporal leakage:

\textbf{Temporal Hint (Base).}
The unmodified model receives a structured prompt with a temporal constraint anchored to the instance cutoff date.
This tests whether instruction alone suffices, without retrieval or verification.

\textbf{RAG.}
The base prompt is augmented with pre-cutoff evidence retrieved via a search API with a hard date filter set to the instance cutoff.
This tests whether providing external valid evidence reduces reliance on parametric memory.

\textbf{TimeSPEC.}
An inference-time leakage prevention approach~\citep{zhang2026leakscountcountmore} that extends RAG with a post-generation verify--regenerate--aggregate pipeline.
This tests whether inference-time claim filtering achieves compliance.

TEMPO differs fundamentally: rather than filtering at inference time, it trains temporal discipline into the model's parameters (Section~\ref{sec:method}), requiring only a single forward pass at evaluation.
Training requires approximately 6 hours per model-task combination on four concurrent workers; evaluation uses greedy decoding (single deterministic forward pass per instance), so all reported metrics are fully reproducible given the released checkpoints.
Implementation details and hyperparameters for all methods appear in Appendix~\ref{sec:app_implementation}; the TimeSPEC pipeline is documented in Appendix~\ref{sec:app_timespec}.

\subsection{Evaluation protocol}
\label{sec:eval_protocol}

All methods are scored by an identical pipeline to ensure fair comparison.
We report three complementary metrics (formal definitions in Appendix~\ref{sec:app_eval_pipeline}).

\textbf{Overall Leakage Rate (OLR).}
$\mathrm{OLR} = N^{-1}\sum_{j} n_{\mathrm{leak},j}/n_{\mathrm{total},j}$, the fraction of claims whose effective source date exceeds the cutoff.
A date-plausibility verifier adjusts declared dates for temporal consistency, mirroring the training signal (Section~\ref{sec:reward}).
Lower is better; 0\% means every claim is grounded in pre-cutoff information.
Control experiments on post-cutoff instances (where the model lacks parametric knowledge of the outcome) confirm the verifier produces negligible false positives (0.0--0.7\% OLR; Appendix~\ref{sec:app_control}).

\textbf{Task Performance (Perf).}
Task-specific performance mapped to $[0,1]$: Spearman rank correlation $(\rho + 1)/2$ for stock, relative accuracy $\max(0, 1 - |\hat{y} - y|/y)$ for salary, and Brier score complement $1 - (\hat{p} - y)^2$ for legal, where $\hat{y}$ (or $\hat{p}$) denotes the model's prediction and $y$ the ground truth.
Performance is interpretable only alongside OLR: high performance at high OLR may reflect contamination rather than genuine prediction ability.
Control experiments on post-cutoff instances establish the zero-knowledge floor.

\textbf{Coverage (Cov).}
$\mathrm{Cov} = |\{e_k : e_k \text{ cited in reasoning}\}| / |\{e_k\}|$, the fraction of evidence items cited at least once in the reasoning; high coverage reflects a coherent evidence--reasoning--prediction chain.

%% file: tables/dataset_summary.tex
\begin{table}[t]
  \caption{Dataset statistics across three prediction tasks. All training and evaluation cutoff dates predate March 2025, ensuring both models could have encountered the outcomes during pre-training. Control instances postdate February 2026, after both models' knowledge cutoffs, establishing a zero-knowledge performance floor.}
  \label{tab:dataset_summary}
  \centering
  \small
  \begin{tabular}{l r r r l l}
    \toprule
    Task  & Train & Eval & Control & Train period & Eval period \\
    \midrule
    Stock ranking                & 1{,}200 & 100 & 100 & 2020-07 -- 2024-06 & 2019-12 -- 2020-06 \\
    Salary prediction         & 1{,}066 & 137 &  26 & 2017-12 -- 2024-12 & 2019-06 -- 2024-07 \\
    Legal case         &     894 & 100 &  90 & 2014-10 -- 2024-12 & 2015-01 -- 2024-07 \\
    \bottomrule
  \end{tabular}
\end{table}

%% file: sections/ResultsAndAnalysis.tex
\section{Results and analysis}
\label{sec:results}

Table~\ref{tab:main_results} reports the full results and Figure~\ref{fig:method_2d} visualizes the leakage--performance space.
TEMPO reduces leakage from 2--13\% (baselines) to 0.6--3.7\% across all six model-task conditions while maintaining prediction performance comparable to other leakage-controlled methods and consistently exceeding the zero-knowledge control.
Neither prompt constraints (Temporal Hint) nor retrieval with hard date filters (RAG) resolve temporal leakage; RAG often amplifies it by surfacing post-cutoff content that is closely related to the prediction target.
Extended analysis---instance-level leakage and coverage distributions, inference cost comparisons, and qualitative case studies---appears in Appendices~\ref{sec:app_dynamics}--\ref{sec:app_cases}.

\input{tables/main_results}

\subsection{Per-task comparison}
\label{sec:per_task}

\textbf{Stock Ranking.}
Temporal Hint produces 5.4\%/12.7\% OLR as the model cites post-cutoff closing prices directly.
RAG amplifies leakage further (9.8\% for Qwen3-30B): even with a hard date filter, financial news discussing post-cutoff dynamics passes through, demonstrating that search cutoffs alone do not prevent contamination.
Both TimeSPEC (1.7\%/1.5\%) and TEMPO (2.5\%/2.4\%) suppress leakage effectively, but differ sharply in coverage: TimeSPEC's post-hoc filtering breaks the evidence--reasoning chain (73.8\%/75.5\%), while TEMPO generates clean evidence and reasoning together in one pass (99.3\%/95.7\%).
For Qwen3-30B, TEMPO achieves the highest accuracy among all methods (0.625) while maintaining near-zero leakage.
For Qwen3.5-35B, Temporal Hint's higher raw accuracy (0.690) comes at 12.7\% OLR---reflecting post-cutoff citations rather than genuine prediction.
Across both models, TEMPO exceeds the Control floor (0.476/0.497) by 27--31\%, confirming genuine prediction from valid pre-cutoff information.

\textbf{Salary Prediction.}
RAG achieves the highest accuracy (0.751/0.778) but at 18.2\% OLR for Qwen3-30B, as the retriever surfaces actual contract details published after the cutoff.
Both TimeSPEC and TEMPO suppress leakage to 0.7--3.7\% and achieve comparable honest prediction quality (TimeSPEC 0.638/0.656; TEMPO 0.658/0.631), with both substantially exceeding Control (0.454/0.549).
The key difference is that TEMPO improves accuracy over Temporal Hint (0.658 vs.\ 0.607 for 30B; 0.631 vs.\ 0.558 for 35B) while suppressing leakage---its performance-mode objective actively trains better reasoning from valid pre-cutoff information---whereas TimeSPEC has no such mechanism.
The simultaneous leakage reduction and accuracy gain confirms the two-phase curriculum.

\textbf{Legal Outcome Prediction.}
All methods achieve low OLR (0.1--4.0\%) because legal reasoning draws on precedent available before decisions; the mode gate operates correctly even when baseline leakage is already near zero.
Both TEMPO and TimeSPEC substantially exceed Control (0.707/0.647), confirming genuine prediction from valid precedent.
TimeSPEC achieves higher accuracy on this task (0.824/0.811 vs.\ TEMPO's 0.804/0.741), with a larger gap for Qwen3.5-35B.
TEMPO's advantage is coverage (75.7\%/74.1\% vs.\ 65.2\%/73.0\%) and inference cost: it preserves reasoning coherence in a single forward pass without external API calls.

\begin{figure}[t]
  \centering
  \includegraphics[width=0.92\textwidth]{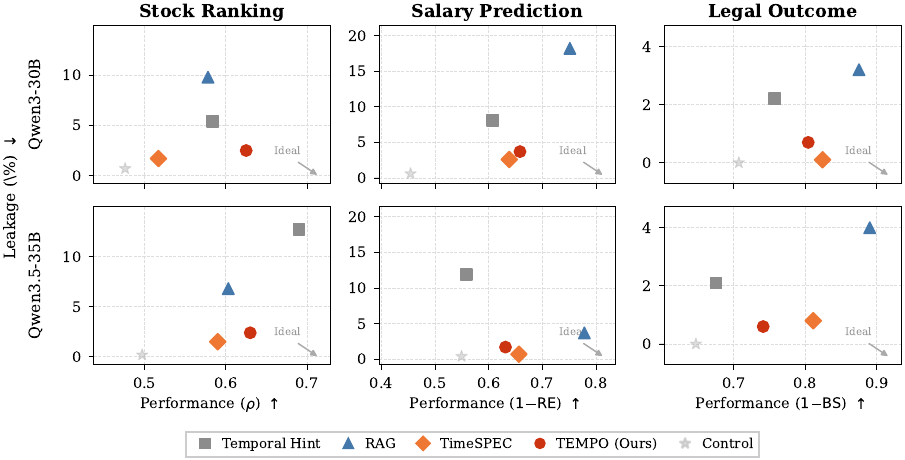}
  \caption{Leakage--performance trade-off across tasks (columns) and models (rows). The ideal region is the lower right (low leakage, high performance). TEMPO (red) consistently occupies this region. Stars denote control experiments on post-cutoff instances.}
  \label{fig:method_2d}
\end{figure}

\subsection{Temporal discipline and prediction quality}
\label{sec:2d_analysis}

Figure~\ref{fig:method_2d} plots each method in the leakage--performance space.
Baselines occupy the upper region where leakage is uncontrolled and apparent performance is unreliable.
TimeSPEC moves down---leakage suppressed---but without a performance-improvement objective it tends left, and coverage loss from filtering further degrades reasoning quality.
TEMPO moves down \emph{and} stays right or improves: its performance-mode objective trains the model to reason better from valid information while preserving 91--99\% coverage.
Across all six conditions, TEMPO suppresses leakage to 0.6--3.7\% while consistently exceeding Control by 14--45\%, confirming genuine prediction from valid pre-cutoff evidence.
Where baselines appear to outperform TEMPO in raw accuracy (RAG on salary/legal; Temporal Hint on stock 35B), the gap invariably accompanies significantly higher OLR---this ``performance'' reflects contaminated reasoning and is not trustworthy for backtesting.
On coverage, TEMPO outperforms TimeSPEC by 17--26 percentage points on stock and salary; the gap narrows on legal where fewer claims are filtered.
This pattern is what Theorem~\ref{thm:convergence} guarantees: part~(i) drives leakage down, part~(ii) improves performance conditional on compliance.

\subsection{Two-phase curriculum validation}
\label{sec:curriculum_validation}

Table~\ref{tab:mode_transition} validates the emergent two-phase curriculum predicted by Corollary~\ref{cor:curriculum} by tracking the fraction of GRPO groups assigned to performance mode over training.
On the highest-leakage condition (Qwen3.5-35B salary), training begins with 0\% of groups in performance mode---every group contains at least one leaked completion---and crosses 50\% at step~25 of 211, reaching 81.2\% by training end.
Qwen3-30B converges faster, reaching 100\% performance mode on stock and 98.8\% on salary, because lower baseline leakage means fewer groups start in leakage mode.
No manual phase scheduling is required.
Legal prediction has near-zero baseline leakage (2.1--2.2\% OLR), so the transition effect is less pronounced; the design is validated on tasks where substantial leakage exists to be eliminated.

\begin{table}[t]
  \caption{Two-mode transition over training. Performance-mode fraction: proportion of GRPO groups where all completions are leak-free. The mode gate transitions automatically, confirming Corollary~\ref{cor:curriculum}.}
  \label{tab:mode_transition}
  \centering\small
  \begin{tabular}{llcccc}
    \toprule
    Model & Task & Initial (\%) & Final (\%) & Step ${>}50$\% & Total steps \\
    \midrule
    Qwen3-30B & Stock & 62.5 & 100.0 & 0 & 213 \\
    Qwen3-30B & Salary & 50.0 & 98.8 & 2 & 211 \\
    \midrule
    Qwen3.5-35B & Stock & 12.5 & 96.2 & 11 & 213 \\
    Qwen3.5-35B & Salary & 0.0 & 81.2 & 25 & 211 \\
    \bottomrule
  \end{tabular}
\end{table}

%% file: tables/main_results.tex
\begin{table}[t]
  \caption{Evaluation results across three temporal prediction tasks and two models. OLR: overall leakage rate (fraction of claims citing post-cutoff information). Perf: task-specific prediction accuracy (mapped Spearman~$\rho$ for stock, $1{-}\mathrm{RE}$ for salary, $1{-}\mathrm{BS}$ for legal; all in $[0,1]$). Cov: fraction of evidence items cited in reasoning. Temporal Hint, RAG, and TimeSPEC are inference-time methods without parameter updates; TEMPO fine-tunes the model via GRPO. Best value per model block in \textbf{bold}. $^\dagger$Control evaluates Temporal Hint on post-cutoff instances where the model lacks parametric knowledge of the answer.}
  \label{tab:main_results}
  \centering
  \small
  \setlength{\tabcolsep}{3.5pt}
  \begin{tabular}{l ccc ccc ccc}
    \toprule
    & \multicolumn{3}{c}{Stock Ranking} & \multicolumn{3}{c}{Salary Prediction} & \multicolumn{3}{c}{Legal Outcome} \\
    \cmidrule(lr){2-4} \cmidrule(lr){5-7} \cmidrule(lr){8-10}
    Method & OLR$\downarrow$ & Perf$\uparrow$ & Cov$\uparrow$ & OLR$\downarrow$ & Perf$\uparrow$ & Cov$\uparrow$ & OLR$\downarrow$ & Perf$\uparrow$ & Cov$\uparrow$ \\
    \midrule
    \multicolumn{10}{l}{\textit{Qwen3-30B-A3B}} \\
    \midrule
    Temporal Hint   & 5.4\%              & 0.584              & 94.6\%             & 8.1\%              & 0.607              & 83.0\%             & 2.2\%              & 0.757              & 61.7\% \\
    RAG             & 9.8\%              & 0.578              & 83.2\%             & 18.2\%             & \textbf{0.751}     & 76.0\%             & 3.2\%              & \textbf{0.875}     & \textbf{85.8\%} \\
    TimeSPEC        & \textbf{1.7\%}     & 0.517              & 73.8\%             & \textbf{2.6\%}     & 0.638              & 71.2\%             & \textbf{0.1\%}     & 0.824              & 65.2\% \\
    \cmidrule(lr){1-10}
    TEMPO (Ours) & 2.5\%              & \textbf{0.625}     & \textbf{99.3\%}    & 3.7\%              & 0.658              & \textbf{91.2\%}    & 0.7\%              & 0.804              & 75.7\% \\
    \cmidrule(lr){1-10}
    \textit{\color{gray}Control$^\dagger$} & \textit{\color{gray}0.7\%} & \textit{\color{gray}0.476} & \textit{\color{gray}99.3\%} & \textit{\color{gray}0.6\%} & \textit{\color{gray}0.454} & \textit{\color{gray}79.5\%} & \textit{\color{gray}0.0\%} & \textit{\color{gray}0.707} & \textit{\color{gray}66.2\%} \\
    \midrule
    \multicolumn{10}{l}{\textit{Qwen3.5-35B-A3B}} \\
    \midrule
    Temporal Hint   & 12.7\%             & \textbf{0.690}     & 88.4\%             & 11.9\%             & 0.558              & 80.0\%             & 2.1\%              & 0.675              & 69.0\% \\
    RAG             & 6.8\%              & 0.603              & 90.5\%             & 3.7\%              & \textbf{0.778}     & 84.9\%             & 4.0\%              & \textbf{0.890}     & \textbf{81.4\%} \\
    TimeSPEC        & \textbf{1.5\%}     & 0.590              & 75.5\%             & \textbf{0.7\%}     & 0.656              & 75.0\%             & 0.8\%              & 0.811              & 73.0\% \\
    \cmidrule(lr){1-10}
    TEMPO (Ours) & 2.4\%              & 0.630              & \textbf{95.7\%}    & 1.7\%              & 0.631              & \textbf{92.0\%}    & \textbf{0.6\%}     & 0.741              & 74.1\% \\
    \cmidrule(lr){1-10}
    \textit{\color{gray}Control$^\dagger$} & \textit{\color{gray}0.2\%} & \textit{\color{gray}0.497} & \textit{\color{gray}92.5\%} & \textit{\color{gray}0.4\%} & \textit{\color{gray}0.549} & \textit{\color{gray}86.2\%} & \textit{\color{gray}0.0\%} & \textit{\color{gray}0.647} & \textit{\color{gray}74.3\%} \\
    \bottomrule
  \end{tabular}
\end{table}

%% file: sections/Conclusion.tex
\section{Conclusion}
\label{sec:conclusion}

Temporal knowledge leakage undermines backtesting validity, yet no existing method trains the model to behaviorally condition its evidence selection on each instance's cutoff date---inference-time pipelines must repeatedly filter leakage on every new input without the model ever learning temporal discipline.
We proposed TEMPO, which trains this discipline via a two-mode GRPO reward: a leakage mode drives post-cutoff claims to zero as a hard prerequisite, and a performance mode optimizes prediction quality from valid evidence once compliance is achieved.
Across three prediction tasks and two model generations, TEMPO reduces leakage to levels comparable to the best inference-time baseline while maintaining or improving task performance and preserving evidence--reasoning coverage---in a single forward pass identical in cost to the unmodified model.
Formal analysis confirms monotone leakage decrease with linear convergence to the leak-free optimum under a regularity condition, and performance improvement once compliance is achieved, producing an automatic two-phase curriculum without manual scheduling.
The results demonstrate that temporal compliance is a trainable behavioral property: the model internalizes cutoff-conditioned evidence selection rather than relying on external filtering.
The current approach is limited to open-weight models with task-specific temporal annotations and relies on an external LLM verifier during training, but within these constraints it enables trustworthy LLM backtesting without inference-time overhead.

%% file: sections/Appendix.tex
\section{Convergence analysis}
\label{sec:app_convergence_proofs}

This section provides full proofs for Propositions~\ref{prop:alignment}--\ref{prop:crossmode}, Theorem~\ref{thm:convergence}, and Corollary~\ref{cor:curriculum} stated in Section~\ref{sec:grpo}.

\paragraph{Setup.}
Let $V_c(\theta) = \mathbb{E}_{(x,t)}\mathbb{E}_{y \sim \pi_\theta}[c(y,t)]$ denote the expected leakage cost, $V_f(\theta) = \mathbb{E}_{(x,t)}\mathbb{E}_{y \sim \pi_\theta}[\exp(-0.5\,c(y,t))]$ the expected exponential reward, and $V_r(\theta) = \mathbb{E}_{(x,t)}\mathbb{E}_{y \sim \pi_\theta}[r_{\mathrm{eff}}(y) \mid c(y,t){=}0]$ the expected performance reward conditional on clean outputs.
Write $s(y) = \nabla_\theta \log \pi_\theta(y \mid x)$ for the score function and $e_y$ for the standard basis vector at position~$y$.
The smoothness and bounded-moment conditions hold for LoRA-parameterized policies with bounded rewards ($r_{\mathrm{leak}} \in (0,1]$, $r_{\mathrm{eff}} \in [0,1]$) and finite sequence length, which bound the score function norms and ensure Lipschitz gradients.

\subsection{Proof of Proposition~\ref{prop:alignment}: Gradient Alignment}
\label{sec:app_proof_alignment}

\paragraph{Policy gradient identification.}
By the policy gradient theorem~\citep{sutton2000policy}, the REINFORCE gradient of $V_f$ with baseline $\bar{f}$ is
\[
    \nabla_\theta V_f(\theta) = \mathbb{E}_{\pi_\theta}\bigl[(\exp(-0.5\,c(y,t)) - \bar{f})\, s(y)\bigr],
\]
where $\bar{f} = \mathbb{E}_{\pi_\theta}[\exp(-0.5\,c)]$.
The GRPO z-scoring divides by the within-group sample standard deviation; in expectation over groups of size $G$, this converges to the population standard deviation $\sigma_f = \sqrt{\mathrm{Var}_{\pi_\theta}(\exp(-0.5\,c))}$, so the expected leakage-mode gradient satisfies $\bar{g}_{\mathrm{leak}}(\theta) = \nabla_\theta V_f(\theta) / \sigma_f + O(1/G)$.
Since $\sigma_f > 0$ whenever $\mathrm{Var}_{\pi_\theta}(c) > 0$, this is a positive rescaling of $\nabla V_f$ (up to a vanishing $O(1/G)$ bias) that does not affect the sign of any inner product for $G$ sufficiently large; the rescaling factor is absorbed into the alignment constant $\gamma(\theta)$.
Similarly, $\nabla V_c = \mathbb{E}_{\pi_\theta}[(c(y,t) - \bar{c})\, s(y)]$.

\paragraph{Overparameterized regime.}
We establish that the Euclidean inner product $\langle \nabla V_f, \nabla V_c\rangle$ is strictly negative under the compatible approximation condition.
In the overparameterized regime~\citep{agarwal2020optimality}, the policy class can represent any distribution over outputs, and the score function takes the form $s(y) = e_y - \pi_\theta$ (one logit per output).
This condition is satisfied when the LoRA rank far exceeds the effective dimension of the leakage-count distribution; with rank~32 and typically 5--10 distinct leakage levels per instance, the score vectors span all centered functions of the leakage count.
Computing the inner product directly from the score structure, the cross-terms vanish (since $\sum_y \pi(y)(f(y) - \bar{f}) = 0$ and $\sum_y \pi(y)(c(y) - \bar{c}) = 0$), yielding
\begin{equation}\label{eq:euclidean-inner}
    \langle \nabla V_f,\, \nabla V_c \rangle \;=\; \sum_y \pi_\theta(y)^2\, \bigl(\exp(-0.5\,c(y)) - \bar{f}\bigr)\bigl(c(y) - \bar{c}\bigr).
\end{equation}
The sum in~\eqref{eq:euclidean-inner} equals the covariance $\mathrm{Cov}_Q(\exp(-0.5\,c),\, c)$ under the tilted distribution $Q(y) \propto \pi_\theta(y)^2$.
Since $\exp(-0.5\,c)$ is strictly decreasing in $c$ and $Q$ has the same support as $\pi_\theta$, the Chebyshev covariance inequality gives $\mathrm{Cov}_Q(\exp(-0.5\,c),\, c) < 0$ whenever $c$ takes at least two distinct values under $\pi_\theta$---precisely the regime where the policy sometimes leaks.

For the LoRA-parameterized LLM, the effective output space for the leakage count $c$ is low-dimensional (typically a handful of distinct integer values $0, 1, 2, \ldots$), while the LoRA rank $r$ is much larger.
In this overparameterized regime, the policy can independently adjust the probability of each leakage level, and the softmax analysis applies~\citep{agarwal2020optimality}.

Combining with the policy gradient identification, we obtain $\langle \bar{g}_{\mathrm{leak}}, \nabla V_c\rangle = \langle \nabla V_f, \nabla V_c\rangle < 0$.
The alignment constant $\gamma(\theta) = |\langle \nabla V_f, \nabla V_c\rangle| / (\sigma_f\,\|\nabla V_c\|^2)$ is positive whenever $\mathrm{Var}_{\pi_\theta}(c) > 0$, and is bounded below on any sublevel set $\{V_c(\theta) \geq \epsilon\}$ for $\epsilon > 0$.
To see this, note that $V_c(\theta) = \mathbb{E}_{(x,t)}\mathbb{E}_{y}[c(y,t)] \geq \epsilon > 0$ requires some instance $(x,t)$ where $\mathbb{E}_y[c] > 0$; since $c \geq 0$ is integer-valued and $\pi_\theta$ has full support over outputs, this implies $\mathrm{Var}_{\pi_\theta}(c \mid x,t) > 0$ for that instance, giving a strictly negative inner product contribution.

\paragraph{Extension to clipping.}
The clipped importance ratio $\min(\pi_\theta/\pi_{\mathrm{old}}, 1{+}\epsilon)$ defines a surrogate objective that matches the true policy gradient to first order at $\theta = \theta_{\mathrm{old}}$~\citep{schulman2015trpo}.
Within the trust region $D_{\mathrm{KL}}(\pi_\theta \| \pi_{\mathrm{old}}) \leq \delta$, the clipped gradient has the same sign of inner product with $\nabla V_c$ as the unclipped version, up to an $O(\delta)$ correction that is absorbed into the step-size constraint $\eta \leq \gamma/(L_c B_g^2)$.
\qed

\subsection{Proof of Proposition~\ref{prop:crossmode}: Bounded Cross-Mode Contamination}
\label{sec:app_proof_crossmode}

\paragraph{Batch decomposition.}
At step $k$, the batch contains fraction $\alpha_k$ of groups in leakage mode and $(1{-}\alpha_k)$ in performance mode.
The combined stochastic gradient is $\hat{g}_k = \alpha_k\,\hat{g}_{\mathrm{leak},k} + (1{-}\alpha_k)\,\hat{g}_{\mathrm{perf},k}$, and the policy update is $\theta_{k+1} = \theta_k + \eta\,\hat{g}_k$.
By $L_c$-smoothness of $V_c$~\citep{nesterov2004introductory},
\begin{equation}\label{eq:smooth-expand}
    V_c(\theta_{k+1}) \leq V_c(\theta_k) + \eta\langle \nabla V_c(\theta_k),\, \hat{g}_k \rangle + \tfrac{L_c\eta^2}{2}\|\hat{g}_k\|^2.
\end{equation}

\paragraph{Leakage-mode component.}
By Proposition~\ref{prop:alignment}, $\mathbb{E}[\langle \nabla V_c, \hat{g}_{\mathrm{leak},k}\rangle \mid \theta_k] = \langle \nabla V_c, \bar{g}_{\mathrm{leak}}\rangle \leq -\gamma(\theta_k)\|\nabla V_c\|^2$.
The leakage-mode contribution to the expected inner product in~\eqref{eq:smooth-expand} is therefore at most $-\alpha_k\eta\gamma(\theta_k)\|\nabla V_c\|^2$.

\paragraph{Performance-mode component.}
The performance-mode gradient $\bar{g}_{\mathrm{perf}} = \mathbb{E}_{y|c=0}[(r_{\mathrm{eff}}(y) - \bar{r}_{\mathrm{eff}})\,s(y)]$ is computed from completions that all satisfy $c(y,t) = 0$.
By the bounded second-moment assumption and Cauchy--Schwarz,
\[
    |(1{-}\alpha_k)\,\eta\,\mathbb{E}[\langle \nabla V_c,\, \hat{g}_{\mathrm{perf}}\rangle]| \;\leq\; (1{-}\alpha_k)\,\eta\, B_g\,\|\nabla V_c\|.
\]
This bounds the worst-case cross-mode contribution: the performance gradient can project onto $\nabla V_c$, but only up to the norm product.
Under the Polyak--\L{}ojasiewicz condition $\|\nabla V_c\|^2 \geq 2\mu V_c$, the leakage-mode descent $\alpha_k\gamma(\theta_k)\|\nabla V_c\|^2$ dominates this term whenever $\|\nabla V_c\| \geq (1{-}\alpha_k)B_g/(\alpha_k\gamma(\theta_k))$, which is satisfied for $V_c$ above a noise floor proportional to $B_g^2/(\alpha_k^2\gamma^2\mu)$.

\paragraph{Combined bound.}
Taking conditional expectations in~\eqref{eq:smooth-expand} and combining the two components:
\begin{align}
    \mathbb{E}[V_c(\theta_{k+1}) \mid \theta_k]
    &\leq V_c(\theta_k) - \bigl(\alpha_k\gamma(\theta_k) - (1{-}\alpha_k)B_g/\|\nabla V_c(\theta_k)\|\bigr)\,\eta\|\nabla V_c(\theta_k)\|^2 + \tfrac{L_c\eta^2}{2}B_g^2. \label{eq:mixed-descent}
\end{align}
The effective descent rate $\alpha_k\gamma(\theta_k) - (1{-}\alpha_k)B_g/\|\nabla V_c\|$ is positive whenever $\|\nabla V_c\| > (1{-}\alpha_k)B_g/(\alpha_k\gamma(\theta_k))$.
Under the PL condition ($\|\nabla V_c\|^2 \geq 2\mu V_c$), this holds for all $V_c$ above a threshold proportional to $B_g^2/(\alpha_k^2\gamma^2\mu)$, ensuring net descent throughout the leakage phase.
\qed

\subsection{Proof of Theorem~\ref{thm:convergence}: Convergence}
\label{sec:app_proof_convergence}

Given Propositions~\ref{prop:alignment}--\ref{prop:crossmode}, the convergence proof follows the standard smooth stochastic optimization framework.
We state the key steps for completeness, noting that the algebraic machinery is due to~\citet{ghadimi2013stochastic} for the nonconvex rate and~\citet{karimi2016linear} for the PL linear rate.

\paragraph{Part (i): Nonconvex rate.}
Let $\gamma_0 = \inf\{\gamma(\theta): V_c(\theta) \geq \epsilon_0\}$ for target accuracy $\epsilon_0 > 0$.
On the sublevel set $\{V_c \geq \epsilon_0\}$, the PL condition gives $\|\nabla V_c\| \geq \sqrt{2\mu\epsilon_0}$, so the effective rate in~\eqref{eq:mixed-descent} satisfies $\alpha_{\min}\gamma_0 - (1{-}\alpha_{\min})B_g/\sqrt{2\mu\epsilon_0} > 0$ for $\epsilon_0 \geq 2(1{-}\alpha_{\min})^2 B_g^2/(\mu\alpha_{\min}^2\gamma_0^2)$.
The condition $\alpha_k \geq \alpha_{\min} > 0$ holds throughout the leakage phase by construction: while $V_c \geq \epsilon_0$, Markov's inequality guarantees $\bar{q} \geq \epsilon_0 / c_{\max} > 0$ (where $c_{\max}$ bounds the maximum leak count), ensuring a positive fraction of groups contain leaked completions and enter leakage mode.
Summing~\eqref{eq:mixed-descent} over $k = 0, \ldots, K{-}1$:
\[
    \frac{1}{K}\sum_{k=0}^{K-1}\mathbb{E}\|\nabla V_c(\theta_k)\|^2 \leq \frac{V_c(\theta_0)}{\alpha_{\min}\,\eta\gamma_0 K} + \frac{L_c\eta B_g^2}{2\alpha_{\min}\gamma_0}.
\]
Setting $\eta = O(1/\sqrt{K})$ gives $\min_{k \leq K}\|\nabla V_c(\theta_k)\|^2 = O(1/\sqrt{K})$, the standard nonconvex SGD rate~\citep{ghadimi2013stochastic}.

\paragraph{Part (ii): Performance improvement.}
In performance mode, all completions satisfy $c = 0$ and the GRPO gradient maximizes $V_r(\theta) = \mathbb{E}[r_{\mathrm{eff}}(y) \mid c(y,t){=}0]$ directly.
By the policy gradient theorem~\citep{sutton2000policy} applied to the conditional distribution over clean completions, the expected gradient is $\bar{g}_{\mathrm{perf}} = \nabla_\theta V_r(\theta) / \sigma_r + O(1/G)$, where $\sigma_r$ is the within-group standard deviation of $r_{\mathrm{eff}}$ among clean completions.
Since $\sigma_r > 0$ for non-degenerate policies, this is proportional to $\nabla V_r$ and the standard ascent lemma under $L_r$-smoothness gives
\[
    \mathbb{E}[V_r(\theta_{k+1}) \mid \theta_k] \geq V_r(\theta_k) + \eta\|\nabla V_r(\theta_k)\|^2 - \tfrac{L_r\eta^2}{2}B_g^2.
\]

\paragraph{Part (iii): PL linear convergence.}
We invoke the Polyak--\L{}ojasiewicz (PL) condition $\|\nabla V_c(\theta)\|^2 \geq 2\mu\, V_c(\theta)$.
This condition is natural for the leakage landscape: $V_c \geq 0$ has a unique global minimum at zero (leaked claim counts are non-negative), and the LoRA parameterization is heavily overparameterized relative to the effective output dimension, precluding spurious local minima; all six training runs converge to near-zero leakage, confirming this empirically.
Under PL, the cross-mode term satisfies $(1{-}\alpha_k)B_g/\|\nabla V_c\| \leq (1{-}\alpha_{\min})B_g/\sqrt{2\mu V_c}$.
For $V_c \geq \epsilon_0 = 2(1{-}\alpha_{\min})^2 B_g^2/(\mu\alpha_{\min}^2\gamma_0^2)$, the effective rate in~\eqref{eq:mixed-descent} is at least $\alpha_{\min}\gamma_0/2 > 0$, giving
\[
    \mathbb{E}[V_c(\theta_{k+1}) \mid \theta_k] \leq (1 - \alpha_{\min}\eta\gamma_0\mu)\, V_c(\theta_k) + \tfrac{L_c\eta^2}{2}B_g^2.
\]
Unrolling the recursion~\citep{karimi2016linear}:
\[
    \mathbb{E}[V_c(\theta_k)] \leq (1 - \alpha_{\min}\eta\gamma_0\mu)^k\, V_c(\theta_0) + \frac{L_c\eta B_g^2}{2\alpha_{\min}\gamma_0\mu}.
\]
The first term decays geometrically; the second is a noise floor that shrinks with the learning rate.
Since $V_c \geq 0$ with global minimum at $0$, this establishes linear convergence to a neighborhood of the leak-free regime.
\qed

\subsection{Proof of Corollary~\ref{cor:curriculum}: Two-Phase Curriculum}

Markov's inequality gives $q(\theta;x,t) = \Pr[c \geq 1] \leq \mathbb{E}_{y \sim \pi_\theta}[c(y,t)]$, so $\bar{q}(\theta) \leq V_c(\theta)$.
Since $(1{-}q)^G$ is convex, Jensen's inequality yields $\bar{p}_0 = \mathbb{E}[(1{-}q)^G] \geq (1{-}\bar{q})^G \geq (1{-}V_c)^G$.
As $V_c(\theta_k) \to 0$ (Theorem~\ref{thm:convergence}(iii)), $\bar{p}_0(\theta_k) \to 1$: the fraction of groups in performance mode converges to one.
Combined with Theorem~\ref{thm:convergence}(ii), this establishes the two-phase curriculum: training automatically transitions from a leakage-dominated phase to a performance-optimization phase without manual scheduling.
The transition is monotone because $V_c$ is non-increasing in expectation, so $(1{-}V_c)^G$ is non-decreasing.

\section{Evaluation protocol and metric definitions}
\label{sec:app_eval_pipeline}

This section formally defines the evaluation metrics introduced in Section~\ref{sec:eval_protocol} and describes the scoring pipeline.
All metrics are computed deterministically from a single model completion per instance.

\subsection{Leakage detection pipeline}
\label{sec:app_leakage_detection}

The structured output format requires the model to produce an evidence list in which each item carries a \texttt{source\_date} field.
Claims with declared source dates are batched in chunks of 80 and submitted to the date-plausibility verifier (Gemini 3 Flash Preview).
The verifier assesses whether each declared date is temporally plausible given the claim type: for example, full-year financial results cannot predate the fiscal year end, and quarterly earnings appear 1--2 months after quarter close.
If the declared date is implausible, the verifier returns a corrected date.
The leakage verdict is deterministic:
\begin{equation}
\text{leaked}(c_j) = \mathbf{1}\!\left[\text{effective\_date}(c_j) > t_{\text{ref}}\right],
\end{equation}
where $\text{effective\_date}(c_j)$ is the verifier-corrected date if the original was implausible, and the declared date otherwise.
Claims without a declared source date default to not-leaked, a conservative choice that may undercount leakage but avoids penalizing the model for omitting dates on genuinely pre-cutoff facts.

\subsection{Overall leakage rate (OLR)}
\label{sec:app_olr_def}

Let $n_{\mathrm{leak},j}$ denote the number of leaked claims and $n_{\mathrm{total},j}$ the total number of claims with declared source dates for instance~$j$.
The instance-level leakage rate is $\ell_j = n_{\mathrm{leak},j} / \max(n_{\mathrm{total},j}, 1)$.
The Overall Leakage Rate (OLR) averages across all $N$ instances:
\begin{equation}
\mathrm{OLR} = \frac{1}{N} \sum_{j=1}^{N} \ell_j.
\end{equation}
OLR $= 0$ indicates that no instance contains post-cutoff evidence; lower is better.

\subsection{Task performance (Perf)}
\label{sec:app_performance_def}

The performance metric is task-specific and mapped to $[0,1]$:
\begin{itemize}[nosep,leftmargin=*]
  \item \textbf{Stock ranking}: $(\rho + 1)/2$, where $\rho$ is the Spearman rank correlation between the predicted and ground-truth rankings of the five stocks by return over the prediction window.
  \item \textbf{Salary prediction}: $\max(0,\, 1 - |\hat{y} - y|/y)$, where $\hat{y}$ is the predicted annual salary and $y$ the ground truth.
  \item \textbf{Legal outcome prediction}: $1 - (\hat{p} - y)^2$, the Brier score complement comparing the predicted probability $\hat{p}$ to the binary outcome $y$.
\end{itemize}
All three metrics are bounded in $[0, 1]$ with higher values indicating better prediction quality.
A performance score strictly above the control experiment (Section~\ref{sec:app_control}) confirms that the model leverages pre-cutoff knowledge rather than guessing.

\subsection{Coverage}
\label{sec:app_coverage_def}

Coverage measures the fraction of evidence items in the evidence list that are explicitly cited in the reasoning paragraph via bracket markers (e.g., \texttt{[1]}, \texttt{[3]}):
\begin{equation}
\mathrm{Coverage} = \frac{|\{e_k : e_k \text{ cited in reasoning}\}|}{|\{e_k\}|}.
\end{equation}
High coverage indicates that the model's reasoning is grounded in the evidence it generates.
Low coverage suggests the model produces evidence items as decoration without integrating them into its reasoning chain---a failure mode that undermines interpretability and faithfulness.
Instances with zero evidence items or parse failures receive coverage~0.

\section{Reward design: justification and details}
\label{sec:app_reward_details}
\label{sec:app_design_justification}

This section provides formal justification for the two-mode reward design and the full specification of the quality gates, format failure penalty, and overlong penalty shaping referenced in Section~\ref{sec:reward}.

\paragraph{Setup.}
Let $\pi_\theta$ be the policy, $c(y,t) \geq 0$ the integer-valued leaked claim count, and $q(\theta; x,t) = \Pr_{y \sim \pi_\theta}[c(y,t) > 0]$ the per-instance leakage probability.
For a group of $G$ i.i.d.\ completions, define the group-clean probability $p_0(\theta; x,t) = (1-q)^G$ and its expectation $\bar{p}_0(\theta) = \mathbb{E}_{(x,t)}[p_0]$.

\subsection{Gradient coherence: why scalar mixing fails}

\begin{proposition}[Gradient coherence]
\label{prop:gradient-coherence}
Consider the naively mixed reward $R_{\mathrm{mix},i} = \alpha\, r_{\mathrm{leak},i} + (1{-}\alpha)\,r_{\mathrm{perf},i}$, with $S_{\mathrm{perf}}$, $S_{\mathrm{leak}}$, $S_{\mathrm{mix}}$ the within-group standard deviations of $r_{\mathrm{perf}}$, $r_{\mathrm{leak}}$, $R_{\mathrm{mix}}$, and $A_i^m = (r_{m,i} - \bar{r}_m)/S_m$ the single-objective z-scored advantages.
The GRPO advantages decompose as
\begin{equation}
  \label{eq:amix}
  A_i^{\mathrm{mix}} = \underbrace{\tfrac{(1{-}\alpha)\,S_{\mathrm{perf}}}{S_{\mathrm{mix}}}}_{w_{\mathrm{perf}}}\; A_i^{\mathrm{perf}} + \underbrace{\tfrac{\alpha\, S_{\mathrm{leak}}}{S_{\mathrm{mix}}}}_{w_{\mathrm{leak}}}\; A_i^{\mathrm{leak}}.
\end{equation}
Let $g_m = G^{-1}\sum_i A_i^m\,\nabla_\theta\log\pi_\theta(y_i \mid x)$ be the GRPO policy gradient under reward~$r_m$.
\emph{(i)}~When $S_{\mathrm{perf}} \gg S_{\mathrm{leak}}$, $w_{\mathrm{leak}}/w_{\mathrm{perf}} \to 0$ \textup{(scale dominance)}.
\emph{(ii)}~There exist policy states where $\langle g_{\mathrm{leak}}, g_{\mathrm{perf}} \rangle < 0$ \textup{(gradient conflict)}.
The mode gate eliminates both pathologies.
\end{proposition}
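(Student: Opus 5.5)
The plan is to treat the three claims separately: first the decomposition~\eqref{eq:amix}, which is pure algebra; then the two pathologies, where (i) is a limit computation and (ii) is an explicit construction; and finally the claim about the mode gate.

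\textbf{Decomposition.} I would start from linearity of the group mean: $\bar R_{\mathrm{mix}} = \alpha \bar r_{\mathrm{leak}} + (1{-}\alpha)\bar r_{\mathrm{perf}}$. It follows that
\[
R_{\mathrm{mix},i} - \bar R_{\mathrm{mix}} = \alpha\,(r_{\mathrm{leak},i} - \bar r_{\mathrm{leak}}) + (1{-}\alpha)\,(r_{\mathrm{perf},i} - \bar r_{\mathrm{perf}}).
\]
Substituting $r_{m,i} - \bar r_m = S_m A_i^m$ and dividing by $S_{\mathrm{mix}}$ gives~\eqref{eq:amix} directly. I would add the convention $A_i^m = 0$ whenever $S_m = 0$; the corresponding term then vanishes on both sides, so the identity still holds.

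\textbf{Part (i).} The ratio of the two weights is
\[
w_{\mathrm{leak}}/w_{\mathrm{perf}} = \alpha S_{\mathrm{leak}}/\bigl((1{-}\alpha)S_{\mathrm{perf}}\bigr).
\]
For fixed $\alpha\in(0,1)$ this tends to $0$ as $S_{\mathrm{leak}}/S_{\mathrm{perf}}\to 0$. I would make the regime concrete: when only one of $G$ completions leaks a single claim, $S_{\mathrm{leak}} = O\bigl((1-e^{-1/2})/\sqrt{G}\bigr)$, which is small. By contrast, $r_{\mathrm{perf}}\in[0,1]$ can spread across most of its range, so $S_{\mathrm{perf}}$ can be of order one. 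Since $A^{\mathrm{leak}}$ and $A^{\mathrm{perf}}$ are both unit-variance within the group, the mixed advantage is then essentially $A^{\mathrm{perf}}$, and the leakage signal is lost.

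\textbf{Part (ii).} Here it suffices to exhibit one policy state, and I would use the smallest case: $G=2$ with the tabular softmax score $s(y) = e_y - \pi_\theta$ from Appendix~\ref{sec:app_proof_alignment}. Take completion $y_1$ to leak and recall the outcome, so it has high $r_{\mathrm{perf}}$. Take completion $y_2$ to be clean but to reason only from pre-cutoff evidence, so it has lower $r_{\mathrm{perf}}$. This is exactly the backtesting situation in which leakage helps accuracy.

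With $G=2$, z-scoring gives $A^{\mathrm{leak}} = (-1,+1)$ and $A^{\mathrm{perf}} = (+1,-1)$. Hence $g_{\mathrm{perf}} = -g_{\mathrm{leak}}$ and
\[
\langle g_{\mathrm{leak}}, g_{\mathrm{perf}}\rangle = -\|g_{\mathrm{leak}}\|^2 < 0.
\]
The norm is strictly positive because $g_{\mathrm{leak}} \propto s(y_2) - s(y_1) = e_{y_2} - e_{y_1} \neq 0$. For general $G$, I would argue the same way whenever the group vectors $A^{\mathrm{leak}}$ and $A^{\mathrm{perf}}$ are negatively correlated and the score vectors are linearly independent.

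\textbf{Mode gate.} The main obstacle is stating precisely what it means for the gate to eliminate both pathologies. I would make the claim per group. By~\eqref{eq:mode}, every group is scored by exactly one reward: $r_{\mathrm{leak}}$ in leakage mode, or $r_{\mathrm{eff}}$ in performance mode. Its advantage is therefore exactly $A^{\mathrm{leak}}$ or $A^{\mathrm{perf}}$, with no mixing weights.

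Scale dominance is then impossible, because z-scoring is scale invariant. In particular, a small $S_{\mathrm{leak}}$ is normalized up to unit variance rather than swamped. Within-group conflict is also impossible: each group gradient is a pure $g_{\mathrm{leak}}$ or $g_{\mathrm{perf}}$, never a sum of opposing terms.

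I would state explicitly that interference across groups within the same batch is not removed by this argument. That residual effect is what Proposition~\ref{prop:crossmode} bounds, and I would cite it there rather than claim more here.
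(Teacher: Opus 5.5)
Your proposal is correct and follows the paper's proof essentially step for step. It uses the same centering-and-normalizing algebra for the decomposition, the same weight ratio for (i), the same ``leaked with high performance versus clean with low performance'' construction for (ii), and the same per-group single-objective reading of the gate; you simply make (ii) rigorous by computing it at $G=2$ with the tabular score, where $A^{\mathrm{perf}} = -A^{\mathrm{leak}}$ and the norm is visibly nonzero.

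The only loose point is your general-$G$ aside. Negative correlation plus linearly independent scores suffices in your tabular setting, where the Gram form of distinct completions reduces to the Euclidean one on centered vectors, but not for arbitrary score vectors. You also do not need it: a group whose leaked completions share a common leak count already gives $A^{\mathrm{perf}} = -A^{\mathrm{leak}}$ exactly for every $G$.
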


\begin{proof}
Expanding $R_{\mathrm{mix},i} - \bar{R}_{\mathrm{mix}} = \alpha(r_{\mathrm{leak},i} - \bar{r}_{\mathrm{leak}}) + (1{-}\alpha)(r_{\mathrm{perf},i} - \bar{r}_{\mathrm{perf}})$ and normalizing yields the decomposition~\eqref{eq:amix}, so the GRPO gradient decomposes as $g_{\mathrm{mix}} = w_{\mathrm{perf}}\,g_{\mathrm{perf}} + w_{\mathrm{leak}}\,g_{\mathrm{leak}}$.
The ratio $w_{\mathrm{leak}}/w_{\mathrm{perf}} = \alpha S_{\mathrm{leak}}/((1{-}\alpha)S_{\mathrm{perf}}) \to 0$ when $S_{\mathrm{perf}} \gg S_{\mathrm{leak}}$, establishing~(i).

For~(ii), consider a policy producing compliant completions ($c = 0$, low performance $r_{\mathrm{low}}$) and leaked completions ($c > 0$, high performance $r_{\mathrm{high}} > r_{\mathrm{low}}$).
Under $r_{\mathrm{leak}}$ the compliant type receives positive advantage; under $r_{\mathrm{perf}}$ the leaked type does.
The gradients push probability mass in opposing directions, giving $\langle g_{\mathrm{leak}}, g_{\mathrm{perf}} \rangle < 0$.

Under the mode gate, $g_{\mathrm{TEMPO}} = \mathbf{1}[L{=}0]\,g_{\mathrm{perf}} + \mathbf{1}[L{=}1]\,g_{\mathrm{leak}}$, so each update optimizes a single objective with homogeneous advantages, eliminating both pathologies.
\end{proof}

\subsection{Exact penalty property}

\begin{remark}[Exact penalty]
\label{rem:exact-penalty}
The mode gate implements an exact penalty without tunable dual variables.
The expected per-group reward decomposes as
$J_{\mathrm{TEMPO}} = p_0\,\mathbb{E}[r_{\mathrm{eff}} \mid L{=}0] + (1{-}p_0)\,\mathbb{E}[r_{\mathrm{leak}} \mid L{=}1]$.
For any $q > 0$, the opportunity cost relative to the feasible optimum ($q = 0$) is
$J_{\mathrm{TEMPO}}|_{q=0} - J_{\mathrm{TEMPO}} \geq (1{-}p_0)\bigl(\mathbb{E}[r_{\mathrm{eff}} \mid L{=}0] - \mathbb{E}[r_{\mathrm{leak}} \mid L{=}1]\bigr) > 0$,
since $\mathbb{E}[r_{\mathrm{eff}} \mid L{=}0] > \mathbb{E}[r_{\mathrm{leak}} \mid L{=}1]$ for any non-trivial policy and $1 - p_0 = 1 - (1{-}q)^G \to 1$ as $G \to \infty$.
No dual variable requires tuning; at feasibility ($q = 0$), $J_{\mathrm{TEMPO}} = \mathbb{E}[r_{\mathrm{eff}}]$, recovering pure performance optimization.
\end{remark}

\subsection{Reward hacking resistance}

\begin{proposition}[Reward hacking resistance]
\label{prop:hacking}
Accepting leakage to boost performance is strictly suboptimal under the mode gate.
Under a Lagrangian formulation $V_r - \mu V_c$, any finite $\mu$ allows the policy to trade $\Delta c$ leakage for $\Delta r > \mu \Delta c$ performance.
Under the mode gate, the marginal value of any performance improvement is multiplied by $(1{-}q)^G$, which decays exponentially in $G$ for $q > 0$; no finite performance gain can compensate.
\end{proposition}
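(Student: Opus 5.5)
The plan is to make the informal statement of Proposition~\ref{prop:hacking} precise by comparing two policies at the same instance: a \emph{compliant} policy with $q=0$, and a \emph{hacking} policy that accepts per-sample leakage probability $q>0$ in exchange for a higher conditional performance reward. I will write both objectives in the mode-gated form~\eqref{eq:objective}. That is, $J = p_0\,\E[r_{\mathrm{eff}}\mid L{=}0] + (1-p_0)\,\E[r_{\mathrm{leak}}\mid L{=}1]$ with $p_0=(1-q)^G$. I will then show that the hacking policy's advantage in $r_{\mathrm{eff}}$ enters only through the factor $p_0$. It follows that the hacking policy gains at most $(1-q)^G\,\Delta r$, while it loses at least $(1-p_0)$ times the gap between performance-mode and leakage-mode rewards. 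This is the same comparison that appears in Remark~\ref{rem:exact-penalty}.

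The first step handles the Lagrangian contrast. For $V_r-\mu V_c$ with finite $\mu$, a perturbation $\theta\to\theta'$ changes the objective by $\Delta r-\mu\Delta c$. This is positive whenever $\Delta r>\mu\Delta c$, so leakage is profitable in that regime. This part is immediate.

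The second step is the mode-gate bound. I will bound the performance term of the hacking policy by $p_0 r^\star$, where $r^\star\le 1$ is the largest attainable clean-conditional reward. I will bound its leakage term by $(1-p_0)\,\E[r_{\mathrm{leak}}\mid L{=}1]$. Conditional on $L=1$, at least one completion has $r_{\mathrm{leak}}\le e^{-1/2}$, so the group-mean leakage reward is at most $1-(1-e^{-1/2})/G$. Let $r_c$ be the compliant policy's value. The hacking policy is then strictly worse whenever
\[
p_0\,(r_c+\Delta r) + (1-p_0)\,\bar r_{\mathrm{leak}} < r_c .
\]
Rearranging gives the condition $p_0\,\Delta r < (1-p_0)(r_c-\bar r_{\mathrm{leak}})$. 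Because $\Delta r\le 1$ and $p_0=(1-q)^G\to 0$ exponentially in $G$, the left side vanishes while the right side tends to $r_c-\bar r_{\mathrm{leak}}$. That limit is positive under the non-triviality hypothesis used in Remark~\ref{rem:exact-penalty}. So for $G$ large enough relative to $q$, no performance gain $\Delta r\le 1$ compensates. This is the sense in which the marginal value of performance is multiplied by $(1-q)^G$: the derivative of $J$ with respect to $\E[r_{\mathrm{eff}}\mid L{=}0]$ is exactly $p_0$.

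The main obstacle is that the claim ``no finite performance gain can compensate'' holds only when $r_c > \E[r_{\mathrm{leak}}\mid L{=}1]$ and $G$ is large relative to $1/q$. For small $q$ and fixed $G$, $p_0\approx 1-Gq$ and the comparison becomes delicate. For that case I would instead use the fact that $r_{\mathrm{eff}}\in[0,1]$ is bounded. I will state the result with that explicit hypothesis, in the same form as Remark~\ref{rem:exact-penalty}, rather than claim it unconditionally.
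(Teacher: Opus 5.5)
Your route is the paper's route made quantitative. The paper's proof is only the observation that a policy with $q>0$ reaches performance mode with probability $(1-q)^G$, so the marginal value of $\E[r_{\mathrm{eff}}\mid L{=}0]$ is scaled by a factor that vanishes as $G\to\infty$. Your comparison $p_0\,\Delta r<(1-p_0)(r_c-\bar r_{\mathrm{leak}})$ is the right way to make this precise, and you are right that it only covers $G$ large relative to $1/q$.

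The gap is the step where you treat $r_c>\E[r_{\mathrm{leak}}\mid L{=}1]$ as a mild non-triviality hypothesis. Your bound $\bar r_{\mathrm{leak}}\le 1-(1-e^{-1/2})/G$ tends to $1$, so it gives no positivity. The exact value shows the hypothesis typically fails. Let $\kappa=\E[e^{-c/2}\mid c>0]\le e^{-1/2}$ and $m=\E[e^{-c/2}]=1-q(1-\kappa)$. An all-clean group has mean leakage reward $1$, so $m=p_0+(1-p_0)\,\E[\bar r_{\mathrm{leak}}\mid L{=}1]$. Hence $J_h=m-p_0(1-R_h)$ with $R_h=\E_h[r_{\mathrm{eff}}\mid L{=}0]$, while $J_c=r_c$. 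If leaking leaves the clean completions unchanged ($R_h=r_c=r$), then
\[
J_h-J_c=(1-r)\bigl(1-(1-q)^G\bigr)-q(1-\kappa)=q\bigl[(1-r)G-(1-\kappa)\bigr]+O(q^2).
\]
This is positive for small $q$ whenever $r<1-(1-\kappa)/G$; for $G=12$, any $r<11/12$ suffices. For instance, $r=0.6$ and $\kappa=e^{-1/2}$ give $J_h>J_c$ for every $q\in(0,1]$, by concavity in $q$ and $f(1)=\kappa-r>0$.

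The mechanism is that a clean completion earns $r_{\mathrm{leak}}=1\ge r_{\mathrm{eff}}$ in leakage mode (up to $1.07$ with the bonuses). A little leakage therefore moves groups into a mode where their clean members are paid more. As $G\to\infty$ your hypothesis reduces to $r_c>1-q(1-\kappa)$, which excludes exactly the small-$q$ trade the proposition is about. Appealing to boundedness of $r_{\mathrm{eff}}$ cannot rescue the fixed-$G$ case either, since there the sign is that of $(1-r)G-(1-\kappa)$.

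So your conditional statement is valid, but its hypothesis carries the whole claim. The paper's argument silently relies on the same inequality (asserted without proof in Remark~\ref{rem:exact-penalty}) and does not close the gap either. What holds unconditionally is the narrower fact $\partial J_{\mathrm{TEMPO}}/\partial\,\E[r_{\mathrm{eff}}\mid L{=}0]=(1-q)^G$. To get strict suboptimality of leaking, you must assume, or build into the reward, that leakage-mode rewards sit below the performance-mode range. For example, with $r_{\mathrm{leak}}-1\le 0$ in place of $r_{\mathrm{leak}}$, the same identity gives $J_h-J_c=p_0R_h-r_c-q(1-\kappa)$. This is negative for every $q>0$ and every $G$ when $R_h\le r_c$, and negative for $G$ large otherwise.
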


\begin{proof}
A policy with $q > 0$ accesses performance mode with probability $(1{-}q)^G$, which decreases exponentially in $G$.
The marginal benefit of any performance improvement is multiplied by this factor.
For $G$ large, $(1{-}q)^G \to 0$: no finite performance gain can compensate for the exponentially vanishing access to performance mode.
Therefore accepting leakage is strictly suboptimal for any $q > 0$.
\end{proof}

\subsection{Quality gates}
\label{sec:app_quality_gates}

Three multiplicative quality gates attenuate the effective reward in performance mode (Equation~\ref{eq:reff}).
Each gate maps a per-completion signal to a bounded range.

\paragraph{Coverage Gate.}
The coverage gate rewards completions whose reasoning cites a large fraction of their evidence list:
\begin{equation}
  \label{eq:ggate}
  g(\mathrm{cov}) = \max\!\left(\mathrm{cov},\; c_{\mathrm{floor}}\right),
\end{equation}
where $\mathrm{cov} = |\{e_k : e_k \text{ cited in reasoning}\}| / |\{e_k\}|$ is the coverage fraction (Section~\ref{sec:eval_protocol}) and $c_{\mathrm{floor}} = 0.20$.
The floor ensures that completions with zero coverage still receive 20\% of the performance reward, preserving gradient signal for improving citation alongside accuracy.

\paragraph{Evidence Diversity Gate.}
The evidence diversity gate penalizes template collapse---the tendency for the policy to converge on a fixed, minimal number of evidence items:
\begin{equation}
  \label{eq:dgate}
  d(\mathrm{count}) = \min\!\left(\frac{\mathrm{count}}{e_{\mathrm{target}}},\; 1\right),
\end{equation}
where $\mathrm{count}$ is the number of distinct evidence items and $e_{\mathrm{target}} = 8$.
A completion producing 4 items receives $d = 0.5$, halving its quality credit.
The target is calibrated to the base model's natural output: median evidence count at initialization is 7--9 items across tasks.

\paragraph{Reasoning Quality Gate.}
The reasoning quality gate penalizes shallow reasoning that provides no analytical depth beyond restating evidence:
\begin{equation}
  \label{eq:wgate}
  w(\mathrm{words}) = \min\!\left(\frac{\mathrm{words}}{w_{\mathrm{target}}},\; 1\right),
\end{equation}
where $\mathrm{words}$ is the word count of the reasoning paragraph and $w_{\mathrm{target}} = 120$.
The target is set below the base model's median reasoning length (150--170 words), so the gate is transparent for outputs of normal length and activates only when reasoning degrades below a minimum acceptable depth.

\paragraph{Interaction During Training.}
The effective performance-mode reward is $r_{\mathrm{eff}} = r_{\mathrm{perf}} \cdot g(\mathrm{cov}) \cdot d(\mathrm{ev}) \cdot w(\mathrm{reason})$.
The evidence diversity and reasoning quality gates provide direct gradient signal against evidence count collapse and reasoning degradation.
In leakage mode, the quality bonus uses $d$ but omits $w$: reasoning length in leakage mode is less informative because the model may produce shorter reasoning when actively avoiding leaked claims, which is a valid strategy.

\paragraph{Leakage-Mode Bonuses.}
Clean completions ($n_{\mathrm{leak},i} = 0$) within a group that is still in leakage mode receive small additive bonuses to encourage maintaining evidence quality:
\begin{equation}
  \label{eq:covbonus}
  b_{\mathrm{cov},i} = w_{\mathrm{cov}} \cdot g(\mathrm{cov}_i) \cdot \mathbf{1}[\text{clean}_i],
  \qquad
  b_{\mathrm{qual},i} = w_{\mathrm{qual}} \cdot d(\mathrm{ev}_i) \cdot \mathbf{1}[\text{clean}_i],
\end{equation}
where $w_{\mathrm{cov}} = 0.05$ and $w_{\mathrm{qual}} = 0.02$.
The total leakage-mode reward for completion~$i$ is:
\begin{equation}
  \label{eq:leak_total}
  r_{\mathrm{leak,total},i} = r_{\mathrm{leak},i} + b_{\mathrm{cov},i} + b_{\mathrm{qual},i}.
\end{equation}
GRPO computes within-group z-scored advantages from $r_{\mathrm{leak,total}}$.
Because $b_{\mathrm{cov}}$ and $b_{\mathrm{qual}}$ are small ($\leq 0.07$ combined) relative to the leakage reward spread ($r_{\mathrm{leak}}$ ranges from $\exp(-0.5 \cdot 10) \approx 0.007$ for heavily leaked completions to $1.0$ for clean ones), leakage reduction dominates the advantage ranking while the bonuses provide a secondary signal encouraging clean completions to maintain evidence quality.

\subsection{Format failure penalty}
\label{sec:app_format_penalty}

The format penalty is computed dynamically from group statistics to ensure parse failures always receive an advantage strictly below the worst valid completion, preventing the optimizer from rewarding ill-formed outputs.
The penalty adapts to three cases depending on the number of valid completions $n_{\mathrm{valid}}$ in the group.

When $n_{\mathrm{valid}} = 0$, all advantages are set to zero and the group is skipped (no gradient contribution).
When $n_{\mathrm{valid}} = 1$, the single valid completion receives advantage zero and all parse failures receive a fixed negative floor $-m$, where $m = 1.0$ is the format margin.
When $n_{\mathrm{valid}} \geq 2$, the penalty is derived from the group's advantage distribution:
\begin{equation}
  \label{eq:fpen}
  f_{\mathrm{pen}} = \max\!\bigl(\min(\mathbf{a}) - m,\; -\alpha \cdot \max(\mathbf{a})\bigr),
\end{equation}
where $\mathbf{a}$ is the vector of valid-completion advantages, $m = 1.0$ is the format margin, and $\alpha = 5.0$ is the maximum negative-to-positive ratio.
The ratio cap $-\alpha \cdot \max(\mathbf{a})$ prevents the format penalty from dominating the advantage distribution when valid completions receive small positive advantages, which would otherwise cause the optimizer to spend most of its gradient budget on format compliance rather than leakage and accuracy.

\subsection{Overlong penalty shaping}
\label{sec:app_overlong}

Parse-failure completions that are very long receive an attenuated penalty, because these outputs typically contain useful reasoning that happens to violate the JSON schema at the end rather than being entirely degenerate:
\begin{equation}
  \label{eq:overlong}
  A_i \leftarrow A_i \cdot \max\!\bigl(1 - \delta \cdot \ell_i,\; s_{\mathrm{floor}}\bigr),
\end{equation}
where $\ell_i = \min(1, |\mathrm{text}_i| / L_{\mathrm{max}})$ is the normalized text length, $\delta = 0.5$ is the decay rate, $L_{\mathrm{max}} = 32{,}000$ characters, and $s_{\mathrm{floor}} = 0.3$ is the minimum scaling factor.
A maximum-length parse failure receives 30\% of the base format penalty, while short parse failures receive the full penalty.

\subsection{Design rationale and evolution}
\label{sec:app_design_rationale}

The final reward design (v4) emerged through four iterations, each addressing failure modes observed in its predecessor.

\paragraph{V1 (Weighted Linear Combination).}
The initial design combined leakage and performance rewards into a single scalar advantage with cosine-annealed weights ($w_{\mathrm{leak}}$ from 0.95 to 0.80).
Coverage was logged but did not enter the reward.
Within 50 training steps, coverage collapsed from 88\% to 53\%: the model learned to fabricate vague, safe-sounding evidence with pre-cutoff dates while minimizing the number of citable items.
The parse-failure rate spiked from 2\% to over 90\% by step 200, because malformed outputs received a single blended penalty that was too weak to override the leakage-reduction gradient.

\paragraph{V2 (Coverage-Gated Two-Mode).}
To address coverage collapse, v2 introduced the two-mode structure (Section~\ref{sec:reward}): performance-mode reward was made contingent on a coverage gate $g(\mathrm{cov})$, and a small coverage bonus was added in leakage mode.
This stabilized coverage but exposed a new failure: on a different model (gpt-oss-20b, salary task), training degenerated from step~35 and halted at step~63.
Root causes were (a)~mode-specific weight coefficients that dampened the gradient signal, (b)~an unbounded format penalty that reached 11:1 to 99:1 negative-to-positive ratios, and (c)~a uniform penalty for near-miss parse failures (long, mostly correct outputs with a trailing JSON error) versus fully degenerate outputs.

\paragraph{V3 (Degeneration-Resistant).}
v3 removed the mode-specific weight coefficients, introduced the ratio-capped format penalty (Equation~\ref{eq:fpen}, $\alpha = 5.0$), and added overlong penalty shaping (Equation~\ref{eq:overlong}) to distinguish near-miss from degenerate failures.
Temperature was reduced from 0.8 to 0.6 and the PPO clip high was raised from 1.4 to 2.0.
Training no longer degenerated, but a new failure emerged on Qwen3.5-35B: template collapse, where evidence count dropped from 7--9 to 3--5 items and reasoning length from 150+ to 60--80 words over training.
The v3 reward had no mechanism to penalize this quality degradation because $r_{\mathrm{perf}}$ depended only on prediction accuracy.

\paragraph{V4 (Quality-Gated, Final).}
v4 added the multiplicative quality gates---evidence diversity $d$ (Equation~\ref{eq:dgate}), reasoning quality $w$ (Equation~\ref{eq:wgate})---and the leakage-mode quality bonus (Equation~\ref{eq:covbonus}) to the performance-mode and leakage-mode rewards respectively.
A masked per-token KL penalty was added to constrain policy drift.
These additions eliminated template collapse: evidence count and reasoning length remained within one standard deviation of the base model's distribution across all six training runs.

\paragraph{Component-Level Rationale.}
The exponential leakage reward (Equation~\ref{eq:rleak}) was chosen over a binary clean/leaked indicator because the exponential form provides graded signal: reducing leaked claims from many to fewer yields a larger reward change than a binary switch, creating a stronger gradient early in training when most completions contain multiple leaked claims.

The two-mode advantage structure separates conflicting optimization objectives.
In multi-objective GRPO, naively combining leakage and prediction rewards into a single advantage can cause one signal to dominate depending on their relative scales~\citep{ichihara2025mogrpo}.
The mode split ensures each group optimizes a single objective, with the strict binary condition ($n_{\mathrm{leak},i} = 0\;\forall\,i$; Equation~\ref{eq:mode}) acting as a hard gate between the two.

The quality gate thresholds ($e_{\mathrm{target}} = 8$, $w_{\mathrm{target}} = 120$) are calibrated from the base model's output distribution.
Setting the evidence target at the base model's median output ensures the gate penalizes only deterioration below baseline, not failure to exceed an aspirational standard.

\section{Dataset construction}
\label{sec:app_dataset}

Every instance is designed so that the ground-truth answer was publicly available before the earlier model's knowledge cutoff (approximately March 2025 for Qwen3-30B-A3B).
The control sets require answers published after the later model's cutoff (approximately February 2026 for Qwen3.5-35B-A3B), establishing a genuine-prediction baseline where the model lacks the outcome in parametric memory.
All per-instance cutoff dates are day-level precise (YYYY-MM-DD), sourced from official databases rather than approximated.

\paragraph{Stock Ranking.}
Adjusted closing prices are retrieved from Yahoo Finance via the \texttt{yfinance} Python library.\footnote{\url{https://pypi.org/project/yfinance/}}
The company universe consists of S\&P 500 constituents grouped by GICS sector into pools of approximately 100 tickers each.
Four six-month training windows capture distinct macroeconomic regimes:
W1 (January--June 2022; Russia-Ukraine conflict, energy shock),
W2 (March--August 2023; SVB collapse, NVIDIA AI surge),
W3 (July--December 2020; COVID-19 recovery, vaccine rollout), and
W4 (January--June 2024; AI momentum, rate-cut expectations).
Each window generates 300 questions for a total of 1{,}200 training instances.
Within each window, 80\% of questions sample companies from sectors most affected by the corresponding macroeconomic event, and the remaining 20\% draw from other sectors, concentrating signal where return dispersion is highest.
Five companies are sampled without replacement per question, with deduplication enforcing unique groups (random seed 42).
The evaluation window covers December 2019 to June 2020 (COVID-19 onset, 100 instances), temporally disjoint from all training windows.
The control window covers February--March 2026 (100 instances), using real market data that neither model encountered during pre-training (random seed 99).
Information Technology is overrepresented in training because it appears as an affected sector in three of four windows; the 80/20 sampling strategy accepts this in exchange for higher signal density.

\paragraph{Salary Prediction.}
Free-agent contract data are scraped from Spotrac,\footnote{\url{https://www.spotrac.com}} the standard public reference for professional sports contracts.
Three sports are included for training: NFL, MLB, and NHL.
Only players who changed teams with AAV above sport-specific thresholds (\$5M for NFL and MLB, \$3M for NHL) are retained, filtering out minimum-salary signings with negligible variation.
The raw Spotrac tables provide the contract year but not the exact announcement date; precise signing dates (YYYY-MM-DD) are added via the Perplexity \texttt{sonar} search-enabled model through OpenRouter, querying batches of 15 players per request.
Fallback dates for failed lookups use known free-agency opening dates: NFL league-specific dates per year (e.g., 2024-03-13), NHL July~1, and MLB December~1 of the prior year.
All entries with signing dates on or after March~1, 2025 are excluded; the 15 MLB 2025 entries that remain were signed in November--December 2024, consistent with the MLB off-season calendar.
The training set spans late 2017--2024 across three sports, totaling 1{,}066 instances (524 NFL, 374 MLB, 168 NHL).
Evaluation uses 137 NBA free-agent contracts from 2019--2024---a held-out sport not seen during training.
The control set comprises 26 transactions from after February~1, 2026: 13 MLB late signings, 9 NHL trade-deadline acquisitions, and 4 NFL franchise tags and pre-free-agency trades, compiled from FOX Sports, ESPN, and NHL.com.

\paragraph{Legal Case Prediction.}
U.S. Supreme Court cases from October Terms 2014--2023 (decisions spanning 2015--2024) are retrieved from the Oyez API,\footnote{\url{https://api.oyez.org}} maintained by Cornell LII and Chicago-Kent College of Law.
The outcome is determined from the \texttt{winning\_party} field, mapped to PETITIONER or RESPONDENT via name matching against the listed parties; cases without a clear binary disposition are excluded, yielding 497 SCOTUS cases.
Federal appellate opinions from the 1st through 11th Circuits, D.C. Circuit, and Federal Circuit (2017--2024) are retrieved from the CourtListener REST API v4,\footnote{\url{https://www.courtlistener.com/api/rest/v4/}} sorted by citation count within each year to prioritize substantive, well-cited opinions.
For each candidate, the opinion PDF is downloaded and the first 12 pages are extracted via \texttt{pdfplumber} (capped at 8{,}000 characters).
A structured extraction prompt sent to Gemini 2.0 Flash (temperature 0) requests the parties, factual background, legal holding, disposition (PETITIONER if reversed/vacated, RESPONDENT if affirmed), and panel vote split; cases where the model cannot determine a clear binary outcome are excluded, yielding 397 appellate cases.
The combined 894 training cases and 100-instance evaluation set are split by stratified sampling: approximately 10 cases per decision year (2014--2024) with outcome balance enforced within each year stratum.
Three additional cases are removed from training to eliminate case-name overlap arising from different proceedings sharing a party name.
The training set exhibits a PETITIONER skew (63.5\% vs.\ 36.5\%) because the Oyez data is biased toward cases where the Supreme Court grants certiorari to reverse; the evaluation set is intentionally balanced (46\% PETITIONER, 54\% RESPONDENT).
The control set consists of 90 federal appellate decisions filed between February 25 and March 6, 2026, extracted with the same CourtListener and Gemini pipeline.

\paragraph{Quality Assurance.}
Automated verification (via \texttt{scripts/verify\_dataset.py}) confirms that all 3{,}713 instances across every split conform to the temporal design.
All cutoff dates are in YYYY-MM-DD format with zero violations: no training or evaluation date falls on or after March~1, 2025, and no control date falls before February~1, 2026.
Cross-split integrity checks confirm zero case-name overlap between legal training and evaluation sets, zero player-year overlap between salary splits, and temporal disjointness between stock training and evaluation windows.
No single decision year exceeds 14\% of the legal training set, and salary training spans 2018--2025 across three sports with calendar-month diversity from sport-specific free-agency schedules.

\section{TEMPO implementation details}
\label{sec:app_implementation}

\subsection{Training algorithm}
\label{sec:app_algorithm}

Algorithm~\ref{alg:tempo} details the TEMPO training loop.
At each step, the policy generates $G$ completions per prompt in the batch. Each completion is parsed into its structured components (evidence, reasoning, prediction), and the date-plausibility verifier counts leaked claims.
The group-level mode gate then determines whether each prompt's group enters leakage mode or performance mode: if any completion in the group contains a leaked claim, the entire group receives leakage-mode rewards (exponential penalty on leak count); only when all $G$ completions are clean does the group enter performance mode, where rewards combine task accuracy with quality gates for coverage, evidence diversity, and reasoning depth.
Advantages are z-scored within each group and calibrated across prompts via a batch-level baseline.
A masked KL penalty constrains drift from the base model on content tokens only.
The clipped GRPO update then adjusts the policy parameters.

\begin{algorithm}[h]
\caption{TEMPO training loop. Each step samples completions, assigns mode-specific rewards via the group-level gate (Eq.~\ref{eq:mode}), and updates the policy with clipped GRPO and a masked KL penalty.}
\label{alg:tempo}
\begin{algorithmic}[1]
\Require Dataset $\mathcal{D} = \{(x_k, t_k, y_k)\}$, base policy $\pi_{\mathrm{ref}}$, group size $G$, batch size $B$, steps $T$, learning rate $\eta$
\State Initialize $\pi_\theta \leftarrow \pi_{\mathrm{ref}}$ with LoRA adapters
\For{$\text{step} = 1$ \textbf{to} $T$}
    \State Sample batch $\mathcal{B} = \{(x_k, t_k, y_k)\}_{k=1}^{B}$ from $\mathcal{D}$
    \For{\textbf{each} $(x, t, y) \in \mathcal{B}$}
        \State Sample $G$ completions $\{y_1, \ldots, y_G\} \sim \pi_\theta(\cdot \mid x)$
        \State Parse each $y_i \to (\mathcal{E}_i, \mathcal{R}_i, \hat{y}_i)$
        \State Compute $n_{\mathrm{leak},i} = \lvert\{j : s_{ij}^{*} > t\}\rvert$ \Comment{via date-plausibility verifier}
        \If{$n_{\mathrm{leak},i} = 0\;\forall\, i$}
        \Comment{Performance mode (Eq.~\ref{eq:mode})}
            \State $r_{\mathrm{eff},i} = r_{\mathrm{perf},i} \cdot g(\mathrm{cov}_i) \cdot d(\mathrm{ev}_i) \cdot w(\mathrm{reason}_i)$
            \State $A_i \leftarrow \mathrm{zscore}(r_{\mathrm{eff}}) + \lambda\,(r_{\mathrm{eff},i} - \bar{r}_{\mathrm{eff}}^{\,\mathrm{batch}})$
        \Else \Comment{Leakage mode}
            \State $r_{\mathrm{leak},i} = \exp(-0.5 \cdot n_{\mathrm{leak},i})$
            \State $A_i \leftarrow \mathrm{zscore}(r_{\mathrm{leak}}) + \lambda\,(r_{\mathrm{leak},i} - \bar{r}_{\mathrm{leak}}^{\,\mathrm{batch}})$
        \EndIf
    \EndFor
    \State Apply KL penalty: $A_i \leftarrow A_i + \beta_{\mathrm{KL}}\,\mathbf{m}_i \odot (\bar{\delta} - \delta_i)$ \Comment{$\delta_i = \log\frac{\pi_\theta}{\pi_{\mathrm{ref}}}$, $\bar{\delta}$ = batch mean}
    \State $\theta \leftarrow \theta + \eta \cdot \frac{1}{|\mathcal{B}|\,G} \displaystyle\sum_{k,i} \mathrm{clip}\!\bigl(\tfrac{\pi_\theta}{\pi_{\mathrm{old}}}\bigr)\, A_i \, \nabla_\theta \log \pi_\theta(y_i \mid x_k)$
\EndFor
\State \Return $\pi_\theta$
\end{algorithmic}
\end{algorithm}

\subsection{Training pipeline}
\label{sec:app_training_pipeline}

Each training step samples $G{=}12$ completions per prompt at temperature 0.6 with a maximum budget of 8{,}192 tokens.
A \texttt{<think>} prefill is injected so the model emits a chain-of-thought before the structured JSON answer.
The system prompt enforces a three-step procedure---evidence with source dates, reasoning with inline citations, prediction---and specifies a JSON output schema.
Task-specific WRONG/RIGHT examples in the prompt clarify the distinction between announcement dates and period-end dates to reduce source-date errors.

Completions are scored through a four-phase reward pipeline.
Phase~1 parses the JSON output via three fallback strategies (direct parse, markdown fence extraction, greedy regex) and validates against a Pydantic schema; parse failures receive zero rewards.
Phase~2 extracts claims from the evidence list and deduplicates by lowercased text.
Phase~3 sends claims in chunks of 80 to a date-plausibility verifier (Gemini 3 Flash Preview)\footnote{Accessed via OpenRouter: \url{https://openrouter.ai/api/v1}} that checks whether each declared source date is temporally consistent with the claim type; the leakage verdict is determined deterministically by comparing the effective date against the instance cutoff.
Phase~4 computes coverage (fraction of evidence items cited) and task-specific prediction accuracy programmatically.

Training instances are shuffled randomly per epoch with a deterministic seed (\texttt{data\_seed + epoch}), and batches of $B{=}8$ prompts are drawn sequentially from the shuffled order.

Validation runs at 15 evenly-spaced steps across the training run using greedy decoding ($G{=}1$, temperature 0.0) with the same scoring pipeline as training.
The checkpoint with the lowest validation OLR is saved as the best checkpoint; a final checkpoint is saved unconditionally at the end of training.

\paragraph{Model Adaptation.}
Both models are fine-tuned via low-rank adaptation (LoRA) with rank~32.
Alpha, target modules, and weight merging are managed by the Tinker training API,\footnote{\url{https://pypi.org/project/tinker/}} which handles adapter injection into all attention projection layers.
The optimizer is Adam with $\beta_1 = 0.9$, $\beta_2 = 0.95$, $\varepsilon = 10^{-8}$, and a constant learning rate of $2 \times 10^{-5}$.

\paragraph{Compute.}
All training and inference runs use the Tinker cloud API with four concurrent workers.
Wall-clock training time is approximately 6--8 hours per model-task combination for the stock and salary tasks (211--213 steps) and 3--4 hours for legal (69--100 steps).
Table~\ref{tab:hyperparameters} lists the full hyperparameter configuration.

\input{tables/hyperparameters}

\subsection{Stability techniques}
\label{sec:app_stability}

Two mechanisms stabilize GRPO training.

\paragraph{Asymmetric clipping.}
Following DAPO~\citep{yu2025dapo}, we use asymmetric PPO clipping with $\epsilon_{\mathrm{low}} = 0.1$ (clip ratio lower bound $1 - \epsilon_{\mathrm{low}} = 0.9$) and $\epsilon_{\mathrm{high}} = 1.0$ (clip ratio upper bound $1 + \epsilon_{\mathrm{high}} = 2.0$).
The wider upper bound encourages exploration of high-advantage completions (particularly those that reduce leakage count), while the tighter lower bound prevents catastrophic probability collapse on previously favored outputs.

\paragraph{Masked KL penalty.}
A per-token KL penalty against the frozen base model $\pi_{\mathrm{ref}}$ constrains policy drift:
\begin{equation}
  A_i \leftarrow A_i + \beta_{\mathrm{KL}}\,\mathbf{m}_i \odot (\bar{\delta} - \delta_i),
\end{equation}
where $\delta_i = \log \pi_\theta(y_i \mid x) - \log \pi_{\mathrm{ref}}(y_i \mid x)$ is the per-token log-ratio, $\bar{\delta}$ is its batch mean, and $\mathbf{m}_i$ is a binary mask that excludes structured output tokens (JSON delimiters, field names) from the penalty.
The coefficient $\beta_{\mathrm{KL}} = 0.05$ (Table~\ref{tab:hyperparameters}) is set low enough to preserve adaptation speed while preventing the mode-collapse failure mode observed in early experiments (Appendix~\ref{sec:app_design_rationale}).

\subsection{Inference configuration}
\label{sec:app_inference_config}

At evaluation time, all methods use greedy decoding (temperature~0.0) with a single completion per instance ($G{=}1$) and a maximum budget of 8{,}192 tokens.
The scoring pipeline is identical to the training reward pipeline: parse, extract claims, verify leakage, compute coverage and task-specific accuracy.
This ensures that training and evaluation metrics are directly comparable.
The base model, best checkpoint (lowest validation OLR), and final checkpoint are evaluated for each model-task combination; the main table reports whichever checkpoint achieved the best combination of low OLR and high parse rate.

\section{Baseline implementations}
\label{sec:app_baselines}

The three baselines form a progressive intervention hierarchy, each adding one mechanism on top of the previous to test whether it suffices to suppress temporal leakage.
\textbf{Temporal Hint} relies solely on prompt-based temporal constraints---no retrieval and no verification---testing how the model's forecasting behavior changes when it is given temporal instructions from the prompt alone.
\textbf{RAG} adds adaptive retrieval with a hard date-cutoff filter, providing the model with focused pre-cutoff evidence to test how external valid information, even with a date filter, will affect the leakage and performance of the model's forecasting behavior.
\textbf{TimeSPEC} adds post-generation verification on top of RAG, using a supervisor to extract and verify claims and filter out leaked content, testing whether inference-time claim filtering achieves temporal compliance.
This progression isolates the contribution of each mechanism: if prompt constraints alone fail, does retrieval help? If retrieval introduces contamination, does verification fix it? The results (Section~\ref{sec:results}) show that none of these mechanisms is sufficient, motivating TEMPO's parametric approach.

\subsection{Temporal Hint (base agent)}
\label{sec:app_temporal_hint}

The Temporal Hint baseline is the simplest configuration and serves as the foundation for all other methods.
The evaluation model receives the task-specific system prompt and a user prompt containing the instance description, the reference (cutoff) date, and explicit temporal instructions: the model must reason using only information available before the cutoff date.
No retrieval, no post-generation verification, and no parameter updates are applied.
The system prompt includes structured output formatting instructions that define the evidence list, reasoning paragraph, and prediction fields.
All other methods inherit this prompt structure verbatim; differences arise solely from retrieval augmentation (RAG), post-hoc verification (TimeSPEC), or parameter updates (TEMPO).

\subsection{RAG baseline}
\label{sec:app_rag}

The RAG baseline augments the base model with retrieved pre-cutoff evidence at inference time without any parameter updates.
For each evaluation instance, Gemini 3 Flash Preview receives the user prompt and cutoff date and generates a set of task-adaptive search queries: 7--12 for stock ranking (which benefits from diverse financial signals) and 3--6 for salary and legal tasks.
Queries are constructed to target distinct aspects of the information needed, with the year included to anchor temporal relevance.

Retrieval uses the Perplexity Search API\footnote{\url{https://docs.perplexity.ai/api-reference/sonar-search}} with a hard \texttt{search\_before\_date\_filter} set to the instance cutoff date, returning up to 5 results per query.
Results are deduplicated by URL across all queries for the same instance.
The deduplicated results are truncated to a 3{,}000-token budget (estimated at 4 characters per token), selecting results in relevance order until the budget is exhausted.

Retrieved passages are formatted as numbered plain-text entries (title and snippet) and prepended to the user prompt.
Following the convention of retrieval-augmented generation methods, no temporal metadata, validity framing, or behavioral guidance is included in the context block---the model must determine temporal relevance from content alone, analogous to how the base and RL-trained variants rely on parametric memory.
The system prompt remains identical to the base method.
Scoring uses the same evaluation pipeline as all other methods.

\subsection{TimeSPEC baseline}
\label{sec:app_timespec}

TimeSPEC adapts the five-phase inference architecture from \citet{zhang2026leakscountcountmore} to the current experimental setting.
The original paper introduced this pipeline using Claude for all generation phases; this implementation substitutes Qwen (via the Tinker API) for generation, regeneration, and aggregation, Gemini 3 Flash Preview for supervision and verification, and the Perplexity Search API for retrieval.
The core architecture is otherwise unchanged.

\paragraph{Phase 0--1: Retrieval and Generation.}
These phases are identical to the RAG baseline described above: Gemini generates task-adaptive search queries, Perplexity retrieves results with a hard date filter set to the instance cutoff, and the deduplicated context is prepended to the user prompt.
The system prompt and generation model are byte-for-byte identical to the RAG condition.
Reusing the same retrieval and generation phases ensures that any difference between RAG and TimeSPEC is attributable solely to the post-generation verification pipeline.

\paragraph{Phase 2: Supervisor (Claim Extraction and Verification).}
The supervisor extracts atomic claims from the model's evidence list and reasoning text via a single Gemini call using the \texttt{CLAIM\_EXTRACTION\_PROMPT}.
Each claim is classified into the A1--A5/B1--B2 temporal taxonomy from \citet{zhang2026leakscountcountmore}:
A1--A3 are temporally verifiable assertions (discrete events, state measurements, publications) whose availability date can be determined via search;
A4 (outcome) and A5 (consequential) claims are deterministically leaked by taxonomy definition, as they describe the very result the model is predicting or its direct consequences;
B1 (background knowledge) and B2 (definitional) claims are deterministically clean, as they describe time-invariant facts.
This taxonomy-based classification reduces the number of claims requiring search verification, as only A1--A3 claims proceed to the search step.

For each A1--A3 claim, a search query is constructed directly from the claim text (e.g., ``When was this first publicly reported: \{claim\}'') without an additional LLM call.
The Perplexity Search API retrieves results \emph{without} a date filter, in contrast to the RAG retrieval phase which uses a hard date filter.
The absence of a date filter is essential: the verifier must determine when the information first became publicly available, which requires access to the full temporal record.
All (claim, search result) pairs for one instance are judged in a single batch Gemini call using the \texttt{TIMESPEC\_CLAIM\_VERIFY\_PROMPT}, which produces a deterministic verdict: $\texttt{event\_date} > \texttt{cutoff\_date} \Rightarrow \texttt{leaked}$.

\paragraph{Phase 3: Regenerator (Conditional).}
Regeneration runs only when the supervisor detects leaked claims.
The system prompt is byte-for-byte identical to the Base and RAG conditions.
A constraint block is appended to the user prompt listing claims to avoid as a plain numbered list---no temporal language (e.g., ``post-cutoff''), no corrected values, no corrected dates, and no explanation of why each claim should be avoided.
This design ensures the generation model receives no information about the temporal verification process; it sees only a list of facts it must not use.
At most one regeneration cycle is performed (\texttt{timespec\_max\_regen\_cycles=1}).

\paragraph{Phase 4: Resupervisor (Conditional).}
If regeneration occurred, the resupervisor applies the same claim extraction and verification pipeline as Phase~2 to the regenerated output.
Claims that appeared in the original output and were already verified in Phase~2 reuse their cached verdicts, avoiding redundant API calls.
Only newly introduced claims require fresh verification.

\paragraph{Phase 5: Aggregator.}
The aggregator always runs, including when no leaked claims were found, to enforce a closed-world constraint.
The system prompt is the original task system prompt extended with a single constraint paragraph: the model must select evidence \emph{exclusively} from a provided list of validated facts and must not introduce any facts, numbers, events, or entities not present in that list.
Validated claims---all non-leaked Group~A claims plus all Group~B claims---are presented as plain-text facts \emph{without} source dates, category labels, or any temporal metadata.
The model must infer its own \texttt{source\_date} fields from the fact text alone, identical to how all other baselines produce source dates.
This ensures the aggregator's information surface is restricted to verified-clean evidence and the standard task prompt.
The aggregator output is scored through the identical evaluation pipeline as all baselines.

\subsection{Fairness of comparison}
\label{sec:app_fairness}

For the comparison across methods to be reliable, we must ensure that no method gains an unfair informational advantage at generation time.
Table~\ref{tab:info_surface} verifies this: all four methods share the same parametric memory and task prompt; RAG and TimeSPEC additionally receive retrieved pre-cutoff passages, but no method exposes verification search results or temporal metadata to the generation model.
Crucially, the evaluation pipeline (leakage detection, performance scoring, coverage computation) is identical for all methods, so any performance difference reflects the method's behavior rather than measurement bias.

\begin{table}[h]
  \caption{Information surface at generation time. All methods share parametric memory and task prompt. RAG and TimeSPEC add retrieved passages; no method exposes verification results or temporal metadata to the generator. The identical evaluation pipeline ensures fair comparison.}
  \label{tab:info_surface}
  \centering\small
  \begin{tabular}{lcccc}
    \toprule
    Information source & Temp.\ Hint & RAG & TimeSPEC & TEMPO \\
    \midrule
    Parametric memory & \checkmark & \checkmark & \checkmark & \checkmark \\
    Task prompt + cutoff date & \checkmark & \checkmark & \checkmark & \checkmark \\
    Retrieved pre-cutoff passages & --- & \checkmark & \checkmark & --- \\
    Verification search results & --- & --- & --- & --- \\
    Temporal metadata on claims & --- & --- & --- & --- \\
    \midrule
    Identical evaluation pipeline & \checkmark & \checkmark & \checkmark & \checkmark \\
    \bottomrule
  \end{tabular}
\end{table}

The fairness constraint is most delicate for TimeSPEC, which runs an internal verification pipeline that could leak information to the generation model if not carefully isolated.
Four properties ensure this does not occur:
(1)~Qwen never sees search results from the verification phase---these are internal to the Gemini supervisor;
(2)~the regeneration constraint lists claims to avoid as plain text, with no temporal language, corrected values, or explanation of why each claim was flagged;
(3)~the aggregator receives validated facts without source dates, category labels, or metadata;
(4)~the claim extraction prompt uses only fields derivable from the user prompt.
RAG satisfies the constraint trivially (only pre-cutoff passages, no verification metadata), and Temporal Hint adds nothing beyond the prompt.
TEMPO's information surface is identical to Temporal Hint at inference because all temporal discipline is encoded in the model's parameters during training.

\section{Extended evaluation results}
\label{sec:app_dynamics}

This section provides empirical evidence beyond the main results table, including inference cost analysis, instance-level leakage and coverage distributions, training dynamics, and metric validation via control experiments.

\subsection{Inference efficiency}
\label{sec:app_efficiency}

\input{tables/inference_efficiency}

Table~\ref{tab:inference_efficiency} quantifies the per-instance inference cost of each method.
TEMPO requires exactly one forward pass---identical to the unmodified Temporal Hint baseline---because temporal discipline is encoded in the model's parameters rather than enforced by an external pipeline.
RAG adds a query-generation call and web retrieval on top of the base forward pass.
TimeSPEC builds on RAG with a multi-phase verification pipeline.
In the typical case (${\sim}$10 Group~A claims, no regeneration needed), TimeSPEC requires 3 LLM calls (generator, supervisor, aggregator) and ${\sim}$10 Perplexity searches (one per A1--A3 claim).
When regeneration triggers, the cost increases to 5 LLM calls (adding regenerator and resupervisor) plus additional searches for newly introduced claims.
TEMPO therefore matches TimeSPEC's leakage reduction (Table~\ref{tab:main_results}) at a fraction of the inference cost, with no external API dependencies at inference time.

\subsection{Instance-level leakage distribution}
\label{sec:app_leakage_dist}

\begin{figure}[h]
  \centering
  \includegraphics[width=\textwidth]{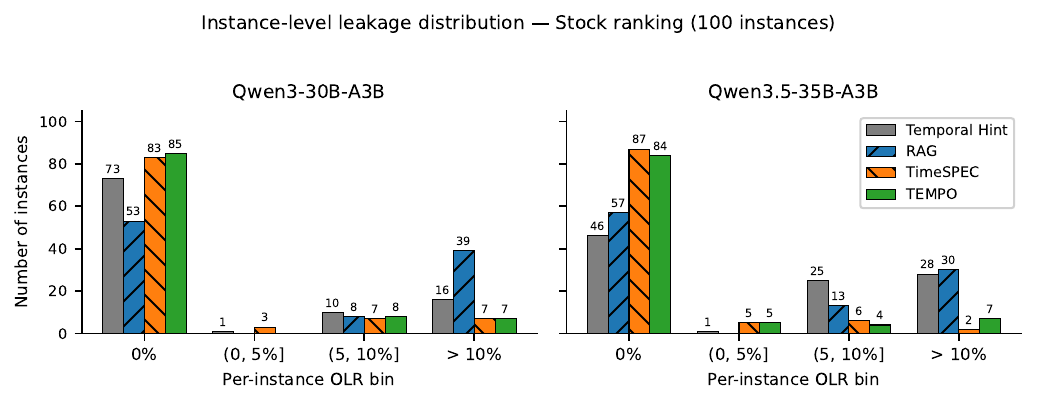}
  \caption{Instance-level OLR distribution on stock ranking (100 instances per condition). TEMPO concentrates instances at zero leakage while reducing the heavy-leakage tail. RAG amplifies leakage via retrieved post-cutoff content.}
  \label{fig:leakage_distribution}
\end{figure}

Figure~\ref{fig:leakage_distribution} disaggregates the mean OLR reported in the main table into per-instance leakage bins on the stock ranking task---the most leakage-sensitive of the three tasks, where pre-training data encodes pandemic-era market dynamics that directly answer the evaluation questions.
For Qwen3-30B, TEMPO produces 85 zero-leakage instances out of 100 (versus 73 for Temporal Hint), and critically eliminates the $(0, 5\%]$ bin entirely: residual leakage, when it occurs, stems from a small number of instances with moderate contamination rather than widespread low-level leakage.
The improvement is more pronounced for Qwen3.5-35B, where zero-leakage instances rise from 46 (Temporal Hint) to 84 (TEMPO), and the $>$10\% tail shrinks from 28 to 7 instances.
TimeSPEC achieves a similar concentration at zero leakage (comparable to TEMPO's distribution), confirming that both methods effectively suppress leakage at the instance level; the key difference between them is not leakage suppression but coverage (Section~\ref{sec:app_coverage_dist}).
RAG amplifies leakage: 39 and 30 instances exceed 10\% OLR for the two models, respectively, because retrieved financial news provides additional surface for post-cutoff information.

Salary and legal tasks exhibit lower baseline leakage (8.1\%/11.9\% and 2.2\%/2.1\% OLR), so the per-instance distribution is already concentrated near zero.
On these tasks, TEMPO's improvement manifests primarily as mean OLR reduction rather than a dramatic distributional shift, consistent with the main results.

\subsection{Evidence faithfulness}
\label{sec:app_coverage_dist}

\input{tables/coverage_distribution}

Table~\ref{tab:coverage_distribution} reports the fraction of instances exceeding 90\% and 80\% coverage thresholds, disaggregating the mean coverage in the main table to verify that TEMPO's advantage is not an artifact of a few high-coverage outliers.
On stock ranking, 98\% (30B) and 91\% (35B) of TEMPO instances exceed 90\% coverage, versus 27\% and 14\% for TimeSPEC.
On salary prediction, 80\% and 74\% of TEMPO instances exceed 90\% coverage, versus 21\% and 23\% for TimeSPEC.
These gaps confirm that TEMPO's evidence--reasoning link is preserved at the instance level: the model cites virtually every evidence item it generates because the evidence and reasoning are produced in a single coherent forward pass, whereas TimeSPEC's post-hoc regeneration decouples the two.

Legal coverage is lower across all methods because legal reasoning draws on procedural history and statutory interpretation that is harder to decompose into discrete citable evidence items.
TEMPO and TimeSPEC achieve comparable mean coverage on this task (65--73\%), with TEMPO showing a higher fraction above the 90\% threshold for Qwen3.5-35B (41\% versus 26\%).

\subsection{Validation trajectory}
\label{sec:app_coverage_quality}

Table~\ref{tab:val_trajectory} quantifies the training validation trajectory---metrics computed at intermediate checkpoints during training using greedy decoding on the validation split---from the base model (step~0) through the final training checkpoint.
Coverage increases by 3--13 percentage points on stock and salary for both models, confirming that TEMPO preserves and often improves evidence citation.
OLR drops by 35.6 percentage points for Qwen3-30B on stock (the highest-leakage condition) and by 0.8--7.4 points on other conditions where baseline leakage is lower.
Prediction accuracy remains stable across training for all six model-task combinations.

\input{tables/val_trajectory_summary}

\subsection{Control experiment validation}
\label{sec:app_control}

To validate that the OLR metric measures genuine temporal contamination rather than false positives, we evaluate the Temporal Hint baseline on control instances whose answers were published after February~2026---after both models' knowledge cutoffs.
On these instances, the model lacks parametric knowledge of the answer, so any detected leakage represents verifier noise rather than actual post-cutoff knowledge.

\input{tables/control_results}

Table~\ref{tab:control_results} reports the results.
Control OLR is near zero across all six conditions (0.0--0.7\%), confirming that the leakage metric is well-calibrated: when the model genuinely lacks post-cutoff knowledge, the verifier produces negligible false positives.
Control prediction accuracy is near chance for stock (0.48--0.50) and salary (0.45--0.55), confirming the model cannot predict outcomes absent from its parameters.
Legal control accuracy is higher (0.65--0.71), consistent with legal prediction relying on precedent and procedural reasoning available before the cutoff.

\section{Case studies}
\label{sec:app_cases}

This section presents detailed four-method comparisons on selected instances from each task.
For each instance, we show representative evidence items (with dates and leakage annotations), reasoning excerpts, the final prediction alongside the ground truth, and the task-specific performance score.
Instances are selected to illustrate how temporal leakage propagates through the evidence--reasoning--prediction chain and how each method handles it differently.

Each case study is organized as follows: a brief instance description, followed by one block per method showing key metrics (OLR, Performance, Coverage), evidence excerpts, reasoning, and prediction.
Analytical prose after each set of blocks traces (1)~which evidence items are leaked and how they enter the reasoning, (2)~whether the reasoning chain is grounded in the cited evidence (coverage), and (3)~how leakage or its absence affects prediction calibration.
Evidence excerpts are shown in reduced font size for compactness.
Each evidence item is prefixed by a parenthesized date indicating when the stated information became publicly available, as determined by our date-plausibility verifier.

\input{tables/qualitative_examples}

\section{Prompt specifications}
\label{sec:app_prompts}

We provide key prompts verbatim for reproducibility. Template variables are shown as \texttt{\{variable\}}. All prompts listed below are extracted directly from the codebase; minor formatting adjustments (line breaks, indentation) are made for readability.

\subsection{Task Prompts (Base, TEMPO, RAG, TimeSPEC)}
\label{sec:app_prompts_task}

The same system--user prompt pair is shared across Base, TEMPO, RAG generation, and the TimeSPEC generator. We show the Stock prompt in full; Salary and Legal follow an identical structure with task-specific adaptations noted below. All methods produce a structured JSON output with an \texttt{evidence} array (each item containing \texttt{id}, \texttt{fact}, and \texttt{source\_date}), a \texttt{reasoning} string, and a task-specific prediction field (\texttt{ranking} for stock, \texttt{predicted\_salary} for salary, \texttt{probability\_petitioner} for legal).

\begin{tcolorbox}[breakable, colback=gray!3, colframe=gray!60!black, fonttitle=\bfseries\small, title=Stock ranking --- System prompt]
\small
You are an expert stock analyst. Your task is to rank a set of companies by their expected stock return over a specified evaluation period.

\medskip
\textbf{STRICT TEMPORAL CONSTRAINT:}
You may ONLY use information that was publicly available BEFORE \texttt{\{cutoff\_date\}}. Do NOT use any knowledge of events, earnings reports, market movements, announcements, or developments from on or after \texttt{\{cutoff\_date\}}. If uncertain whether a piece of information predates the cutoff, omit it.

\medskip
\textbf{EVIDENCE INTEGRITY:} Evidence must be factually genuine. Do not fabricate numbers, dates, revenue figures, or events. Each evidence item must reflect a real, publicly known fact. If you are uncertain about a specific number, state the fact qualitatively rather than inventing a precise figure.

\medskip
You MUST follow this three-step procedure:

\textbf{STEP 1 --- EVIDENCE:} List every verifiable fact you will use. Each fact must be concrete and specific; each must have been publicly known before \texttt{\{cutoff\_date\}}. For EACH fact, include a \texttt{source\_date} field (YYYY-MM-DD) indicating when this information was FIRST PUBLICLY ANNOUNCED (earnings release date, press release date, SEC filing date). This must be the ANNOUNCEMENT date, NOT the period end. Write each fact as a plain statement; do not include citation markers.

\textbf{STEP 2 --- REASONING:} Write a reasoning paragraph that cites evidence by number (e.g.\ [1], [3]). You MUST cite every evidence item at least once, INLINE next to the analysis it supports. Do NOT introduce any new facts not listed in your evidence.

\textbf{STEP 3 --- PREDICTION:} Give your final ranking based solely on the reasoning above.

\medskip
OUTPUT FORMAT --- a single JSON object, no markdown fences:\\
\texttt{\{"evidence": [...], "reasoning": "...", "ranking": ["BEST", ..., "WORST"]\}}
\end{tcolorbox}

\begin{tcolorbox}[colback=gray!3, colframe=gray!60!black, fonttitle=\bfseries\small, title=Stock ranking --- User prompt]
\small
Sector: \texttt{\{sector\}}\\
Evaluation period: \texttt{\{start\_date\}} to \texttt{\{end\_date\}}\\
Companies to rank:\\
~~- \texttt{\{ticker\}} (\texttt{\{company\_name\}}), \ldots
\end{tcolorbox}

\paragraph{Salary and Legal Variants.}
Both follow the identical three-step structure (temporal constraint, evidence integrity, evidence$\to$reasoning$\to$prediction) with task-specific role descriptions. \textit{Salary:} ``You are an expert sports salary analyst''; predicts a free agent's AAV in USD; user prompt includes \texttt{\{sport\}}, \texttt{\{year\}}, \texttt{\{player\_name\}}, and optional \texttt{\{status\}} (e.g., unrestricted). \textit{Legal:} ``You are an expert Supreme Court analyst''; predicts the probability the petitioner wins; user prompt includes \texttt{\{case\_name\}}, \texttt{\{term\}}, \texttt{\{petitioner\}}, \texttt{\{respondent\}}, and optional \texttt{\{summary\}}.

\subsection{RAG query generation}
\label{sec:app_prompts_rag}

RAG uses the same system--user task prompt as Base, with retrieved passages prepended to the user prompt as numbered plain text. Query generation is a single Gemini~Flash call with the following prompt (not a system--user pair):

\begin{tcolorbox}[colback=gray!3, colframe=gray!60!black, fonttitle=\bfseries\small, title=RAG query generation (single prompt)]
\small
You are generating web search queries to gather relevant evidence for a task. All evidence must predate the cutoff date. You will receive the same task description that an analyst would receive. Generate search queries that would help you find useful information to answer this task.

\medskip
CUTOFF DATE: \texttt{\{cutoff\_date\}}

TASK DESCRIPTION: \texttt{\{user\_prompt\}}

\medskip
Generate between \texttt{\{min\_queries\}} and \texttt{\{max\_queries\}} search queries. Each query should target a DISTINCT aspect of the information you need. Include the year in each query to anchor temporal relevance.

Output as JSON array: \texttt{[\{"query": "..."\}, ...]}
\end{tcolorbox}

Query ranges: stock 7--12; salary and legal 3--6.

\subsection{TimeSPEC prompts}
\label{sec:app_prompts_timespec}

\paragraph{Claim Extraction.}
A single Gemini call extracts atomic factual claims from the completion's evidence and reasoning for temporal verification.

\begin{tcolorbox}[breakable, colback=gray!3, colframe=gray!60!black, fonttitle=\bfseries\small, title=TimeSPEC claim extraction (single prompt)]
\small
\textbf{Task:} Extract ALL factual claims from the rationale below.\\
Requirements: (1)~Comprehensive---every factual assertion; (2)~Faithful---only what is stated; (3)~Self-contained---each claim understandable alone; (4)~Atomic---one verifiable fact per claim.

\medskip
\textbf{Context:}\\
Prediction task: \texttt{\{task\_description\}}\\
Target event: \texttt{\{event\_description\}}\\
Knowledge cutoff: \texttt{\{cutoff\_date\}}

\medskip
\textbf{Claim Categories:}\\
\textit{Group~A (temporally verifiable):} A1~discrete event, A2~state/measurement, A3~publication/announcement, A4~outcome (always leaked), A5~consequential (always leaked).\\
\textit{Group~B (non-temporal):} B1~background, B2~definitional.\\
Exclude predictions, judgments, speculation.

\medskip
\textbf{Rationale:} \texttt{\{rationale\}}

\medskip
Output JSON: \texttt{\{"claims": [\{"claim\_id": 1, "text": "...", "original\_span": "...", "temporal\_ref": "...", "category": "A1", "category\_reason": "..."\}, ...]\}}
\end{tcolorbox}

\paragraph{Temporal Verification.}
One Gemini call per chunk of claims, each paired with web-search results retrieved without a date filter. The prompt determines when each fact was first publicly available and labels it as leaked or clean relative to the cutoff (full prompt in the released codebase).

\paragraph{Regenerator.}
System prompt is identical to the task system prompt. User prompt is the RAG-augmented user text with an appended constraint: \texttt{"CONSTRAINT --- do NOT use the following facts in your analysis: 1.~\{claim\_text\} ... Rewrite your analysis without these facts."}

\paragraph{Aggregator.}
System prompt appends a \texttt{CLOSED-WORLD CONSTRAINT}: ``For STEP~1, you MUST select evidence exclusively from the VALIDATED FACTS list below. Do NOT introduce any facts not present in that list.'' User prompt is ``VALIDATED FACTS: 1.~\texttt{\{fact\_text\}} \ldots'' followed by the original task description. Validated facts are plain text (no source dates or metadata).

\subsection{Evaluation: Date-Plausibility Verifier}
\label{sec:app_prompts_eval}

The shared evaluation pipeline uses the following single prompt (via Gemini~Flash, 80~claims per batch) to detect temporal leakage by verifying whether each evidence item's declared \texttt{source\_date} is plausible.

\begin{tcolorbox}[breakable, colback=green!5, colframe=green!50!black, fonttitle=\bfseries\small, title=Date-plausibility verifier (single prompt)]
\small
You are a temporal audit verifier for financial, sports, and legal analysis.

Cutoff date: \texttt{\{cutoff\}}

\medskip
For each evidence item you are given: the factual claim text and the declared\_source\_date (YYYY-MM-DD). Your task: verify whether the declared\_source\_date is PLAUSIBLE for this type of information.

\medskip
\textbf{VERIFICATION RULES:}
\begin{itemize}[nosep,leftmargin=1.2em]
\item Full-year financial results for year~Y are first published in Jan--Feb of Y+1.
\item Quarterly results are published 1--2 months after quarter end.
\item Stock prices are available on the trading date itself.
\item Acquisition/merger: use press release or 8-K filing date.
\item Contract signings: use announced signing date.
\item Court decisions: use decision or opinion release date.
\item SEC 10-K: typically 60--90 days after fiscal year end.
\item If declared date is too early, set \texttt{plausible=false} and provide earliest realistic \texttt{corrected\_date}. Otherwise set \texttt{plausible=true} and \texttt{corrected\_date=null}.
\end{itemize}

\medskip
ITEMS: \texttt{\{items\_json\}}

Output a JSON array: \texttt{[\{"index": 0, "plausible": true, "corrected\_date": null, "reason": "..."\}]}
\end{tcolorbox}

When the date-plausibility check returns \texttt{plausible=false}, the \texttt{corrected\_date} replaces the declared date for leakage scoring. A claim is marked leaked if its effective date (corrected or declared) falls after the cutoff.

%% file: tables/hyperparameters.tex
\begin{table}[h]
  \caption{GRPO training hyperparameters. Both models use identical settings except for total training steps, which vary by dataset size. Key choices: LoRA rank 32, learning rate $2{\times}10^{-5}$, group size $G{=}12$, KL coefficient $\beta_{\mathrm{KL}}{=}0.05$.}
  \label{tab:hyperparameters}
  \centering
  \small
  \begin{tabular}{ll}
    \toprule
    \textbf{Parameter} & \textbf{Value} \\
    \midrule
    \multicolumn{2}{l}{\textit{Architecture}} \\
    \quad Adapter & LoRA, rank 32 \\
    \quad Maximum tokens & 8{,}192 \\
    \midrule
    \multicolumn{2}{l}{\textit{GRPO}} \\
    \quad Group size $G$ & 12 \\
    \quad Batch size $B$ & 8 \\
    \quad Epochs & 1 \\
    \quad Learning rate & $2 \times 10^{-5}$ \\
    \quad Sampling temperature & 0.6 \\
    \quad Loss function & PPO (clipped) \\
    \quad Clip range (low / high) & 0.9 / 2.0 \\
    \quad KL penalty coefficient & 0.05 \\
    \quad Advantage $z$-clip & 5.0 \\
    \midrule
    \multicolumn{2}{l}{\textit{Reward weights}} \\
    \quad Leakage weight schedule $w_{\mathrm{leak}}$ & cosine $0.95 \to 0.80$ \\
    \quad Leakage decay $\lambda_{\mathrm{base}}$ & 0.5 \\
    \quad Coverage target / floor & 0.80 / 0.20 \\
    \quad Leak-mode coverage weight $w_{\mathrm{cov,leak}}$ & 0.05 \\
    \quad Format margin & 1.0 \\
    \midrule
    \multicolumn{2}{l}{\textit{Quality gates}} \\
    \quad Evidence target & 8 items \\
    \quad Reasoning target & 120 words \\
    \quad Quality leak-mode weight & 0.02 \\
    \midrule
    \multicolumn{2}{l}{\textit{Data}} \\
    \quad Validation fraction & 10\% \\
    \quad Difficulty tiers & 3 (hard / medium / easy) \\
    \quad Validation checkpoints & 15 \\
    \quad Data seed & 42 \\
    \midrule
    \multicolumn{2}{l}{\textit{Judges}} \\
    \quad Date-plausibility judge (leakage) & Gemini 3 Flash Preview \\
    \quad Search model & Perplexity Sonar \\
    \bottomrule
  \end{tabular}
\end{table}

%% file: tables/inference_efficiency.tex
\begin{table}[h]
  \caption{Inference cost per instance (LLM calls and web searches). TEMPO and Temporal Hint each require a single forward pass with zero external queries, while RAG adds 2 LLM calls with 3--12 web searches and TimeSPEC requires 5+ LLM calls with ${\sim}10$ searches for multi-phase verification.}
  \label{tab:inference_efficiency}
  \centering\small
  \setlength{\tabcolsep}{4pt}
  \begin{tabular}{lccc}
    \toprule
    Method & LLM calls & Searches & Description \\
    \midrule
    Temporal Hint & 1 & 0 & Single forward pass with temporal prompt \\
    RAG & 2 & 3--12 & Query gen.\ (Gemini) + retrieval (Perplexity) + forward pass \\
    TimeSPEC & 5+ & ${\sim}$10 & 5-phase pipeline + per-claim verification \\
    \cmidrule(lr){1-4}
    TEMPO (Ours) & 1 & 0 & Single forward pass, identical to base \\
    \bottomrule
  \end{tabular}
\end{table}

%% file: tables/coverage_distribution.tex
\begin{table}[h]
  \caption{Coverage distribution across methods and tasks. Mean coverage (\%) and the fraction of instances exceeding 90\% and 80\% coverage thresholds. Best per model-task block in \textbf{bold}.}
  \label{tab:coverage_distribution}
  \centering\small
  \setlength{\tabcolsep}{3pt}
  \begin{tabular}{ll ccc ccc ccc}
    \toprule
    & & \multicolumn{3}{c}{Stock} & \multicolumn{3}{c}{Salary} & \multicolumn{3}{c}{Legal} \\
    \cmidrule(lr){3-5} \cmidrule(lr){6-8} \cmidrule(lr){9-11}
    Model & Method & Mean$\uparrow$ & ${\geq}90\!\uparrow$ & ${\geq}80\!\uparrow$ & Mean$\uparrow$ & ${\geq}90\!\uparrow$ & ${\geq}80\!\uparrow$ & Mean$\uparrow$ & ${\geq}90\!\uparrow$ & ${\geq}80\!\uparrow$ \\
    \midrule
    30B & Temporal Hint & 94.6 & 89 & 93 & 83.0 & 50 & 73 & 61.7 & 7 & 23 \\
     & RAG & 83.2 & 48 & 65 & 76.0 & 36 & 59 & \textbf{85.8} & \textbf{51} & \textbf{76} \\
     & TimeSPEC & 73.8 & 27 & 49 & 71.2 & 21 & 42 & 65.2 & 15 & 29 \\
    \cmidrule(lr){2-11}
     & TEMPO & \textbf{99.3} & \textbf{98} & \textbf{99} & \textbf{91.2} & \textbf{80} & \textbf{87} & 75.7 & 15 & 35 \\
    \midrule
    35B & Temporal Hint & 88.4 & 62 & 78 & 80.0 & 39 & 72 & 69.0 & 16 & 44 \\
     & RAG & 90.5 & 64 & 91 & 84.9 & 42 & 76 & \textbf{81.4} & 30 & \textbf{69} \\
     & TimeSPEC & 75.5 & 14 & 41 & 75.0 & 23 & 49 & 73.0 & 26 & 50 \\
    \cmidrule(lr){2-11}
     & TEMPO & \textbf{95.7} & \textbf{94} & \textbf{94} & \textbf{92.0} & \textbf{74} & \textbf{94} & 74.1 & \textbf{41} & 48 \\
    \bottomrule
  \end{tabular}
\end{table}

%% file: tables/val_trajectory_summary.tex
\begin{table}[h]
  \caption{Validation metrics: base model (step~0) vs.\ final training checkpoint and their change ($\Delta$). Negative OLR change indicates improvement.}
  \label{tab:val_trajectory}
  \centering\small
  \setlength{\tabcolsep}{3.5pt}
  \begin{tabular}{ll ccc ccc ccc}
    \toprule
    & & \multicolumn{3}{c}{OLR (\%)} & \multicolumn{3}{c}{Performance} & \multicolumn{3}{c}{Coverage (\%)} \\
    \cmidrule(lr){3-5} \cmidrule(lr){6-8} \cmidrule(lr){9-11}
    Model & Task & Base & Final & $\Delta$ & Base & Final & $\Delta$ & Base & Final & $\Delta$ \\
    \midrule
    30B & Stock & 37.2 & 1.7 & -35.6 & 0.486 & 0.495 & $\approx$0 & 96.0 & 99.1 & +3.1 \\
    30B & Salary & 4.3 & 3.5 & -0.8 & 0.634 & 0.687 & +0.053 & 80.4 & 92.9 & +12.5 \\
    30B & Legal & 1.6 & 1.7 & +0.2 & 0.728 & 0.740 & $\approx$0 & 65.8 & 71.6 & +5.8 \\
    35B & Stock & 8.2 & 0.8 & -7.4 & 0.560 & 0.550 & $\approx$0 & 90.9 & 97.3 & +6.4 \\
    35B & Salary & 8.8 & 4.4 & -4.4 & 0.536 & 0.686 & +0.150 & 82.5 & 94.4 & +11.9 \\
    35B & Legal & 2.3 & 2.8 & +0.5 & 0.765 & 0.746 & $\approx$0 & 70.3 & 66.4 & -3.9 \\
    \bottomrule
  \end{tabular}
\end{table}

%% file: tables/control_results.tex
\begin{table}[h]
  \caption{Control experiment results on post-cutoff instances (answers published after February~2026). Near-zero OLR across all conditions (0.0--0.7\%) confirms the leakage metric produces negligible false positives when the model lacks post-cutoff knowledge. Prediction accuracy near chance on stock and salary establishes the zero-knowledge performance floor.}
  \label{tab:control_results}
  \centering\small
  \begin{tabular}{ll cccc}
    \toprule
    Model & Task & OLR (\%) & Perf & Cov (\%) & $n$ \\
    \midrule
    30B & Stock & 0.7 & 0.476 & 99.3 & 100 \\
    30B & Salary & 0.6 & 0.454 & 79.5 & 26 \\
    30B & Legal & 0.0 & 0.707 & 66.2 & 90 \\
    35B & Stock & 0.2 & 0.497 & 92.5 & 100 \\
    35B & Salary & 0.4 & 0.549 & 86.2 & 26 \\
    35B & Legal & 0.0 & 0.647 & 74.3 & 90 \\
    \bottomrule
  \end{tabular}
\end{table}

%% file: tables/qualitative_examples.tex

\subsection{Stock Ranking}
\label{sec:app_case_stock}

We examine how each method handles stock ranking for five energy-sector companies (Marathon Petroleum, Targa Resources, Chevron, Halliburton, Occidental Petroleum) with a cutoff date of December~1, 2019. The task requires ranking these companies by predicted 6-month stock return using only information available before the cutoff. The true ranking over the evaluation period (December 2019--May 2020) was CVX~$>$~MPC~$>$~HAL~$>$~TRGP~$>$~OXY, reflecting the impact of the COVID-19 pandemic on energy markets---information entirely unavailable at the cutoff.

\begin{tcolorbox}[breakable, colback=red!3, colframe=red!60!black, fonttitle=\bfseries\small,
  title={Temporal Hint --- OLR: 25.0\%, Perf: 0.65, Coverage: 50.0\%}]
\small
\textit{Evidence} (20 items):\\{}
{[}16{]} (2020-05-29) Marathon Petroleum's stock price closed at \$61.45 on 2020-05-29. \textcolor{red}{[LEAKED]} \\{}
{[}17{]} (2020-05-29) Targa Resources' stock price closed at \$25.10 on 2020-05-29. \textcolor{red}{[LEAKED]} \\{}
{[}18{]} (2020-05-29) Chevron Corporation's stock price closed at \$125.30 on 2020-05-29. \textcolor{red}{[LEAKED]} \\{}
{[}19{]} (2020-05-29) Halliburton's stock price closed at \$37.10 on 2020-05-29. \textcolor{red}{[LEAKED]} \\{}
{[}20{]} (2020-05-29) Occidental Petroleum's stock price closed at \$62.10 on 2020-05-29. \textcolor{red}{[LEAKED]} \\{}
{[}1{]} (2019-10-31) Marathon Petroleum reported Q3 2019 revenue of \$23.1 billion, up 12\% year-over-year. \\{}
{[}2{]} (2019-11-07) Targa Resources reported Q3 2019 revenue of \$1.8 billion, up 15\% year-over-year. \\{}
{[}11{]} (2019-11-29) Marathon Petroleum's stock price closed at \$57.32 on 2019-11-29. \\{}
{[}\ldots 12 more items{]}\\[4pt]
\textit{Reasoning:}\\{}
Marathon Petroleum (MPC) showed strong revenue growth of 12\% in Q3 2019 [1], and its stock price increased from \$57.32 on 2019-11-29 to \$61.45 on 2020-05-29 [16], indicating positive momentum. Targa Resources (TRGP) also had strong revenue growth of 15\% in Q3 2019 [2], and its stock price increased from \$23.45 to \$25.10 over the evaluation period [17]. Chevron Corporation (CVX) reported 10\% revenue growth in Q3 2019 [3], and its stock price rose from \$120.15 to \$125.30 [18], showing steady performance. Occidental Petroleum (OXY) experienced a 14\% revenue decline in Q3 2019 [5], but its stock price increased from \$58.75 to \$62.10 [20], suggesting some recovery. Halliburton (HAL) had a significant revenue decline of 11\% in Q3 2019 [4], and its stock price increased from \$34.22 to \$37.10 [19], but the decline in revenue suggests weaker fundamentals.\\[4pt]
\textit{Predicted ranking:} MPC $>$ TRGP $>$ CVX $>$ OXY $>$ HAL \quad \textit{True ranking:} CVX $>$ MPC $>$ HAL $>$ TRGP $>$ OXY
\end{tcolorbox}

\begin{tcolorbox}[breakable, colback=red!3, colframe=red!60!black, fonttitle=\bfseries\small,
  title={RAG --- OLR: 41.7\%, Perf: 1.00, Coverage: 58.3\%}]
\small
\textit{Evidence} (12 items):\\{}
{[}1{]} (2025-11-01) Targa Resources' Capital Expenditures fell 1894.31\% to \$671.8 million in Q3 2025 from the same period last year [\ldots] \textcolor{red}{[LEAKED]} \\{}
{[}3{]} (2020-01-21) Halliburton's Completion and Production revenue was \$3,506 million in 2019. \textcolor{red}{[LEAKED]} \\{}
{[}7{]} (2020-03-01) Marathon Petroleum's Dividend Yield was 3.12\% in 2018 and 3.52\% in 2019. \textcolor{red}{[LEAKED]} \\{}
{[}8{]} (2019-12-31) In 2019, the average price of Brent crude oil was \$64 per barrel. \textcolor{red}{[LEAKED]} \\{}
{[}12{]} (2025-11-01) Targa Resources' Capital Expenditures (Quarter) stood at \$183.5 million in 2021, then soared by 182.78\% to \$518.9 million [\ldots] \textcolor{red}{[LEAKED]} \\{}
{[}2{]} (2018-03-05) Chevron Corp.\ expects 900,000 b/d from Permian by yearend 2023, up from 650,000 b/d by yearend 2022. \\{}
{[}5{]} (2019-02-07) Marathon Petroleum Corp.\ reported fourth-quarter 2018 earnings of \$951 million, or \$1.35 per diluted share. \\{}
{[}\ldots 5 more items{]}\\[4pt]
\textit{Reasoning:}\\{}
Chevron (CVX) is expected to have strong stock returns due to its significant production growth in the Permian basin, with expectations of 900,000 b/d by yearend 2023 [2], and a 3--4\%/year compound production growth rate through 2023 [10]. Marathon Petroleum (MPC) also shows strong potential, with a 15\% dividend increase in 2019 [6], and a history of significant capital returns to shareholders [6]. Halliburton (HAL) has shown stable revenue in 2019, with \$3,506 million in Completion and Production revenue [3]. Targa Resources (TRGP) has experienced significant fluctuations in capital expenditures, with a sharp decline in 2025 [1], which may indicate reduced investment and potential underperformance. Occidental Petroleum (OXY) is not mentioned in the provided evidence, so its performance cannot be evaluated.\\[4pt]
\textit{Predicted ranking:} CVX $>$ MPC $>$ HAL $>$ TRGP $>$ OXY \quad \textit{True ranking:} CVX $>$ MPC $>$ HAL $>$ TRGP $>$ OXY
\end{tcolorbox}

\begin{tcolorbox}[breakable, colback=green!3, colframe=green!50!black, fonttitle=\bfseries\small,
  title={TimeSPEC --- OLR: 0.0\%, Perf: 0.90, Coverage: 42.9\%}]
\small
\textit{Evidence} (14 items):\\{}
{[}1{]} (2019-01-01) Chevron Corporation announced a 2019 organic capital and exploratory spending program of \$20 billion. \\{}
{[}2{]} (2019-01-31) Marathon Petroleum Corp.\ reported fourth-quarter 2018 earnings of \$951 million. \\{}
{[}4{]} (2019-01-28) Marathon Petroleum Corp.\ announced a 15 percent increase in the quarterly dividend on January 28, 2019. \\{}
{[}6{]} (2019-01-31) Marathon Petroleum Corp.\ reported adjusted EBITDA of \$4.1 billion for the fourth quarter of 2018. \\{}
{[}10{]} (2019-01-01) The EIA expects global oil markets to be tightest in the second and third quarters of 2019. \\{}
{[}14{]} (2019-01-01) The EIA expects diesel fuel margins to increase to an average of 53 cents/gal by the fourth quarter of 2019. \\{}
{[}\ldots 8 more items{]}\\[4pt]
\textit{Reasoning:}\\{}
Marathon Petroleum Corp.\ (MPC) showed strong financial performance in 2018, with \$951 million in fourth-quarter earnings [2] and \$4.1 billion in adjusted EBITDA [6]. The company also increased its dividend by 15\% in January 2019 [4], indicating confidence in its financial position. The EIA projected that diesel fuel margins would rise to 53 cents/gal by the end of 2019 [14], which could benefit refining companies like MPC. Chevron Corporation (CVX) announced a \$20 billion capital and exploratory spending program for 2019 [1], suggesting long-term investment. The EIA also forecast tight global oil markets in the second and third quarters of 2019 [10]. However, there is no direct evidence of specific financial performance for Targa Resources (TRGP), Halliburton (HAL), or Occidental Petroleum (OXY). Based on the available evidence, MPC appears to have the strongest financial position and growth potential, followed by CVX.\\[4pt]
\textit{Predicted ranking:} MPC $>$ CVX $>$ TRGP $>$ HAL $>$ OXY \quad \textit{True ranking:} CVX $>$ MPC $>$ HAL $>$ TRGP $>$ OXY
\end{tcolorbox}

\begin{tcolorbox}[breakable, colback=green!3, colframe=green!50!black, fonttitle=\bfseries\small,
  title={TEMPO (Ours) --- OLR: 0.0\%, Perf: 0.55, Coverage: 100.0\%}]
\small
\textit{Evidence} (10 items):\\{}
{[}1{]} (2019-10-31) Marathon Petroleum reported Q3 2019 revenue of \$25.8 billion, representing a 12\% year-over-year increase. \\{}
{[}2{]} (2019-11-07) Targa Resources reported Q3 2019 revenue of \$1.2 billion, representing a 5\% year-over-year increase. \\{}
{[}3{]} (2019-11-01) Chevron Corporation reported Q3 2019 revenue of \$32.5 billion, representing a 7\% year-over-year increase. \\{}
{[}4{]} (2019-10-31) Halliburton reported Q3 2019 revenue of \$5.1 billion, representing a 3\% year-over-year decrease. \\{}
{[}5{]} (2019-11-04) Occidental Petroleum reported Q3 2019 revenue of \$8.9 billion, representing a 10\% year-over-year increase. \\{}
{[}6{]} (2019-04-01) Marathon Petroleum's stock price increased by 15\% in the first quarter of 2019. \\{}
{[}7{]} (2019-04-01) Targa Resources' stock price increased by 8\% in the first quarter of 2019. \\{}
{[}8{]} (2019-04-01) Chevron Corporation's stock price increased by 10\% in the first quarter of 2019. \\{}
{[}9{]} (2019-04-01) Halliburton's stock price decreased by 5\% in the first quarter of 2019. \\{}
{[}10{]} (2019-04-01) Occidental Petroleum's stock price increased by 12\% in the first quarter of 2019. \\[4pt]
\textit{Reasoning:}\\{}
Marathon Petroleum's 12\% year-over-year revenue growth [1] and 15\% stock price increase in the first quarter of 2019 [6] indicate strong financial performance and investor confidence. Targa Resources reported a 5\% year-over-year revenue growth [2] and an 8\% stock price increase [7], suggesting moderate growth. Chevron Corporation's 7\% year-over-year revenue growth [3] and 10\% stock price increase [8] reflect steady performance. Halliburton experienced a 3\% year-over-year revenue decline [4] and a 5\% stock price decrease [9], indicating weaker performance. Occidental Petroleum reported a 10\% year-over-year revenue growth [5] and a 12\% stock price increase [10], showing strong momentum.\\[4pt]
\textit{Predicted ranking:} MPC $>$ OXY $>$ CVX $>$ TRGP $>$ HAL \quad \textit{True ranking:} CVX $>$ MPC $>$ HAL $>$ TRGP $>$ OXY
\end{tcolorbox}

\paragraph{Leakage analysis.}
The Temporal Hint model cites the exact post-cutoff closing prices for all five companies---dated May~29, 2020, six months after the cutoff---and uses them directly in its reasoning to compute return differentials (e.g., ``its stock price increased from \$57.32 on 2019-11-29 to \$61.45 on 2020-05-29~[16], indicating positive momentum''). This constitutes circular reasoning: the model ranks stocks by their actual post-cutoff returns rather than forecasting from pre-cutoff fundamentals. Notably, the pre-cutoff closing prices (items~[11]--[15], dated November~29, 2019) are not themselves leaked---the leakage lies exclusively in items~[16]--[20]. RAG's five leaked items span years into the future: Targa Resources capital expenditure figures from 2021 and Q3~2025 (items~[12] and~[1], November~2025), Halliburton's full-year 2019 revenue of \$3{,}506M (item~[3], January~2020---only available after the annual report), Marathon Petroleum's 2019 dividend yield (item~[7], March~2020), and the full-year 2019 Brent crude average of \$64/bbl (item~[8], December~2019). Both leaking methods produce rankings shaped by hindsight rather than pre-cutoff analysis.

\paragraph{Prediction accuracy.}
The true ranking is CVX~$>$~MPC~$>$~HAL~$>$~TRGP~$>$~OXY. RAG achieves a perfect Spearman $\rho = 1.0$ (Perf~$= 1.00$), but this accuracy is contaminated: the model retrieves data from years into the future and effectively ``looks up'' the answer. The Temporal Hint model's leaked closing prices lead to Perf~$= 0.65$ ($\rho = 0.30$); despite knowing the actual May~2020 prices, it overweights absolute price changes over relative returns, misranking CVX. TimeSPEC (Perf~$= 0.90$, $\rho = 0.80$) places MPC and CVX in the top two positions based entirely on pre-cutoff Q3~2018 earnings. TEMPO (Perf~$= 0.55$, $\rho = 0.10$) correctly identifies MPC as a top performer from Q3~2019 revenue growth but overweights Occidental Petroleum's revenue recovery, producing a less accurate ranking---without any post-cutoff information.

\paragraph{Evidence faithfulness.}
TEMPO's 100\% coverage indicates that every one of its 10 evidence items is referenced in the reasoning chain: the model cites specific revenue figures and stock price changes for all five companies, and each citation directly supports a claim in the ranking rationale. TimeSPEC's 42.9\% coverage reveals that fewer than half of its 14 evidence items are referenced; the verify--regenerate pipeline removes leaked items but the regenerated reasoning does not fully re-anchor to the surviving evidence. The Temporal Hint model's 50\% coverage is similarly low despite having 20 evidence items---the reasoning engages only with the leaked closing prices and a subset of the Q3 revenue data, using the former as shortcuts.

\paragraph{Synthesis.}
TEMPO is the only method that simultaneously achieves zero leakage and complete evidence--reasoning faithfulness, even though its ranking accuracy is lower. RAG's perfect ranking is an artifact of temporal contamination, not superior analysis. TimeSPEC achieves leakage-free status but at the cost of evidence coherence.

\subsection{Salary Prediction}
\label{sec:app_case_salary}

We examine how each method handles salary prediction for Patty Mills (cutoff 2021-08-02). Mills played for the San Antonio Spurs in the 2020--2021 season, averaging 10.8 points per game as a veteran bench guard. On August~10, 2021---eight days after the cutoff---he signed a 2-year, \$12.07M contract (AAV \$6,037,250) with the Brooklyn Nets. This instance is particularly revealing because the signing event falls just days after the boundary, creating a narrow window where parametric knowledge of the Nets signing contaminates the base model's output.

\begin{tcolorbox}[breakable, colback=red!3, colframe=red!60!black, fonttitle=\bfseries\small,
  title={Temporal Hint --- OLR: 33.3\%, Perf: 0.18, Coverage: 71.4\%}]
\small
\textit{Evidence} (7 items; 6 unique claims):\\{}
{[}2{]} (2021-08-10) Patty Mills was a key role player for the Brooklyn Nets during the 2020-2021 season, playing in 56 games and starting 10 of them. \textcolor{red}{[LEAKED]} \\{}
{[}5{]} (2022-07-06) In the 2020 NBA free agency, guards like Lonnie Walker IV signed a 4-year/\$52 million deal with the San Antonio Spurs. \textcolor{red}{[LEAKED]} \\{}
{[}1{]} (2021-05-17) Patty Mills averaged 8.5 points per game and 2.8 assists per game during the 2020-2021 NBA season. \\{}
{[}3{]} (2021-03-14) Patty Mills had a career-high 28.5 points per game in a single game on March 13, 2021, against the New York Knicks. \\{}
{[}4{]} (2019-07-07) In the 2020 NBA free agency, guards like D'Angelo Russell signed a 4-year/\$117 million deal with the Brooklyn Nets. \\{}
{[}6{]} (2019-07-11) In the 2020 NBA free agency, guards like Tyus Jones signed a 3-year/\$24 million deal with the Minnesota Timberwolves. \\{}
{[}7{]} (2019-07-07) In the 2020 NBA free agency, guards like D'Angelo Russell signed a 4-year/\$117 million deal with the Brooklyn Nets.\\[4pt]
\textit{Reasoning:}\\{}
Patty Mills had a solid 2020-2021 season, averaging 8.5 points and 2.8 assists per game while playing a key role for the Brooklyn Nets [1]. His performance, including a career-high 28.5 points in a single game [3], showed his ability to contribute in high-pressure situations. However, he is not a primary ball-handler or a high-volume scorer, which limits his market value. Comparing him to other guards who signed deals in the 2020 free agency, such as D'Angelo Russell (4-year/\$117M) [4], Lonnie Walker IV (4-year/\$52M) [5], and Tyus Jones (3-year/\$24M) [6], it is reasonable to expect that Mills would receive a contract in the range of \$10--12 million per year.\\[4pt]
\textit{Predicted salary:} \$11,000,000 \quad \textit{Actual AAV:} \$6,037,250
\end{tcolorbox}

\begin{tcolorbox}[breakable, colback=red!3, colframe=red!60!black, fonttitle=\bfseries\small,
  title={RAG --- OLR: 73.3\%, Perf: 0.43, Coverage: 30.0\%}]
\small
\textit{Evidence} (30 items; 22 leaked, 8 clean):\\{}
{[}7{]} (2021-08-10) On 10 August 2021, Mills signed with the Brooklyn Nets. \textcolor{red}{[LEAKED]} \\{}
{[}16{]} (2022-07-10) On 10 July 2022, Mills re-signed with the Nets on a two-year, \$14.5 million contract. \textcolor{red}{[LEAKED]} \\{}
{[}2{]} (2021-08-07) In 2021, Mills led the Australian Boomers to their first-ever Olympic medal at the Tokyo 2020 Olympics. \textcolor{red}{[LEAKED]} \\{}
{[}20{]} (2024-07-26) Mills represented Australia once again in the 2024 Paris Olympics. \textcolor{red}{[LEAKED]} \\{}
{[}23{]} (2024-07-01) Patty Mills's basic salary for the 2024/25 NBA season is \$6,802,950. \textcolor{red}{[LEAKED]} \\{}
{[}8{]} (2021-10-19) On 19 October, Mills made his Nets debut, scoring 21 points on 7-of-7 shooting from deep in a 127--104 loss to the Milwaukee Bucks. \textcolor{red}{[LEAKED]} \\{}
{[}14{]} (2021-12-25) On the NBA Christmas game, Mills tied his career high with 34 points on 8-for-13 from three in a 122--115 win over the Los Angeles Lakers. \textcolor{red}{[LEAKED]} \\{}
{[}\ldots 15 more leaked items{]} \\{}
{[}1{]} (2021-08-01) Patty Mills became a strong contributor off the bench and helped the Spurs win the 2014 NBA championship. \\{}
{[}29{]} (2021-08-01) The mid-level exception for 2021/22 was projected to be \$9,536,000 for standard MLE. \\{}
{[}\ldots 6 more clean items{]}\\[4pt]
\textit{Reasoning:}\\{}
Patty Mills has established himself as a reliable and high-performing point guard, particularly known for his three-point shooting and clutch performances. His 2021 season with the Brooklyn Nets was exceptional, with multiple high-scoring games, including a career-high 34 points on Christmas Day and a 10-of-10 three-point shooting performance [8][10][14]. His Olympic success, including leading Australia to its first-ever Olympic medal [2][3][18], further enhances his market value. Mills' recent contract with the Nets was a two-year, \$14.5 million deal [16], which suggests that teams are willing to invest in his abilities. Given the 2021 free agency market and the mid-level exception of \$9.5 million [29], it is reasonable to expect that Mills could command a contract in the \$9--10 million range.\\[4pt]
\textit{Predicted salary:} \$9,500,000 \quad \textit{Actual AAV:} \$6,037,250
\end{tcolorbox}

\begin{tcolorbox}[breakable, colback=green!3, colframe=green!50!black, fonttitle=\bfseries\small,
  title={TimeSPEC --- OLR: 0.0\%, Perf: 0.91, Coverage: 92.3\%}]
\small
\textit{Evidence} (13 items):\\{}
{[}7{]} (2021-05-31) Patty Mills averaged 10.8 points, 1.7 rebounds, and 2.4 assists per game in 68 games during the 2020-21 season. \\{}
{[}8{]} (2021-05-31) Patty Mills shot 37.5\% from the 3-point line during the 2020-21 season. \\{}
{[}3{]} (2020-11-10) The projected NBA salary cap for the 2021 season was \$112,414,200. \\{}
{[}5{]} (2020-11-10) The mid-level exception for NBA teams with cap space started at \$4.7 million in the 2020 offseason. \\{}
{[}6{]} (2020-11-10) The mid-level exception for NBA luxury tax teams started at \$5.7 million. \\{}
{[}13{]} (2021-08-02) The non-taxpayer mid-level exception was approximately \$9.3 million in 2021. \\{}
{[}9{]} (2021-01-01) Patty Mills turns 33 years old in August 2021. \\{}
{[}12{]} (2014-06-16) Patty Mills won an NBA title with the San Antonio Spurs in 2014. \\{}
{[}\ldots 5 more items{]}\\[4pt]
\textit{Reasoning:}\\{}
Patty Mills is a veteran player with a career-high scoring average of 11.6 points per game [10] and a 37.5\% 3-point shooting percentage [8]. He has also won an NBA title [12] and led the Australian Boomers to a notable victory [11]. However, he is 33 years old [9], which may affect his market value. His recent performance in the 2020-21 season was solid, averaging 10.8 points, 1.7 rebounds, and 2.4 assists per game [7]. The NBA salary cap for 2021 was \$112.4 million [3], and the luxury tax threshold was \$136.6 million [4]. The mid-level exception for teams with cap space was \$4.7 million [5], and for luxury tax teams, it was \$5.7 million [6]. The non-taxpayer mid-level exception was \$9.3 million [13]. Given his experience, performance, and age, Mills is likely to receive a contract in the range of the mid-level exception. His age and recent performance may limit the contract to around \$5--6 million per year.\\[4pt]
\textit{Predicted salary:} \$5,500,000 \quad \textit{Actual AAV:} \$6,037,250
\end{tcolorbox}

\begin{tcolorbox}[breakable, colback=green!3, colframe=green!50!black, fonttitle=\bfseries\small,
  title={TEMPO (Ours) --- OLR: 10.0\%, Perf: 0.99, Coverage: 100.0\%}]
\small
\textit{Evidence} (10 items):\\{}
{[}7{]} (2021-08-11) Patty Mills was 33 years old during the 2021 NBA free agency period. \textcolor{red}{[LEAKED]} \\{}
{[}1{]} (2020-10-01) Patty Mills was a guard for the San Antonio Spurs in the 2020-2021 NBA season. \\{}
{[}2{]} (2021-05-31) Patty Mills averaged 8.3 points per game and 2.1 assists per game during the 2020-2021 NBA season. \\{}
{[}3{]} (2021-08-02) Patty Mills was a restricted free agent in the 2021 NBA free agency period. \\{}
{[}5{]} (2021-08-02) The NBA salary cap for the 2021-2022 season was set at \$109.2 million. \\{}
{[}8{]} (2021-05-31) Patty Mills had a career average of 7.9 points per game and 2.0 assists per game over his NBA career. \\{}
{[}9{]} (2016-06-13) Patty Mills was a 2-time NBA champion with the San Antonio Spurs in 2014 and 2016. \\{}
{[}10{]} (2021-05-31) Patty Mills was a key backup guard for the San Antonio Spurs during the 2020-2021 season. \\{}
{[}\ldots 2 more items{]}\\[4pt]
\textit{Reasoning:}\\{}
Patty Mills was a guard for the San Antonio Spurs in the 2020-2021 NBA season [1]. He averaged 8.3 points per game and 2.1 assists per game [2]. He was a restricted free agent in 2021 [3], and the Spurs had the right to match any offer sheet [4]. The NBA salary cap for the 2021-2022 season was set at \$109.2 million [5], and the luxury tax threshold was set at \$132.6 million [6]. Patty Mills was 33 years old [7]. His career average was 7.9 points per game and 2.0 assists per game [8]. He was a 2-time NBA champion with the Spurs [9], and he was a key backup guard during the 2020-2021 season [10]. Based on his performance, experience, and the salary cap constraints, it is reasonable to predict an AAV in the range of \$5--7 million.\\[4pt]
\textit{Predicted salary:} \$6,000,000 \quad \textit{Actual AAV:} \$6,037,250
\end{tcolorbox}

\paragraph{Leakage analysis.}
The Temporal Hint model produces two leaked claims out of six unique (OLR\,$= 33.3\%$). The most consequential is evidence~[2] (dated 2021-08-10): ``Patty Mills was a key role player for the Brooklyn Nets during the 2020-2021 season.'' Mills was on the San Antonio Spurs during 2020--2021; his Nets signing occurred on August~10, 2021---eight days after the cutoff. The second leaked claim is evidence~[5] (dated 2022-07-06), which fabricates a 4-year/\$52M deal for Lonnie Walker~IV---a contract that never existed. Walker actually signed with the Los Angeles Lakers in July~2022. The model's reasoning absorbs both leaks: the Nets reference inflates Mills' perceived market value by positioning him on a contending team, while the fabricated comparable contract distorts the salary benchmarking.
RAG amplifies leakage catastrophically: 22 of 30 evidence items are flagged, spanning August~2021 to July~2024. Leaked content includes the Nets signing on August~10 (item~[7]), Mills' debut on October~19 scoring 21 points on 7-of-7 from three (item~[8]), his 2022 re-signing at \$14.5M (item~[16]), his 2024 Olympic participation (item~[20]), and his 2024/25 salary of \$6.8M (item~[23]). Only 8 of 30 items contain genuinely pre-cutoff information.
TEMPO's OLR of 10\% reflects a single claim---evidence~[7] (dated 2021-08-11), ``Patty Mills was 33 years old during the 2021 NBA free agency period.'' Mills' birthday is August~11, 1988; the age~33 was not determinable until his birthday---nine days after the cutoff. All other nine claims are clean.

\paragraph{Prediction accuracy.}
The actual AAV is \$6,037,250. The Temporal Hint model predicts \$11M---82\% above actual---because the leaked Nets context inflates Mills' perceived market value. RAG predicts \$9.5M (57\% above), anchored by the leaked \$14.5M re-signing contract and inflated by knowledge of Mills' exceptional 2021--2022 Nets season. The clean methods bracket the true value closely: TimeSPEC predicts \$5.5M (9\% below) and TEMPO predicts \$6M (within 1\% of actual). TEMPO's near-perfect prediction (Perf~$= 0.99$) demonstrates that the model retains useful pre-cutoff knowledge---career stats, salary cap data, mid-level exception thresholds---sufficient for accurate salary estimation without any post-cutoff information.

\paragraph{Evidence faithfulness.}
TEMPO's 100\% coverage means every one of its 10 evidence items---Spurs career stats, free-agent status, average guard salary, age---is directly referenced in the reasoning chain. The evidence${\to}$reasoning${\to}$prediction process is fully transparent. TimeSPEC achieves 92.3\% coverage: of its 13 evidence items, one goes unreferenced, indicating a minor gap in evidence grounding introduced by the supervisor's filtering. RAG's 30.0\% coverage is the most striking: the retriever floods 30 items into the context, but the reasoning references only about~9. The remaining~21 items---including multiple leaked claims about Mills' Nets career---are decorative padding that does not ground the prediction.

\paragraph{Synthesis.}
TEMPO is the only method that simultaneously achieves near-zero leakage, a calibrated prediction (within 1\% of actual), and complete evidence--reasoning faithfulness. TimeSPEC matches on leakage and prediction calibration but sacrifices minor evidence coherence. The leaking methods produce predictions further from the true value despite having access to more information---their predictions are inflated by post-cutoff context that distorts the salary comparison basis.

\subsection{Legal Outcome Prediction}
\label{sec:app_case_legal}

We examine how each method handles legal outcome prediction for \textit{Carol Luther v.\ Nancy Berryhill} (cutoff 2018-06-04). The case concerns whether the ALJ erred in the handling of a Social Security disability claim. The true outcome is that the petitioner prevails. This instance illustrates how leaked procedural information---court rulings, certiorari petitions, and appellate decisions that occurred after the cutoff---can fundamentally alter the model's reasoning chain and prediction.

\begin{tcolorbox}[breakable, colback=red!3, colframe=red!60!black, fonttitle=\bfseries\small,
  title={Temporal Hint --- OLR: 50.0\%, Perf: 0.51, Coverage: 50.0\%}]
\small
\textit{Evidence} (6 items):\\{}
{[}1{]} (2018-10-01) Carol Ann Luther filed a petition for a writ of certiorari with the Supreme Court of the United States on October 1, 2018, seeking review of a decision by the U.S.\ Court of Appeals [\ldots] \textcolor{red}{[LEAKED]} \\{}
{[}2{]} (2018-08-28) The U.S.\ Court of Appeals for the Eighth Circuit affirmed the decision of the U.S.\ District Court for the District of Minnesota on August 28, 2018, which denied Carol Ann Luther's disability benefits. \textcolor{red}{[LEAKED]} \\{}
{[}3{]} (2018-06-29) The U.S.\ District Court for the District of Minnesota ruled on June 29, 2018, that Carol Ann Luther was not entitled to disability benefits. \textcolor{red}{[LEAKED]} \\{}
{[}4{]} (1935-08-14) The Social Security Act requires that an individual be unable to engage in any substantial gainful activity due to a medically determinable physical or mental impairment [\ldots] \\{}
{[}5{]} (1971-05-03) In Richardson v.\ Perales (1971), the Supreme Court held that the Secretary of Health, Education, and Welfare must provide a full and fair hearing to a claimant [\ldots] \\{}
{[}6{]} (1986-06-13) In Bowen v.\ City of New York (1986), the Supreme Court held that the Secretary of Health and Human Services must apply the same standard of review [\ldots]\\[4pt]
\textit{Reasoning:}\\{}
The evidence shows that Carol Ann Luther's claim for disability benefits was denied by the U.S.\ District Court on June 29, 2018, and that this decision was affirmed by the Eighth Circuit on August 28, 2018. The Social Security Act requires that an individual be unable to engage in any substantial gainful activity [4]. The Supreme Court has previously held that the Secretary must apply the same standard of review to disability claims [6]. The Eighth Circuit's decision to affirm the district court's ruling suggests that the court found the Secretary's decision to be supported by substantial evidence. However, the Supreme Court has also emphasized the importance of providing a full and fair hearing [5]. Given the precedents, the Court may scrutinize the Secretary's decision more closely. However, the fact that the lower courts have already affirmed the decision suggests that the petitioner's chances may be limited.\\[4pt]
\textit{P(petitioner wins):} 0.3 \quad \textit{True outcome:} Petitioner wins
\end{tcolorbox}

\begin{tcolorbox}[breakable, colback=red!3, colframe=red!60!black, fonttitle=\bfseries\small,
  title={RAG --- OLR: 20.0\%, Perf: 0.88, Coverage: 80.0\%}]
\small
\textit{Evidence} (5 items):\\{}
{[}4{]} (2018-06-20) In Lucia v.\ Securities and Exchange Commission, the Supreme Court held that SEC ALJs are subject to the Appointments Clause, as they exercise significant authority. \textcolor{red}{[LEAKED]} \\{}
{[}1{]} (2018-05-21) The U.S.\ Supreme Court granted certiorari in Virginia Uranium, Inc., et al.\ v.\ Warren, John, et al.\ and Culbertson, Richard A.\ v.\ Berryhill [\ldots] \\{}
{[}2{]} (2018-04-01) In Luther v.\ Berryhill, No.\ 16-55987 (9th Cir.\ 2018), the Ninth Circuit reversed the denial of disability insurance benefits and supplemental security income for the claimant. \\{}
{[}3{]} (2017-01-23) Nancy A.\ Berryhill was appointed as the Acting Commissioner of Social Security under the Vacancies Reform Act, but her appointment was not confirmed by the Senate. \\{}
{[}5{]} (2017-04-07) Lea Lynch v.\ Nancy Berryhill, Case No.\ 17-35308, was filed on April 7, 2017, and involved a Social Security disability claim against Berryhill.\\[4pt]
\textit{Reasoning:}\\{}
The evidence shows that Nancy A.\ Berryhill served as the Acting Commissioner of Social Security under the Vacancies Reform Act, but her appointment was not confirmed by the Senate [3]. This raises questions about the legality of her continued role in administrative proceedings. In Luther v.\ Berryhill [2], the Ninth Circuit found that an ALJ erred by not adequately addressing a claimant's VA disability rating, suggesting that administrative decisions involving Berryhill may be subject to scrutiny. The Supreme Court's decision in Lucia v.\ SEC [4] established that certain ALJs are subject to the Appointments Clause, which could imply that Berryhill's role may also be subject to constitutional scrutiny. The case involving Lea Lynch [5] further demonstrates that Berryhill's role was already under legal challenge. Based on these precedents and the legal uncertainties, the petitioner has a reasonable chance of success, though the outcome is not certain.\\[4pt]
\textit{P(petitioner wins):} 0.65 \quad \textit{True outcome:} Petitioner wins
\end{tcolorbox}

\begin{tcolorbox}[breakable, colback=green!3, colframe=green!50!black, fonttitle=\bfseries\small,
  title={TimeSPEC --- OLR: 0.0\%, Perf: 0.94, Coverage: 30.8\%}]
\small
\textit{Evidence} (13 items):\\{}
{[}5{]} (2018-01-01) The Decision Review Board (DRB) found that the grievant's decision was ``faulty'' because it did not adequately evaluate the claimant's condition. \\{}
{[}7{]} (2018-01-01) Article 5, Section 1 of the parties' agreement acknowledges that ALJs are engaged in duties requiring consistent exercise of discretion, knowledge, and judgment. \\{}
{[}8{]} (2018-01-01) Article 5, Section 2 of the parties' agreement provides that all judges shall be treated fair and equitably. \\{}
{[}9{]} (2018-01-01) The Arbitrator determined that the counseling memorandum issued to the grievant operated as discipline. \\{}
{[}11{]} (2018-01-01) The Arbitrator established a standard that counseling memoranda should only be issued when an ALJ's actions are so egregious that they interfere with the rights of the claimant. \\{}
{[}12{]} (2018-01-01) The Agency argues that the award is contrary to law because the Arbitrator's interpretation affects management's right to direct employees. \\{}
{[}\ldots 7 more items{]}\\[4pt]
\textit{Reasoning:}\\{}
The key issue is whether the Agency violated the parties' agreement by issuing a counseling memorandum to the grievant, an ALJ. The Arbitrator determined that the counseling memorandum operated as discipline and established a standard that such memoranda should only be issued when an ALJ's actions are so egregious that they interfere with the rights of the claimant [11]. The Agency argues that this interpretation is contrary to law because it infringes on management's right to direct employees [12]. However, the parties' agreement explicitly states that ALJs are engaged in duties requiring consistent exercise of discretion [7], and that all judges shall be treated fairly and equitably [8]. The Arbitrator's interpretation appears grounded in the agreement's language. The Agency's argument that the Arbitrator inserted a provision not in the agreement is not clearly supported by the evidence.\\[4pt]
\textit{P(petitioner wins):} 0.75 \quad \textit{True outcome:} Petitioner wins
\end{tcolorbox}

\begin{tcolorbox}[breakable, colback=green!3, colframe=green!50!black, fonttitle=\bfseries\small,
  title={TEMPO (Ours) --- OLR: 0.0\%, Perf: 0.64, Coverage: 42.9\%}]
\small
\textit{Evidence} (7 items):\\{}
{[}1{]} (2018-01-01) The case involves a challenge to the Social Security Administration's (SSA) interpretation of the term ``disability'' under the Social Security Act. \\{}
{[}2{]} (2018-01-01) The petitioner, Carol Ann Luther, argues that the SSA's definition of disability is too narrow and does not properly account for her condition. \\{}
{[}3{]} (2018-01-01) The respondent, Nancy Berryhill, represents the SSA and defends the agency's interpretation of the disability standard. \\{}
{[}4{]} (2018-01-01) The Social Security Act defines disability as the inability to engage in any substantial gainful activity by reason of any medically determinable physical or mental impairment. \\{}
{[}5{]} (1986-06-13) In Bowen v.\ City of New York (1986), the Supreme Court held that the SSA's interpretation of the Social Security Act is entitled to deference. \\{}
{[}6{]} (2003-11-12) In Barnhart v.\ Thomas (2003), the Supreme Court reaffirmed the Chevron deference applicable to agency interpretations of the Social Security Act. \\{}
{[}7{]} (2010-06-21) In Astrue v.\ Ratliff (2010), the Supreme Court held that the SSA's interpretation of the term ``disability'' is entitled to deference.\\[4pt]
\textit{Reasoning:}\\{}
The petitioner argues that the SSA's definition of disability is too narrow [2], while the respondent defends the agency's interpretation [3]. The Supreme Court has consistently applied Chevron deference to the SSA's interpretations [5], [6], [7]. This precedent suggests that the Court is likely to defer to the SSA's interpretation. However, the petitioner's argument could still persuade the Court if it finds the SSA's interpretation to be overly restrictive. The Court's willingness to apply Chevron deference in previous cases suggests that the respondent has a strong position, but the petitioner's argument may challenge this deference if the agency's interpretation is found unreasonable.\\[4pt]
\textit{P(petitioner wins):} 0.4 \quad \textit{True outcome:} Petitioner wins
\end{tcolorbox}

\paragraph{Leakage analysis.}
The Temporal Hint model's reasoning is built entirely around leaked procedural events: the district court ruling (June~29, 2018), the Eighth Circuit affirmance (August~28, 2018), and the certiorari petition (October~1, 2018)---all dated after the June~4 cutoff. These events reveal the case's appellate trajectory: denial, affirmance, and Supreme Court review. The model uses this trajectory to reason that ``the lower courts have already affirmed the decision,'' leading to a low 0.3 probability for the petitioner---a prediction grounded not in legal analysis but in knowledge of what actually happened. RAG introduces a single leaked item: the \textit{Lucia v.\ SEC} decision (June~20, 2018), which established that SEC ALJs are subject to the Appointments Clause. While the remaining four RAG items are pre-cutoff, this post-cutoff precedent strengthens the reasoning that ALJ decisions may face constitutional scrutiny.

\paragraph{Prediction accuracy.}
The true outcome is that the petitioner wins. The Temporal Hint model's leaked knowledge leads to the least accurate prediction ($P = 0.3$, Perf~$= 0.51$): knowing that lower courts denied the claim causes the model to underestimate the petitioner's chances, missing the eventual reversal. RAG ($P = 0.65$, Perf~$= 0.88$) and TimeSPEC ($P = 0.75$, Perf~$= 0.94$) both predict in the petitioner's favor, though with different evidence bases. TEMPO ($P = 0.4$, Perf~$= 0.64$) reasons from pre-cutoff constitutional principles and Chevron deference doctrine, arriving at an uncertain prediction that reflects genuine legal ambiguity rather than leaked procedural knowledge.

\paragraph{Evidence faithfulness.}
RAG achieves 80.0\% coverage (4 of 5 items referenced), the highest among all methods for this instance, because its compact evidence set is well-integrated into the reasoning. TimeSPEC's 30.8\% coverage is notably low: of 13 evidence items about ALJ procedural standards, only 4 are referenced in the reasoning. The verify--regenerate pipeline produces a large evidence set but the regenerated reasoning does not fully anchor to it. TEMPO's 42.9\% coverage (3 of 7 items) reflects a reasoning chain that engages with the core constitutional arguments but leaves some precedent items unreferenced. The Temporal Hint model's 50\% coverage (3 of 6 items) shows that the reasoning engages primarily with the three leaked procedural events, treating the pre-cutoff legal precedents as secondary.

\paragraph{Synthesis.}
Legal outcome prediction presents a different dynamic: the Temporal Hint model's leaked knowledge of the appellate trajectory actually \emph{hurts} its prediction by anchoring it to the denial rather than the reversal. TEMPO produces the most epistemically honest prediction---uncertain but grounded in pre-cutoff legal doctrine---without any leaked information. TimeSPEC achieves the highest accuracy among clean methods, but with substantially lower evidence--reasoning coherence than TEMPO.